\PassOptionsToPackage{table}{xcolor}
\documentclass{article} 
\usepackage{arxiv}

\usepackage{amsmath,amsfonts,bm}

\def\eqref#1{equation~\ref{#1}}

\def\1{\bm{1}}

\DeclareMathAlphabet{\mathsfit}{\encodingdefault}{\sfdefault}{m}{sl}
\SetMathAlphabet{\mathsfit}{bold}{\encodingdefault}{\sfdefault}{bx}{n}

\DeclareMathOperator*{\argmin}{arg\,min}

\usepackage[round]{natbib}
\usepackage{hyperref}
\usepackage{url}
\usepackage{xcolor}
\usepackage{booktabs}
\usepackage{longtable}
\usepackage{microtype}
\usepackage{graphicx}
\usepackage{algorithm}
\usepackage{algpseudocode}
\usepackage{amsthm}
\usepackage{capt-of}
\usepackage{float}
\usepackage{placeins}
\usepackage{wrapfig}

\hypersetup{
  hidelinks,
  breaklinks=true,
  hypertexnames=false,
  pdftitle={ASAP: Amortized Doubly-Stochastic Attention via Sliced Dual Projection},
  pdfauthor={Huy Tran, Max Milkert, David Hyde}
}
\makeatletter
\g@addto@macro\UrlBreaks{\do\/\do-\do\_\do\.\do\?\do\&\do\=\do\#}
\makeatother

\newsavebox{\cifarTablePanel}

\newtheorem{proposition}{Proposition}[section]

\title{ASAP: Amortized Doubly-Stochastic Attention via Sliced Dual Projection}

\author{
Huy Tran\thanks{Equal contribution.}\\
Department of Computer Science\\
Vanderbilt University
\And
Max Milkert\footnotemark[1]\\
Department of Computer Science\\
Vanderbilt University
\And
David Hyde\\
Department of Computer Science\\
Vanderbilt University
}

\renewcommand{\headeright}{Preprint}
\renewcommand{\undertitle}{Preprint}

\begin{document}

\maketitle

\begin{abstract}
Doubly-stochastic attention has emerged as a transport-based alternative to row-softmax attention, with recent Transformer variants using it to reduce attention sinks and rank collapse while improving performance. In this family, the standard approach is Sinkhorn scaling, which trains more efficiently but still repeats matrix scaling in every inference forward pass. Sliced-transport attention removes the online iteration, but its soft sorting approximation materializes dense tensors for each slice, requiring substantially more training resources than Sinkhorn attention. We introduce \emph{ASAP} (\emph{\textbf{A}mortized Doubly-\textbf{S}tochastic \textbf{A}ttention via Sliced Dual \textbf{P}rojection}), a train-then-compile method that trains the doubly-stochastic layer with Sinkhorn, then replaces the iterative scaling loop at inference with a fixed sliced-dual operator. It learns a lightweight parametric map from exact one-dimensional Kantorovich potentials to the Sinkhorn query-side dual, then reconstructs the attention plan with a two-sided entropic $c$-transform. Across language and vision benchmarks, ASAP keeps the cheaper training setup and remains highly competitive with recent baselines. In the main frozen-layer benchmark, ASAP is $5.3\times$ faster than the trained Sinkhorn teacher while matching its accuracy; in downstream replacements, ASAP recovers most of the teacher performance without any retraining.
\end{abstract}

\section{Introduction}

\begin{wrapfigure}[13]{r}{0.36\textwidth}
\vspace{-16pt}
\centering
\includegraphics[width=0.35\textwidth]{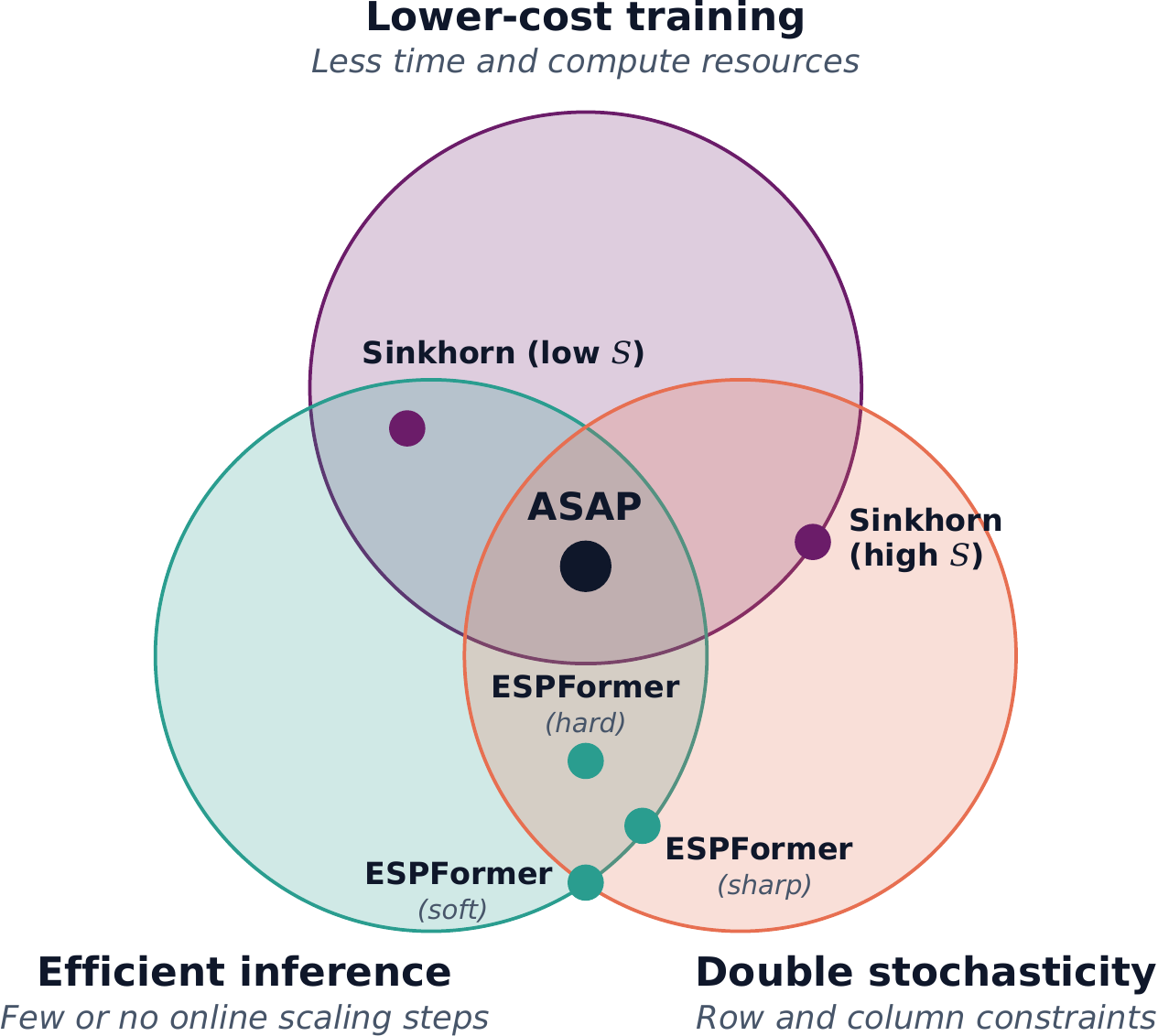}
\vspace{2pt}
\end{wrapfigure}

The Transformer is a standard architecture in modern language and vision systems \citep{vaswani2017attention,dosovitskiy2021image}, and its success has made attention normalization a central design choice. Standard row-softmax attention normalizes each query independently, a one-sided constraint that recent work has associated with attention sinks in language models and rank collapse in deep attention stacks \citep{xiao2024streamingllm,dong2021attention,lapenna2026rankdecay}. Doubly-stochastic attention adds the matching column constraint, providing a transport structure that has been shown to reduce these pathologies and improve performance across recent Transformer variants \citep{sander2022sinkformers,born2025quantum}. Trained row-softmax attention can also move toward this structure during optimization, suggesting that column balance is a useful inductive bias \citep{sander2022sinkformers}.

The seminal Sinkformer work turns this bias into a trainable layer by replacing row-softmax normalization with a finite Sinkhorn scaling loop to obtain an entropic transport plan between query and key tokens \citep{sander2022sinkformers,cuturi2013sinkhorn}. 
The Sinkhorn layer keeps the surrounding Transformer block unchanged, supports end-to-end training through Sinkhorn normalization, and has been shown to improve downstream performance across several benchmarks. However, a nonparametric Sinkhorn normalizer still repeats a finite scaling loop at inference, requiring $O(SN^2)$ scaling work for $N$ tokens and $S$ normalization steps on every call. Reducing $S$ after training can lower latency, but it stops farther from the doubly-stochastic limit and changes the finite normalizer for the fixed projections. The practical question is how to avoid online scaling in a trained Sinkhorn layer while preserving its attention operator.

Other recent work has pursued non-iterative doubly-stochastic attention by changing the transport construction itself. ESPFormer introduces an elegant approach based on Expected Sliced Transport Plans, using sorted one-dimensional projections of queries and keys to form slice-wise matchings that are lifted and averaged across slices \citep{liu2025expected,shahbazi2025espformer}. Its reported results show that sliced-transport attention can remain competitive across tasks, sometimes with improved performance, and its hard-sort inference operator is efficient, non-iterative, and exactly doubly stochastic. That benefit, however, comes after training with a soft sorting approximation, while inference switches to the hard doubly-stochastic matrix. Under the SoftSort operator used in ESPFormer, the training relaxation is row-stochastic at finite temperature and approaches the hard doubly-stochastic matrix only as the temperature is annealed. The soft-sort training operator materializes dense relaxation tensors for each slice, and in our matched IMDb resource profile against the $S=10$ Sinkhorn teacher budget it uses $5.7\times$ more peak training memory and runs $14.8\times$ slower per training example than Sinkhorn attention. The larger training footprint significantly raises the cost of model selection, ablations, and repeated benchmark runs. This interesting line of work therefore leaves open a complementary question of whether non-iterative inference can be reached while keeping the cheaper training setup like Sinkhorn attention.

The two open questions above point to a doubly-stochastic attention trilemma. Current designs can have reasonable training cost, reduce online scaling, or stay close to a high-budget doubly-stochastic operator, but not all three at once. We introduce ASAP as a post-training answer to this trilemma. We instantiate this train-then-compile view with Sinkhorn teachers because they are the less expensive training setup in the current literature, and make the finite dual variables available for compilation. Given a trained, high-performing Sinkhorn layer, we freeze it and run a small unlabeled calibration set through the layer. From each forward pass, we extract the query-side normalization vector that the Sinkhorn loop would otherwise recompute; in optimal-transport terms, this is the source dual. On the same query and key activations, we compute exact one-dimensional transport potentials from sorted projections and use them as sliced geometric features. We then learn a lightweight parametric map from these features to the extracted Sinkhorn vector. At inference, the compiled layer predicts this one vector and applies a two-sided entropic $c$-transform under the teacher's finite Sinkhorn convention. This keeps the cheaper Sinkhorn training setup, replaces most of the online scaling loop, and preserves the matched marginal by construction.

In summary, our specific contributions are as follows.
\vspace{-3pt}
\begin{itemize}
\item We introduce ASAP, a train-then-compile method for doubly-stochastic attention that keeps the Sinkhorn training setup and replaces the iterative online scaling loop after training while staying close to the trained Sinkhorn operator.
\item We instantiate sliced-dual projection for frozen Sinkhorn attention, regressing exact one-dimensional Kantorovich potentials onto the teacher's \textit{finite-budget} source dual and recovering the key dual analytically through the entropic $c$-transform.
\item One frozen teacher supports multiple ASAP variants without retraining. Least-squares or convex-KL can be used to calibrate the sliced-dual map, and a one-sided or two-sided c-transform can be selected at inference.
\item We demonstrate ASAP across multiple language and vision benchmarks, showing significant layer speedups over the trained Sinkhorn teacher with matched-resource comparisons against iterative Sinkhorn and sliced-transport baselines.
\end{itemize}

\section{Related Work}

\paragraph{Transport-based attention.}
Transport-based attention replaces row-softmax normalization with operators that impose additional structure on the attention plan. Sparse Sinkhorn Attention uses Sinkhorn balancing to learn block-wise relaxed permutations for sparse attention \citep{tay2020sparse}, while Sinkformers apply Sinkhorn scaling directly to the query--key score matrix. Other variants explore non-iterative, low-rank, and alternative parameterizations of doubly-stochastic attention, including LOTFormer, which uses a low-rank pivot coupling for linear-time inference \citep{shahbazi2025lotformer}. ASAP stays closest to the Sinkhorn line, keeping the dense trained Sinkhorn operator fixed and replacing only repeated normalization after training. Recent theory studies rank decay in Sinkhorn-style doubly-stochastic attention, while FlashSinkhorn accelerates entropic OT solvers at the kernel level without changing the transport operator \citep{ye2026flashsinkhorn}.

\paragraph{Sliced optimal transport.}
Sliced OT uses the closed-form structure of one-dimensional OT as a computational primitive for high-dimensional measures \citep{nguyen2025intro}. Recent work improves the choice of slices and extends the construction beyond Euclidean measures, including convolutional slices for images, adaptive slicing distributions, quasi-Monte Carlo slice sets, spherical and stereographic spherical sliced distances, Cartan-Hadamard geometry, and unbalanced transport \citep{nguyen2022revisiting,nguyen2023energy,nguyen2024qmc,bonet2023spherical,tran2024stereographic,bonet2025cartan,sejourne2023unbalanced}. Another line lifts one-dimensional slice solutions back into transport objects, including min-SWGG, Sliced OT Plans, generalized sliced Wasserstein plans, and min-STP \citep{mahey2023minswgg,tanguy2025sotp,chapel2025dgswp,liu2025minstp}. Sliced-regularized OT instead uses a smoothed sliced OT plan as the reference coupling in a KL-regularized OT problem, with a Sinkhorn-style solver around an informative sliced prior \citep{nguyen2026srot}. ASAP uses one-dimensional sliced potentials as token-level coordinates for calibrating a frozen Sinkhorn attention layer.

\paragraph{Amortized transport.}
Amortized OT methods learn reusable structure across transport problems. Meta OT predicts OT potentials or maps for repeated problems \citep{amos2023metaot}, and Sinkhorn-initialization methods use data-dependent dual initializers, including closed-form one-dimensional OT duals, to reduce solver time \citep{thornton2023rethinking}. Wasserstein Wormhole learns embeddings whose Euclidean distances approximate Wasserstein distances \citep{haviv2024wassersteinwormhole}, and \citet{nguyen2026fastestimation} regress Wasserstein distances on sliced-Wasserstein features. \citet{truong2026aot} are closest in feature idea, using sliced Kantorovich potentials to amortize repeated OT solves. ASAP follows this sliced-potential construction inside a trained Transformer layer, where the object being compiled is the finite Sinkhorn normalizer with its learned score scale, mask, entropy, and update convention.

\paragraph{Other attention alternatives.}
A parallel literature improves attention without transport constraints. DiffTransformer subtracts two softmax operators to suppress attention noise \citep{ye2025differential}. Sparse, low-rank, kernel, and Nystr\"om approximations address the quadratic cost of vanilla attention \citep{kitaev2020reformer,wang2020linformer,choromanski2021performer,xiong2021nystromformer}, while FlashAttention provides exact IO-aware softmax kernels \citep{dao2022flashattention}.

\section{Background}

\subsection{Doubly-Stochastic Attention via Sinkhorn Scaling}
\label{sec:background_transport_attention}

\begin{wrapfigure}[27]{r}{0.26\textwidth}
\vspace{-10pt}
\centering
\makebox[\linewidth][r]{\includegraphics[width=0.275\textwidth]{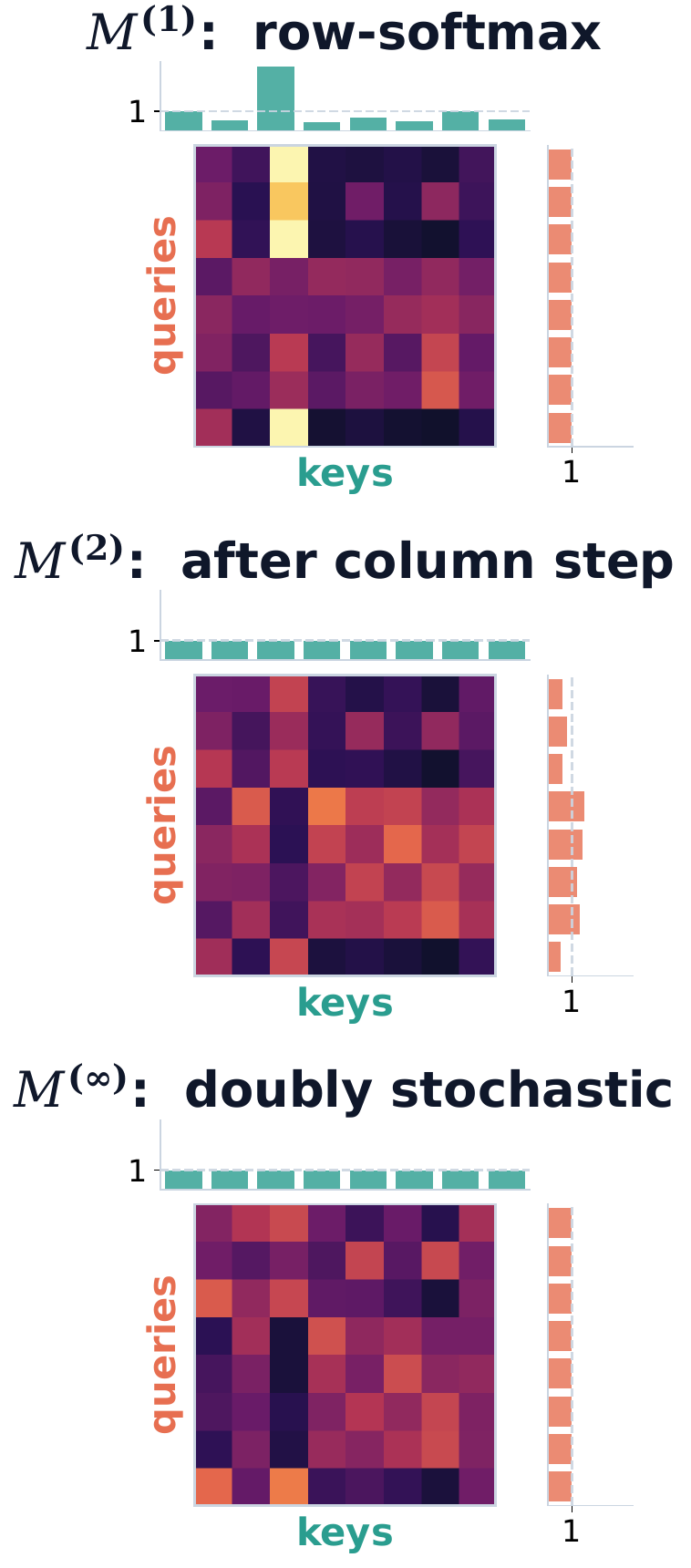}}
\vspace{-38pt}
\end{wrapfigure}

\textbf{From row-softmax to doubly-stochastic attention.}
For one attention head, let $Q,K\in\mathbb{R}^{N\times d_h}$ and $V\in\mathbb{R}^{N\times d_v}$ have rows $q_i,k_j,v_j$. Standard self-attention first computes scaled dot-product scores
\begin{equation}
\label{eq:scaled_dot_product_score}
s_{ij}=\frac{q_i^\top k_j}{\sqrt{d_h}} .
\end{equation}
It then applies a row-wise softmax, so each query distributes one unit of attention across the keys. This row-wise normalization does not constrain the column sums, so a few keys can receive much more total attention than others. After rescaling by $1/N$, the attention matrix can also be read as a transport plan, where $P_{ij}$ is the amount of mass sent from query $i$ to key $j$. Doubly-stochastic attention adds the matching column constraint, so each key receives the same total mass. Sinkhorn attention reaches this structure by successively normalizing the rows and columns of the positive attention kernel $M^{(0)}_{ij}=\exp(s_{ij}/\varepsilon)$. The experiments use $\varepsilon=1$ unless stated otherwise. For a positive matrix $M$, we can write
\begin{equation}
\label{eq:row_column_normalization}
\begin{aligned}
N_R(M)_{ij}
&=
\frac{M_{ij}}{\sum_{r=1}^N M_{ir}},
\qquad
N_C(M)_{ij}
&=
\frac{M_{ij}}{\sum_{r=1}^N M_{rj}} .
\end{aligned}
\end{equation}
for the row and column normalizations. The Sinkhorn iterations are
\begin{equation}
\label{eq:sinkhorn_iterations}
M^{(\ell+1)}
=
\begin{cases}
N_R(M^{(\ell)}), & \ell \text{ even},\\
N_C(M^{(\ell)}), & \ell \text{ odd}.
\end{cases}
\end{equation}
The first row normalization is exactly the row-softmax attention matrix. A column normalization then rebalances the keys, and further iterations alternate between the two. As the iterations continue, the matrix approaches a doubly-stochastic attention matrix whose rows and columns both sum to one. In practice, Sinkformers use a finite number $S$ of Sinkhorn iterations and return the partially balanced attention matrix $A^{(S)}=M^{(S)}$.


\textbf{Transport interpretation and dual potentials.}
The row and column normalizations above are also the standard matrix-scaling computation for entropic OT. To write this in OT notation, let $P=A/N$ be the transport-scale matrix with total mass one. Its row marginal is the mass sent by query tokens, its column marginal is the mass received by key tokens, and doubly-stochastic attention corresponds to $\mu=\nu=\mathbf{1}_N/N$. The feasible balanced plans are
\begin{equation}
\label{eq:transport_polytope}
\mathcal{U}(\mu,\nu)
=
\left\{
P\in\mathbb{R}_+^{N\times N}
\;\mid\;
P\mathbf{1}_N=\mu,\;
P^\top\mathbf{1}_N=\nu
\right\}.
\end{equation}

We can choose a cost that uses the same query--key geometry as scaled dot-product attention, that is,
\begin{equation}
\label{eq:quadratic_attention_cost}
C_{ij}
=
\frac{1}{2\sqrt{d_h}}\|q_i-k_j\|_2^2 .
\end{equation}
The square equals $-s_{ij}$ plus query-only and key-only terms, so the cost kernel is the dot-product score kernel up to one query-only factor and one key-only factor. At convergence, Sinkhorn scaling absorbs such row-only and column-only factors into its scaling vectors, so the balanced plan is the same. We can therefore implement Sinkhorn attention on the trained dot-product scores while interpreting the limiting operator as entropic transport under the quadratic query--key cost. For a finite Sinkhorn teacher, the output also depends on score scale, masking, initialization, update order, and stopping time, so ASAP extracts and reconstructs duals under the same finite-budget convention as the trained layer.

The corresponding entropic OT problem is
\begin{equation}
\label{eq:entropic_ot}
P^\star
=
\argmin_{P\in\mathcal{U}(\mu,\nu)}
\langle C,P\rangle
+
\varepsilon
\sum_{i,j}P_{ij}\bigl(\log P_{ij}-1\bigr).
\end{equation}
The first term encourages low-cost query--key transport, while the entropy term keeps the plan positive. The marginal constraints are enforced by one scalar per query and one scalar per key. We write these scalars as $f^\star$ and $g^\star$, and the optimal plan solution is
\begin{equation}
\label{eq:gibbs_plan}
P^\star_{ij}
=
\exp\!\left(\frac{-C_{ij}+f^\star_i+g^\star_j}{\varepsilon}\right).
\end{equation}
OT calls $f^\star$ and $g^\star$ Kantorovich dual potentials. Here $f^\star$ and $g^\star$ are potentials in cost units. An implementation that stores log scalings $u$ and $v$ has $f^\star=\varepsilon u$ and $g^\star=\varepsilon v$, since with $K_{ij}=\exp(-C_{ij}/\varepsilon)$ we have $P^\star=\operatorname{diag}(e^{f^\star/\varepsilon})K\operatorname{diag}(e^{g^\star/\varepsilon})$. Multiplying by $N$ returns the attention-scale matrix $A^\star=NP^\star$, whose rows sum to one. Sinkhorn provides a useful compression of the balanced plan. Instead of learning all pairwise attention weights, ASAP learns the teacher's query-side dual and uses the $c$-transform below to fill in the key side.

\textbf{Key dual from one source dual.}
Now fix the source dual $f$. For each key $j$, the only unknown left is the scalar key offset that makes column $j$ have mass $\nu_j$. Solving that one-column normalization equation yields the key-side entropic $c$-transform. In other words,
\begin{equation}
\label{eq:key_c_transform}
T_K^\nu(f)_j
=
\varepsilon\log\nu_j
-
\varepsilon\log
\sum_{i=1}^N
\exp\!\left(\frac{-C_{ij}+f_i}{\varepsilon}\right).
\end{equation}
If we set $g=T_K^\nu(f)$ in the exponential plan form, the column sum is exact. That is,
\begin{equation}
\label{eq:key_marginal_exact}
\sum_i
\exp\!\left(
\frac{-C_{ij}+f_i+T_K^\nu(f)_j}{\varepsilon}
\right)
=
\nu_j .
\end{equation}
Thus any predicted source dual determines a transport plan with the correct key marginal. In the column-ending convention used by the main teachers, $f^{(T)}$ denotes the source-side finite Sinkhorn potential immediately before the teacher's final key-side closure, and $g^{(T)}=T_K^\nu(f^{(T)})$. These are finite potentials from the trained layer, not infinite-limit OT solutions. For a trained finite-budget Sinkhorn layer, ASAP-LS learns to predict $f^{(T)}(Q,K)$. Applying the final key-side transform to the teacher source dual defines the one-sided reference plan used by ASAP-0 and the KL fit,
\begin{equation}
\label{eq:teacher_reference_plan}
P^{(T)}_{ij}(Q,K)
=
\exp\!\left(
\frac{-C_{ij}(Q,K)+f^{(T)}_i(Q,K)+g^{(T)}_j(Q,K)}{\varepsilon}
\right),
\qquad
A^{(T)}(Q,K)=NP^{(T)}(Q,K).
\end{equation}
The source dual is still defined only up to an additive constant. If we add a constant to the source dual, the key dual absorbs the opposite shift and the exponential plan stays unchanged, so we center $f^{(T)}$ before fitting. With perfect prediction, ASAP-0 recovers the finite teacher plan under the same final-side closure. Because the $c$-transform enforces the key marginal after every estimate, the source-dual estimate carries the remaining error, and the row profile and teacher-plan agreement depend on how closely $\hat f$ follows the teacher dual.

\subsection{One-Dimensional Sliced Potentials}
\label{sec:background_sliced_features}

\textbf{Why slicing.}
We need query-token features for the Sinkhorn source dual that can be computed without Sinkhorn iterations. A slice supplies such a feature by projecting queries and keys onto one line, where quadratic OT has a closed-form sorted matching. The sorted matching assigns one source-potential value to each query. Recent work has also used sliced OT quantities as reusable features for Wasserstein geometry, distance prediction, and amortized potentials \citep{haviv2024wassersteinwormhole,nguyen2026fastestimation,truong2026aot}. ASAP uses a fixed bank of these one-dimensional potentials as transport coordinates for the trained layer's finite Sinkhorn source dual.

\textbf{Sliced projections.}
For $\theta_\ell\in\mathbb{S}^{d_h-1}$, we use projections
\begin{equation}
\label{eq:sliced_projections}
a_i^{(\ell)}
=
\frac{\theta_\ell^\top q_i}{d_h^{1/4}},
\qquad
b_j^{(\ell)}
=
\frac{\theta_\ell^\top k_j}{d_h^{1/4}} .
\end{equation}
This scaling aligns the one-dimensional cost $\frac{1}{2}(a_i^{(\ell)}-b_j^{(\ell)})^2$ with the projection of the head-level cost $C$ along $\theta_\ell$, so the sliced potentials are in the same units as the entropic dual.

\textbf{One-dimensional source potential.}
Let $a_{(1)}^{(\ell)}\leq\cdots\leq a_{(N)}^{(\ell)}$ and $b_{(1)}^{(\ell)}\leq\cdots\leq b_{(N)}^{(\ell)}$ be the sorted projections. Uniform 1D OT with cost $\frac{1}{2}(a-b)^2$ matches equal ranks, so the $r$-th sorted query is paired with the $r$-th sorted key. The corresponding source potential is the discrete one-dimensional Kantorovich potential for this sorted matching. We can write it as
\begin{equation}
\label{eq:sliced_source_potential}
\begin{aligned}
\phi^{(\ell)}_1=0,
\qquad
\phi^{(\ell)}_r
=
\sum_{t=1}^{r-1}
b_{(t)}^{(\ell)}
\bigl(a_{(t+1)}^{(\ell)}-a_{(t)}^{(\ell)}\bigr),
\qquad
f^{(\ell)}_{(r)}
=
\frac{1}{2}\bigl(a_{(r)}^{(\ell)}\bigr)^2-\phi^{(\ell)}_r,
\qquad
r=1,\ldots,N .
\end{aligned}
\end{equation}
We then return these values to the original query order and center them to obtain $\tilde f^{(\ell)}(Q,K)\in\mathbb{R}^N$.

\textbf{Sliced feature matrix.}
For fixed directions $\Theta=\{\theta_\ell\}_{\ell=1}^L$, stack the centered slice potentials as $X_\Theta(Q,K)=[\tilde f^{(1)}(Q,K)\;\cdots\;\tilde f^{(L)}(Q,K)]\in\mathbb{R}^{N\times L}$. All $L$ slices cost $O(LNd_h+LN\log N)$, with the work split between projections and per-slice sorting. The rows of $X_\Theta$ are features, not attention rows, so ASAP does not average sliced plans into attention. It uses sliced source potentials as token-level coordinates and constructs the dense plan only after the source-dual estimate passes through the entropic $c$-transform.

\begin{figure}[t]
  \centering
  \includegraphics[width=0.96\textwidth]{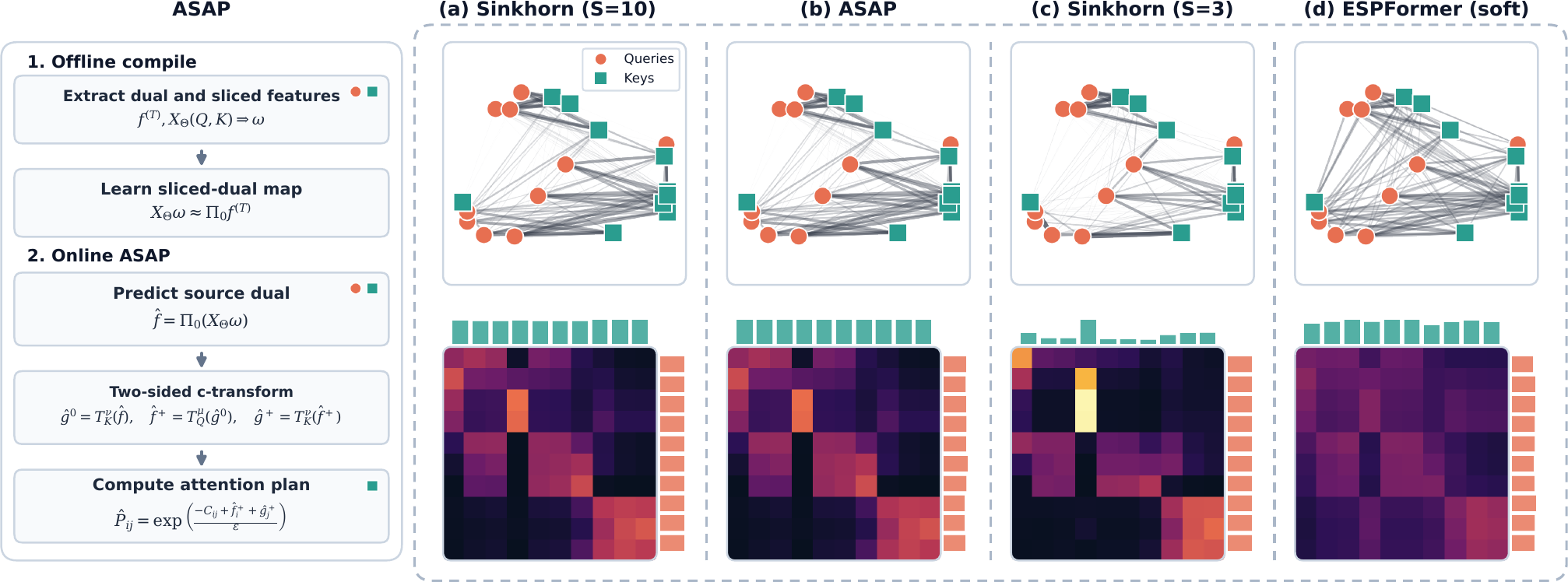}
  \caption{ASAP follows an offline-to-online compile process for a trained Sinkhorn attention layer. The right panels compare teacher fidelity and marginal balance on one query--key example. Here Sinkhorn ($S=10$) is the teacher. Flatter row and column bars indicate a more doubly-stochastic plan. ASAP stays close to the teacher, while Sinkhorn (low $S$) and ESPFormer (soft) are further from being doubly stochastic.}
  \label{fig:asap_method_overview}
\end{figure}

\section{Method}

\subsection{The ASAP Layer}
\label{sec:method_operator_parameterization}

\textbf{Offline compilation.}
After the Sinkhorn teacher is trained, ASAP freezes the attention layer and learns the source-dual computation that the Sinkhorn loop would otherwise repeat. Offline, we pass unlabeled calibration examples through the original finite Sinkhorn iterations and extract the closure-ready source dual produced under the teacher's update convention. On the same query and key activations, we compute the sliced potential matrix from Section~\ref{sec:background_sliced_features}. The offline step learns a coefficient vector that maps these sliced potentials to the extracted source dual.

At inference, the compiled layer follows the same steps without rerunning the Sinkhorn loop. It computes sliced potentials, predicts the source dual, applies the entropic $c$-transform, and constructs the compiled attention plan under the teacher's finite Sinkhorn convention. ASAP compiles one frozen Sinkhorn layer on the activation distribution it sees after training.

\textbf{Compiled layer.}
Formally, for one attention head, let $Q,K \in \mathbb{R}^{N\times d_h}$ be the query and key matrices of the frozen layer, and let $\Pi_0(z)=z-\frac{1}{N}(\mathbf{1}_N^\top z)\mathbf{1}_N$ choose the zero-mean representative. The compiled layer stores the slice bank $\Theta$ and coefficient vector $\omega$. For a new $Q,K$, it predicts a centered source dual from sliced potentials. With closure-ready teacher source dual $f^{(T)}(Q,K)$ and sliced feature matrix $X_\Theta(Q,K)$, ASAP uses
\begin{equation}
\label{eq:asap_source_prediction}
\begin{aligned}
\tilde f^{(T)}(Q,K)
&=\Pi_0\!\bigl(f^{(T)}(Q,K)\bigr),\\
\hat f_\omega(Q,K)
&=\Pi_0\!\bigl(X_\Theta(Q,K)\omega\bigr),
\qquad \omega\in\mathbb{R}^L .
\end{aligned}
\end{equation}

\textbf{Two-sided entropic $c$-transform.}
Once we choose the source and key duals, the attention plan is fixed and can be written as
\begin{equation}
\label{eq:plan_from_duals}
P_C(f,g)_{ij}
=
\exp\!\left(\frac{-C_{ij}+f_i+g_j}{\varepsilon}\right).
\end{equation}
Equation~\ref{eq:key_c_transform} already tells us how to take any source dual and choose the key dual so that the columns have mass $\nu$. If the source dual is only an estimate, this column step is still exact, but the rows carry the remaining source-dual error. ASAP therefore applies the same normalization idea on the source side. Fix a key dual $g$. For each query $i$, the only unknown left is the scalar source offset that makes row $i$ have mass $\mu_i$. Solving that one-row normalization equation yields
\begin{equation}
\label{eq:source_c_transform}
T_Q^\mu(g)_i
=
\varepsilon\log\mu_i
-
\varepsilon\log
\sum_{j=1}^N
\exp\!\left(\frac{-C_{ij}+g_j}{\varepsilon}\right),
\end{equation}
the source-side mirror of the key-side transform.

Default ASAP starts from the predicted source dual $\hat f_\omega$. First it closes the key side and sets $\hat g^0_\omega=T_K^\nu(\hat f_\omega)$. Then it closes the source side and sets $\hat f^+_\omega=T_Q^\mu(\hat g^0_\omega)$. This pair has the correct row marginal, but the key dual was computed against the old source dual $\hat f_\omega$, so ASAP returns to the key side once more. In the column-ending case, the full sequence is
\begin{equation}
\label{eq:two_sided_c_transform}
\hat g^0_\omega
\!=\!
T_K^\nu(\hat f_\omega),
\qquad
\hat f^+_\omega
\!=\!
T_Q^\mu(\hat g^0_\omega),
\qquad
\hat g^+_\omega
 \!=\!
T_K^\nu(\hat f^+_\omega).
\end{equation}
We call this the two-sided entropic $c$-transform and write $T_{\mu,\nu}^{\leftrightarrow}(\hat f_\omega;C)=(\hat f^+_\omega,\hat g^+_\omega)$. The compiled attention plan is then
\begin{equation}
\label{eq:asap_reconstruction}
\hat P_\omega
=
P_C(\hat f^+_\omega,\hat g^+_\omega),
\qquad
\hat A_\omega=N\hat P_\omega .
\end{equation}
With exact prediction of the closure-ready teacher source dual, ASAP-0 recovers the finite teacher under the same final-side convention. The two-sided ASAP operator applies further finite closures from that same starting potential and need not equal the finite teacher away from a fixed point. If a teacher ends with row normalization, ASAP uses the analogous row-ending form. Appendix~\ref{app:details} contains the general algorithm.

We also report ASAP-0 as a one-sided speed ablation. It stops after the first key-side transform,
\begin{equation}
\label{eq:asap_zero_operator}
\hat g^0_\omega
=
T_K^\nu(\hat f_\omega),
\qquad
\hat P^0_\omega
=
P_C(\hat f_\omega,\hat g^0_\omega).
\end{equation}
ASAP-0 reports the learned source-dual projection without the extra log-sum-exp passes used by default ASAP. In all cases, agreement with the finite teacher depends on the predicted source dual. The head output is $\hat y_i=\sum_j \hat A_{\omega,ij}v_j$, following the convention $A=NP$ used by the Sinkhorn teacher.

\subsection{Sliced-Dual Projection}
\label{sec:method_training_complexity}

Once the online layer is fixed, calibration only has to choose the coefficient vector $\omega$. We call this calibration step \textbf{sliced-dual projection}, since it projects the teacher's closure-ready finite source dual onto the span of sliced-potential features for that frozen attention layer.

\noindent\textbf{ASAP-LS.}
The closed-form calibration fits the teacher source dual in centered coordinates. Each calibration sequence contributes $N$ token-level regression rows, with row $i$ of $X_\Theta$ holding the $L$ sliced-potential values for query token $i$ and target $\tilde f^{(T)}_i$. For calibration examples $(Q^{(m)}, K^{(m)}) \in \mathcal{D}$, it solves
\begin{equation}
\label{eq:asap_ls}
\omega^{\mathrm{LS}}
\in
\arg\min_{\omega \in \mathbb{R}^L}
\sum_{m=1}^M
\left\|
X_\Theta(Q^{(m)}, K^{(m)})\omega
- \tilde f^{(T)}(Q^{(m)}, K^{(m)})
\right\|_2^2
+
\lambda \|\omega\|_2^2,
\end{equation}
where $\lambda > 0$ is a ridge parameter. If $\bar X$ denotes the stacked feature rows and $\bar y$ the stacked target entries, the closed-form estimator is
\begin{equation}
\label{eq:asap_ls_closed_form}
\omega^{\mathrm{LS}}
=
(\bar X^\top \bar X + \lambda I_L)^{-1}\bar X^\top \bar y.
\end{equation}
For a deep backbone, we calibrate each attention layer on its own. ASAP-LS is the default in the main experiments because it is closed form and the choice of $\omega$ does not change the online layer. The normalized risk statements in Appendix~\ref{app:proofs} use $\lambda_{\mathrm{risk}}$. Under that normalization, the implementation ridge in Equation~\ref{eq:asap_ls} is $\lambda=MN\lambda_{\mathrm{risk}}$.

\noindent\textbf{ASAP-KL.}
ASAP-KL keeps the same source-dual parameterization but chooses $\omega$ through a convex one-sided teacher-to-model plan KL objective. The fit passes through the key-normalized plan formed after the first key-side $c$-transform, so it is a calibration variant of the same online layer. ASAP-KL changes only the offline objective, and the fitted coefficients are evaluated with the same ASAP or ASAP-0 transform as the corresponding LS row. Appendix~\ref{app:proofs} contains the objective and convexity proof.

Relative to Sinkhorn attention, ASAP replaces only the iteration factor in online normalization. ASAP still uses the same dense score and value operations, so the savings come from replacing the $S$ Sinkhorn iterations with sliced-potential extraction, a length-$L$ source-dual prediction, and the entropic $c$-transform calls in Equation~\ref{eq:two_sided_c_transform}. Teacher extraction and fitting are offline costs for a fixed trained layer. Appendix~\ref{app:cost_breakdown} reports the cost details and break-even points.

\section{Experiments}
\label{sec:experiments}

The experiments follow the train-then-compile question from Sections~\ref{sec:background_transport_attention}--\ref{sec:method_training_complexity}. We first freeze a trained Sinkhorn layer and ask whether ASAP preserves its finite operator under the same query, key, value, and output projections. We then replace trained Sinkhorn teachers in downstream vision and text models, and compare with ESPFormer while treating its hard-sort inference operator and soft-sort training operator as distinct operators.

We report trained attention families and post-training replacements in their own rows. Transformer, DiffTransformer, Sinkformer, and ESPFormer rows are trained models. A Sinkhorn teacher is the high-budget trained model or frozen layer whose finite operator is compiled by ASAP. A Sinkhorn normalizer reuses a teacher's frozen projections but reruns a specified number of Sinkhorn updates. ASAP rows are post-training replacements fit from unlabeled calibration inputs, with labels used only to train and evaluate the original models. Unless stated otherwise, the teachers in the main tables end with a column normalization, so ASAP uses the column-ending two-sided transform in Equation~\ref{eq:two_sided_c_transform}. ASAP-0 denotes the one-sided ablation from Equation~\ref{eq:asap_zero_operator}, and ASAP-KL tests the same online layer with the KL fit.

ESPFormer uses Expected Sliced Transport Plans as its attention construction. The resource gap comes from where the dense tensors appear during training. Sinkhorn repeatedly normalizes one dense $B\times H\times N\times N$ kernel, while ESPFormer's soft-sort training builds query and key sorting relaxations with an extra slice axis, $B\times H\times L\times N\times N$, before averaging slice plans. The left panel of Table~\ref{tab:operator_main} reports this full-encoder IMDb resource profile at sequence length $512$ and effective batch size $32$, with Sinkhorn budgets $S\in\{3,10,20\}$. Against the $S=10$ teacher budget used in the downstream text experiments, this soft-sort training operator uses $5.7\times$ more peak training memory and runs $14.8\times$ slower per training example. The right panel then asks whether ASAP can keep the trained Sinkhorn teacher operator while reducing online scaling.

\subsection{Frozen-Layer Replacement}
\label{sec:experiments_operator}

We first test the compiler without end-to-end training. On real IMDb activations, all rows use the same query, key, value, and output projections from an $S=20$ Sinkhorn teacher. The Sinkhorn normalizer rows lower the teacher budget after training, while ASAP rows substitute compiled operators fit from that teacher.

Table~\ref{tab:operator_main} compares two ways to make a trained Sinkhorn layer cheaper. Reducing the Sinkhorn budget is the obvious baseline, but it changes the finite normalizer. Even the $S=3$ replacement is slower than ASAP-LS-0 and is much farther from the $S=20$ teacher in both output and attention error. ASAP instead keeps the trained layer fixed and learns the source-dual computation that the loop would otherwise repeat. The one-sided ASAP-LS-0 row shows the raw speed of this amortization. The default two-sided ASAP-LS row is slightly slower than ASAP-LS-0, but it is much closer to the teacher. Output RMSE falls to $0.003$, attention relative $\ell_2$ falls to $0.035$, and the frozen classifier matches the teacher accuracy. The useful comparison is therefore not only latency, but whether the cheaper layer still behaves like the trained Sinkhorn operator.

The frozen-layer results compare latency with fidelity. The $S=3$ and $S=5$ normalizers are faster than the $S=20$ teacher because they stop the same scaling loop earlier, but they execute another finite operator. ASAP-0 shows the raw learned-dual projection speed by computing only the key-side transform. The default ASAP operator applies the two-sided transform, spends a small fixed number of log-sum-exp passes, and stays much closer to the teacher without rerunning the full scaling loop.

\begin{table}[!htbp]
  \centering
  \begin{minipage}[t]{0.35\textwidth}
  \vspace{0pt}
  \centering
  \raisebox{2.7ex}{\includegraphics[width=\linewidth,trim=0 0 0 0,clip]{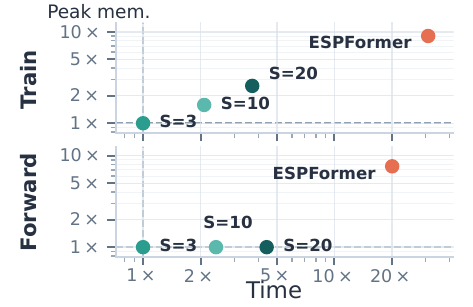}}
  \end{minipage}
  \hfill
  \begin{minipage}[t]{0.62\textwidth}
  \vspace{2.7ex}
  \centering
  \scriptsize
  \setlength{\tabcolsep}{3pt}
  \renewcommand{\arraystretch}{0.93}
  \resizebox{\linewidth}{!}{%
  \begin{tabular}{lcccc}
    \toprule
    Method & Time (ms) $\downarrow$ & Output RMSE $\downarrow$ & Attention rel.\ $\ell_2$ $\downarrow$ & IMDb acc.\ (\%) $\uparrow$ \\
    \midrule
    Sinkhorn normalizer ($S=3$) & $14.14 \pm 0.03$ & $0.068 \pm 0.030$ & $1.527 \pm 0.319$ & $86.92 \pm 0.12$ \\
    Sinkhorn normalizer ($S=5$) & $19.87 \pm 0.05$ & $0.060 \pm 0.015$ & $1.430 \pm 0.241$ & $87.15 \pm 0.14$ \\
    \midrule
    ASAP-LS-0 ($L=32$) & $\mathbf{7.74 \pm 0.02}$ & $0.020 \pm 0.017$ & $0.364 \pm 0.255$ & $87.29 \pm 0.20$ \\
    ASAP-LS-0 ($L=64$) & $9.19 \pm 0.02$ & $0.018 \pm 0.012$ & $0.322 \pm 0.177$ & $87.25 \pm 0.20$ \\
    ASAP-KL-0 ($L=32$) & $\mathbf{7.74 \pm 0.02}$ & $0.020 \pm 0.016$ & $0.365 \pm 0.239$ & $87.29 \pm 0.24$ \\
    ASAP-LS ($L=32$) & $11.79 \pm 0.02$ & $\mathbf{0.003 \pm 0.004}$ & $\mathbf{0.035 \pm 0.070}$ & $\mathbf{87.36 \pm 0.21}$ \\
    ASAP-KL ($L=32$) & $11.79 \pm 0.02$ & $0.003 \pm 0.005$ & $0.039 \pm 0.072$ & $87.34 \pm 0.24$ \\
    \midrule
    Sinkhorn teacher ($S=20$) & $63.02 \pm 0.12$ & -- & -- & $87.34 \pm 0.27$ \\
    \bottomrule
  \end{tabular}
  }
  \end{minipage}
  \caption{IMDb resource profile and frozen-normalizer replacement. The left panel reports train-step time and memory at sequence length $512$, normalized to Sinkformer $S=3$. The right panel reports teacher-relative operator error and frozen-classifier accuracy after replacing the normalizer in the same $S=20$ teacher.}
  \label{tab:operator_main}
\end{table}
\FloatBarrier

\subsection{Vision Benchmarks}
\label{sec:experiments_catsdogs}

We next test downstream preservation on vision tasks. Cats and Dogs follows the full-data ViT protocol reported by ESPFormer \citep{shahbazi2025espformer}. We keep the published ESPFormer rows under that protocol and train our Sinkhorn teachers under the same full-data ViT setting.

\begin{table}[!htbp]
  \centering
  \scriptsize
  \setlength{\tabcolsep}{3pt}
  \begin{tabular*}{0.86\linewidth}{@{\extracolsep{\fill}}lccc@{}}
    \toprule
    Method & Acc.\ (\%) $\uparrow$ & Train peak (GB) $\downarrow$ & Compile (s) $\downarrow$ \\
    \midrule
    Transformer & $78.49 \pm 0.09^\dagger$ & $1.64$ & -- \\
    DiffTransformer & $78.85 \pm 0.11^\dagger$ & $4.22$ & -- \\
    Sinkformer ($S=3$) & $79.12 \pm 0.17^\dagger$ & $3.54$ & -- \\
    ESPFormer (Soft, init) & $79.47 \pm 0.12^\dagger$ & $170.45$ & -- \\
    ESPFormer (Soft, sharp) & $80.61 \pm 0.11^\dagger$ & $170.45$ & -- \\
    ESPFormer (Hard) & $81.23 \pm 0.11^\dagger$ & $170.45$ & -- \\
    \midrule
    Sinkhorn teacher ($S=10$) & $81.09 \pm 0.61$ & $6.90$ & -- \\
    ASAP-LS-0 ($L=32$) & $80.15 \pm 0.69$ & $6.90$ & $24.38 \pm 0.16$ \\
    ASAP-KL-0 ($L=32$) & $79.77 \pm 0.90$ & $6.90$ & $101.19 \pm 0.76$ \\
    ASAP-LS ($L=64$) & $80.44 \pm 0.39$ & $6.90$ & $25.02 \pm 0.17$ \\
    \bottomrule
  \end{tabular*}
  \caption{Cats and Dogs full-data ViT results. ESPFormer rows marked $^\dagger$ are published results. ASAP rows compile three Sinkhorn teachers ($S=10$), and resource cells use released ESPFormer implementations.}
  \label{tab:catsdogs_main}
\end{table}

ASAP-LS-0 keeps the Sinkhorn teacher within $0.94$ points after a $24.38$ second compile step. The default ASAP-LS operator reduces the gap to $0.65$ points with a similar compile cost. ESPFormer (Hard) remains strong, but it is a hard-sort inference operator reached after soft-sort training, and the switch may not improve uniformly across tasks. Appendix~\ref{app:esp_soft_hard_switch} studies the same soft-to-hard switch on CIFAR-100 checkpoints, where exact hard sorting restores marginal balance but changes $34.65\%$ of test predictions from the soft operator. ASAP answers another question of how much of that finite operator survives after a Sinkhorn teacher has been trained and the scaling loop is replaced.

\begin{table}[!htbp]
  \centering
  \scriptsize
  \sbox{\cifarTablePanel}{%
    \begin{minipage}[t]{0.43\linewidth}
      \vspace{0pt}
      \setlength{\tabcolsep}{3pt}
      \begin{tabular}{@{}p{0.62\linewidth}r@{}}
        \toprule
        Method & Acc.\ (\%) $\uparrow$ \\
        \midrule
        Transformer & $62.73 \pm 0.01$ \\
        DiffTransformer & $60.03 \pm 0.59$ \\
        ESPFormer & $58.46 \pm 0.42$ \\
        Sinkhorn teacher ($S=10$) & $\mathbf{64.95 \pm 0.06}$ \\
        \midrule
        ASAP-LS-0 ($L=32$) & $63.64 \pm 0.42$ \\
        ASAP-KL-0 ($L=32$) & $63.58 \pm 0.66$ \\
        ASAP-LS ($L=32$) & $63.46 \pm 0.36$ \\
        ASAP-KL ($L=32$) & $63.34 \pm 0.44$ \\
        \bottomrule
      \end{tabular}
    \end{minipage}%
  }%
  \usebox{\cifarTablePanel}\hfill
  \begin{minipage}[t]{0.54\linewidth}
    \vspace{0pt}
    \includegraphics[width=\linewidth,height=\dimexpr\ht\cifarTablePanel+\dp\cifarTablePanel\relax]{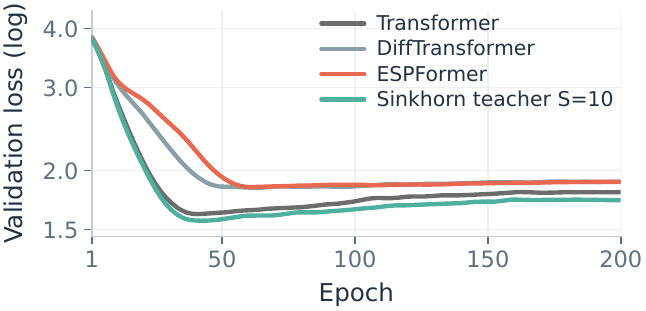}
  \end{minipage}
  \caption{CIFAR-100 patch-size-4 control. The left panel reports final accuracy for trained baselines and ASAP replacements. The right panel reports validation loss for trained baselines only.}
  \label{tab:cifar100_available}
\end{table}

CIFAR-100 is our harder vision control because preserving the teacher requires more than the present linear sliced-dual map provides. The Sinkhorn teacher reaches the best final accuracy, and its validation loss falls below row-softmax, DiffTransformer, and ESPFormer early in training. This matches the Sinkformer view of column balance as an informative prior during optimization. ASAP remains above the row-softmax and ESPFormer baselines without retraining, but the remaining gap shows where the current linear sliced-dual map loses teacher fidelity.

\subsection{Text Classification}
\label{sec:experiments_text_classification}

Text classification is the cleanest downstream preservation study. DBpedia-14, AG News \citep{zhang2015character}, and Yelp Review Full use the same encoder-plus-pooling family, an $S=10$ Sinkhorn teacher, and ASAP compiled from $4096$ unlabeled training examples. The datasets span highly distinct topics, four-way news, and five-class sentiment, with the same backbone and calibration procedure throughout.

Across these datasets, ASAP-0 trails the $S=10$ teacher by less than one point, and the default ASAP operator recovers the teacher to within about $0.1$ points on every dataset. ESPFormer trails the Sinkhorn teacher in this protocol, most sharply on Yelp Review Full. Because ASAP-0 and ASAP use the same compiled source-dual prediction, this setting directly tests the value of applying the entropic $c$-transform on both marginal sides.

\begin{table}[!htbp]
  \centering
  \scriptsize
  \setlength{\tabcolsep}{3.5pt}
  \begin{tabular}{lccc}
    \toprule
    Method & DBpedia-14 & AG News & Yelp Review Full \\
    \midrule
    Transformer & $96.99 \pm 0.42$ & $89.34 \pm 0.20$ & $54.40 \pm 0.05$ \\
    DiffTransformer & $96.80 \pm 0.33$ & $88.01 \pm 0.15$ & $54.30 \pm 0.06$ \\
    ESPFormer & $96.58 \pm 0.01$ & $86.50 \pm 0.43$ & $49.49 \pm 0.38$ \\
    Sinkhorn teacher ($S=10$) & $\mathbf{98.27 \pm 0.02}$ & $\mathbf{90.12 \pm 0.12}$ & $\mathbf{54.54 \pm 0.25}$ \\
    \midrule
    ASAP-LS-0 & $97.82 \pm 0.05$ & $89.61 \pm 0.18$ & $53.87 \pm 0.47$ \\
    ASAP-KL-0 & $97.90 \pm 0.06$ & $89.59 \pm 0.20$ & $53.83 \pm 0.44$ \\
    ASAP-LS & $98.26 \pm 0.03$ & $90.03 \pm 0.17$ & $54.51 \pm 0.35$ \\
    ASAP-KL & $98.25 \pm 0.04$ & $90.04 \pm 0.18$ & $54.46 \pm 0.37$ \\
    \bottomrule
  \end{tabular}
  \caption{Text classification accuracy (\%) at sequence length $128$. ASAP rows compile $S=10$ Sinkhorn teachers from $4096$ unlabeled examples. Entries are mean $\pm$ standard deviation over three runs.}
  \label{tab:text_classification_main}
\end{table}

These results address the two questions that guide the paper, whether Sinkhorn attention is useful during training and whether ASAP preserves the trained operator after replacement. Sinkhorn attention improves the teacher rows most consistently on CIFAR-100 and text, whereas ASAP measures how much of that learned computation remains after we replace most of the repeated scaling loop. Sequence classification transfers tightly with the default two-sided transform, while CIFAR-100 is the hardest setting for the present source-dual feature map.

\FloatBarrier
\vspace{-10pt}

\section{Limitations}
\vspace{-5pt}

The ASAP layer replaces the iterative Sinkhorn scaling loop, but it still uses dense query--key scores and dense value mixing. ASAP therefore reduces online scaling work without making attention sparse or subquadratic. It also assumes a repeated-inference setting where teacher extraction and offline calibration can be amortized.

Additionally, a linear sliced-dual map may not be expressive enough for all query--key geometries. Large shifts in the layer's activation distribution may require recalibration. Causal attention also needs its own marginal design, since a strict triangular mask is not compatible with the square uniform two-marginal setting used in the main text.

\vspace{-8pt}
\section{Conclusion}
\vspace{-5pt}

We introduced ASAP as an approach to keep the Sinkhorn training setup while avoiding most of the repeated scaling work in a trained attention layer. The method learns a lightweight map from sliced one-dimensional OT potentials to the teacher's finite source dual and applies a two-sided entropic $c$-transform under the teacher's Sinkhorn convention. Across frozen-layer and downstream replacements, ASAP follows the trained teacher more closely than lowering the Sinkhorn budget while requiring no retraining of the surrounding model. Future work could extend the same compiler idea to Sinkhorn-based sparse attention, where Sinkhorn balances block-level routing or sorting matrices before local attention.

\bibliographystyle{plainnat}
\bibliography{references}

\clearpage
\appendix
\setcounter{figure}{0}
\setcounter{table}{0}
\setcounter{algorithm}{0}
\setcounter{equation}{0}
\renewcommand{\thefigure}{\Alph{section}.\arabic{figure}}
\renewcommand{\thetable}{\Alph{section}.\arabic{table}}
\renewcommand{\thealgorithm}{\Alph{section}.\arabic{algorithm}}
\renewcommand{\theequation}{\Alph{section}.\arabic{equation}}
\makeatletter
\@addtoreset{figure}{section}
\@addtoreset{table}{section}
\@addtoreset{algorithm}{section}
\@addtoreset{equation}{section}
\makeatother
\renewcommand{\theHfigure}{appendix.\thesection.\arabic{figure}}
\renewcommand{\theHtable}{appendix.\thesection.\arabic{table}}
\renewcommand{\theHalgorithm}{appendix.\thesection.\arabic{algorithm}}
\renewcommand{\theHequation}{appendix.\thesection.\arabic{equation}}

\section{Notation and Conventions}
\label{app:notation}
\begingroup
\small
\setlength{\tabcolsep}{5pt}
\renewcommand{\arraystretch}{1.12}
\begin{longtable}{@{}p{0.25\linewidth}p{0.67\linewidth}@{}}
\toprule
Symbol & Convention \\
\midrule
\endfirsthead
\toprule
Symbol & Convention \\
\midrule
\endhead
\midrule
\multicolumn{2}{r}{\emph{continued on next page}}\\
\bottomrule
\endfoot
\bottomrule
\endlastfoot
\multicolumn{2}{@{}l}{\textbf{Basic objects.}}\\
\addlinespace[2pt]
$\mathbf{1}_m$ & all-ones vector in $\mathbb{R}^m$ \\
$\mathrm{Diag}(a)$ & diagonal matrix with diagonal entries $a_i$ \\
$\Delta_m$ & $\{p\in\mathbb{R}^m_+:\mathbf{1}_m^\top p=1\}$ \\
$\Delta_m^+$ & $\{p\in\Delta_m:p_i>0\ \forall i\}$ \\
$\langle C,P\rangle$ & $\sum_{ij}C_{ij}P_{ij}$ for matrices of the same size \\
$\Pi_0(z)$ & $z-\frac{1}{N}(\mathbf{1}_N^\top z)\mathbf{1}_N$, the zero-mean projection \\
\addlinespace[6pt]
\multicolumn{2}{@{}l}{\textbf{Tokens and attention.}}\\
\addlinespace[2pt]
$N$ & sequence length after padding or patch/token construction \\
$d_h,d_v$ & head dimension and value dimension \\
$Q,K,V$ & matrices in $\mathbb{R}^{N\times d_h}$, $\mathbb{R}^{N\times d_h}$, and $\mathbb{R}^{N\times d_v}$ \\
$q_i,k_j,v_j$ & rows of $Q,K,V$ \\
$S_{ij}$ & scaled dot-product score $\langle q_i,k_j\rangle/\sqrt{d_h}$ \\
$C_{ij}$ & quadratic cost $\|q_i-k_j\|_2^2/(2\sqrt{d_h})$ used in the main text \\
$\rho_i(Q),\kappa_j(K)$ & score-to-cost shift terms $\|q_i\|_2^2/(2\sqrt{d_h})$ and $\|k_j\|_2^2/(2\sqrt{d_h})$ \\
\addlinespace[6pt]
\multicolumn{2}{@{}l}{\textbf{Transport and Sinkhorn quantities.}}\\
\addlinespace[2pt]
$\mu,\nu$ & uniform marginals $\frac{1}{N}\mathbf{1}_N$ in the unmasked square setting \\
$\mathcal{U}(\mu,\nu)$ & $\{P\in\mathbb{R}^{N\times N}_+:P\mathbf{1}_N=\mu,\ P^\top\mathbf{1}_N=\nu\}$ \\
$P$ & coupling in transport units \\
$A$ & attention-scale matrix $A=NP$ used for value mixing \\
$f,g$ & source and key Sinkhorn duals, defined up to additive shifts $(f,g)\mapsto(f+c\mathbf{1}_N,g-c\mathbf{1}_N)$ \\
$\varepsilon$ & entropic regularization parameter or its equivalent score-kernel scale \\
$T_K^\nu(f)$ & key-side entropic $c$-transform applied to a source dual $f$ \\
$T_Q^\mu(g)$ & source-side entropic $c$-transform applied to a key dual $g$ \\
$T$ & finite Sinkhorn budget used by the teacher \\
$S$ & finite Sinkhorn budget used by an iterative comparison baseline \\
\addlinespace[6pt]
\multicolumn{2}{@{}l}{\textbf{ASAP fitting quantities.}}\\
\addlinespace[2pt]
$\Theta=\{\theta_\ell\}_{\ell=1}^L$ & fixed slice directions used by an ASAP layer \\
$a_i^{(\ell)},b_j^{(\ell)}$ & one-dimensional projections of scaled queries and keys along $\theta_\ell$ \\
$\tilde f^{(\ell)}$ & centered one-dimensional source potential from slice $\ell$ \\
$X_\Theta(Q,K)$ & matrix in $\mathbb{R}^{N\times L}$ whose columns are $\tilde f^{(\ell)}(Q,K)$ \\
$M$ & number of unlabeled fit examples used by an offline projection \\
$\lambda$ & regularization parameter for an offline projection \\
$\omega$ & ASAP coefficient vector in $\mathbb{R}^L$ \\
$\hat f_\omega$ & centered predicted source dual $\Pi_0(X_\Theta(Q,K)\omega)$ \\
{\bf ASAP-KL} & ASAP with the convex KL fit through the key-side $c$-transform \\
{\bf ASAP-LS} & ASAP with the closed-form least-squares dual fit \\
ASAP-0 & one-sided ablation using $\hat g^0_\omega=T_K^\nu(\hat f_\omega)$ \\
ASAP & default compiled layer using the two-sided transform in Equation~\ref{eq:two_sided_c_transform} \\
\addlinespace[6pt]
\multicolumn{2}{@{}l}{\textbf{Masks, errors, and timing.}}\\
\addlinespace[2pt]
$J(x)$ & active key positions for a padded text input $x$ \\
$\nu^J$ & masked key target with $\nu^J_j=1/N$ for $j\in J(x)$ and $\nu^J_j=0$ otherwise \\
$e_{\mathrm{row}}(A)$ & $N^{-1}\sum_i|\sum_j A_{ij}-1|$ in attention units \\
$e_{\mathrm{col}}(A;J)$ & $|J|^{-1}\sum_{j\in J}|\sum_i A_{ij}-1|$ for text batches, or $N^{-1}\sum_j|\sum_i A_{ij}-1|$ without padding \\
layer-batch case & one held-out minibatch evaluated at one frozen attention layer, with heads included in the average unless stated otherwise \\
fit time & teacher extraction and offline fitting; reported outside online forward latency \\
\end{longtable}
\endgroup

\section{Proofs}
\label{app:proofs}

\medskip\noindent\textbf{Exact row/column-shift invariance.}

\begin{proposition}
\label{prop:shift_invariance}
Fix marginals $\mu,\nu$ and $\varepsilon > 0$. If $\widetilde C_{ij} = C_{ij} + a_i + b_j$ for some $a,b \in \mathbb{R}^N$, then $C$ and $\widetilde C$ induce the same entropic optimal transport plan on $\mathcal{U}(\mu,\nu)$.
\end{proposition}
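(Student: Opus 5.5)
The plan is to show that, on the feasible set $\mathcal{U}(\mu,\nu)$, the objectives of the entropic problem in Equation~\ref{eq:entropic_ot} under $C$ and under $\widetilde C$ differ only by a constant. Minimizers are unchanged by adding a constant, and the entropic problem has a unique minimizer, so the two plans must coincide.

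\textbf{Step 1 (linear term on the polytope).} For any $P\in\mathcal{U}(\mu,\nu)$, expand the cost inner product:
\[
\langle \widetilde C,P\rangle=\langle C,P\rangle+\sum_i a_i\sum_j P_{ij}+\sum_j b_j\sum_i P_{ij}=\langle C,P\rangle+a^\top\mu+b^\top\nu .
\]
This holds because $P\mathbf{1}_N=\mu$ and $P^\top\mathbf{1}_N=\nu$. The entropy term $\varepsilon\sum_{ij}P_{ij}(\log P_{ij}-1)$ does not involve the cost at all. Hence the two objectives differ by the constant $a^\top\mu+b^\top\nu$, which is the same for every feasible plan.

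\textbf{Step 2 (existence and uniqueness of the minimizer).} I need the plan to be well defined for both costs. The feasible set $\mathcal{U}(\mu,\nu)$ is nonempty, since it contains $\mu\nu^\top$, and it is compact. The objective is continuous on it, using the convention $0\log 0=0$, so a minimizer exists. The function $x\mapsto x\log x$ is strictly convex, so the objective is strictly convex on the convex set $\mathcal{U}(\mu,\nu)$, and the minimizer is unique. By Step 1 the argmin sets for $C$ and $\widetilde C$ are identical, so the unique minimizers agree. This is the step I expect to be the main (though mild) obstacle, since the argument relies on uniqueness.

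\textbf{Step 3 (cross-check via the Gibbs form).} As a consistency check, use the Gibbs form of Equation~\ref{eq:gibbs_plan}. If $(f^\star,g^\star)$ are optimal potentials for $C$, then $(f^\star+a,\,g^\star+b)$ reproduce the same plan under $\widetilde C$:
\[
\exp\!\left(\frac{-\widetilde C_{ij}+f^\star_i+a_i+g^\star_j+b_j}{\varepsilon}\right)=P^\star_{ij}.
\]
This plan has the right marginals, and since first-order optimality characterizes the unique solution, it is the $\widetilde C$-optimum. The check also shows that the row and column shifts are absorbed into the dual potentials, matching the discussion in Section~\ref{sec:background_transport_attention}.
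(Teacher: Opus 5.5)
Your proposal is correct and matches the paper's proof: the paper also argues that on $\mathcal{U}(\mu,\nu)$ the two entropic objectives differ by the constant $a^\top\mu+b^\top\nu$, so they have the same minimizers. Your Step~2 (existence and uniqueness) and Step~3 (Gibbs-form cross-check) make explicit what the paper leaves implicit; note that Step~2's use of $\mu\nu^\top$ as a feasible point requires $\mu$ and $\nu$ to have unit total mass, as they do in the paper's setting.
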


\begin{proof}
Let $J_C(P)=\langle C, P \rangle+\varepsilon\sum_{i,j}P_{ij}(\log P_{ij}-1)$ denote the entropic OT objective over $P\in\mathcal{U}(\mu,\nu)$. Suppose $P$ is feasible. Fixed marginals make row-only and column-only additions constant across the feasible set. Indeed, if $\widetilde C_{ij}=C_{ij}+a_i+b_j$, we have
\begin{equation}
J_{\widetilde C}(P)
=
J_C(P)
+
\sum_{i,j}(a_i + b_j)P_{ij}
=
J_C(P) + a^\top \mu + b^\top \nu.
\end{equation}
The added term $a^\top \mu+b^\top\nu$ does not depend on $P$. The two objectives are therefore the same up to a constant on the transport polytope, so they choose the same minimizers. For the attention kernel, we can take
\begin{equation}
\widetilde C_{ij} = -S_{ij},
\qquad
C_{ij} = \frac{1}{2\sqrt{d_h}}\|q_i-k_j\|_2^2,
\end{equation}
so that
\begin{equation}
-C_{ij}
=
S_{ij}
- \frac{1}{2\sqrt{d_h}}\|q_i\|_2^2
- \frac{1}{2\sqrt{d_h}}\|k_j\|_2^2
.
\end{equation}
So the quadratic-cost and scaled-dot-product formulations are the same up to row and column terms. For the exact two-marginal entropic problem, those terms are invisible to the optimizer, and the induced plan is the same.
\end{proof}

For exact entropic OT, the minimizer is unchanged by row and column shifts of the cost. With finitely many Sinkhorn updates, the output is also tied to the chosen kernel, initialization, and update order. ASAP therefore fits and reconstructs under the same finite-budget convention.

\medskip\noindent\textbf{Validity of the one-dimensional sliced potential.}

\begin{proposition}
\label{prop:oned_potential}
Let $a_{(1)} \leq \cdots \leq a_{(N)}$ and $b_{(1)} \leq \cdots \leq b_{(N)}$ be sorted one-dimensional supports with uniform masses. Define
\begin{equation}
\phi_1 = 0,
\qquad
\phi_r
=
\sum_{t=1}^{r-1} b_{(t)}(a_{(t+1)}-a_{(t)}),
\qquad r \geq 2,
\end{equation}
and $f_{(r)} = \frac{1}{2}a_{(r)}^2-\phi_r$. The vector $f$ is a valid source Kantorovich potential for the monotone optimal matching under the cost $c(a,b)=\frac{1}{2}(a-b)^2$.
\end{proposition}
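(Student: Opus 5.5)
The plan is to pass to the convex-conjugate (Brenier) form of the quadratic cost and exhibit an explicit key-side potential $g$ such that $(f,g)$ is dual feasible and complementary slackness holds on the monotone matching $r\leftrightarrow r$. Expand $c(a,b)=\tfrac12 a^2+\tfrac12 b^2-ab$ and write $\phi_r=\tfrac12 a_{(r)}^2-f_{(r)}$, which is exactly the $\phi_r$ of the statement. For a key-side candidate $h_s=\tfrac12 b_{(s)}^2-g_{(s)}$, the feasibility condition $f_{(r)}+g_{(s)}\le c(a_{(r)},b_{(s)})$ for all $r,s$ becomes
\[
\phi_r+h_s\ \ge\ a_{(r)}b_{(s)},
\]
and equality on matched pairs becomes $\phi_s+h_s=a_{(s)}b_{(s)}$. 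I will take the discrete Legendre transform
\[
h_s=\max_r\bigl(a_{(r)}b_{(s)}-\phi_r\bigr),
\]
so that feasibility holds automatically. It then only remains to show that the maximum is attained at $r=s$.

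The key step is therefore the inequality
\[
\phi_r-\phi_s\ \ge\ b_{(s)}\bigl(a_{(r)}-a_{(s)}\bigr)\qquad\text{for all } r,s.
\]
This says that $\phi$ is a discrete convex function whose subgradient at $a_{(s)}$ contains $b_{(s)}$. I will prove it by telescoping the definition, splitting into two cases.

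\emph{Case $r>s$.} Here $\phi_r-\phi_s=\sum_{t=s}^{r-1}b_{(t)}(a_{(t+1)}-a_{(t)})$. Each increment $a_{(t+1)}-a_{(t)}$ is nonnegative and each $b_{(t)}\ge b_{(s)}$, so the sum is at least $b_{(s)}(a_{(r)}-a_{(s)})$.

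\emph{Case $r<s$.} Here $\phi_s-\phi_r=\sum_{t=r}^{s-1}b_{(t)}(a_{(t+1)}-a_{(t)})$. Now each $b_{(t)}\le b_{(s)}$, so the sum is at most $b_{(s)}(a_{(s)}-a_{(r)})$. Negating gives the claim.

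\emph{Case $r=s$.} This is trivial.

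Both sortedness assumptions are used essentially here: monotonicity of $a$ makes the increments nonnegative, and monotonicity of $b$ supplies the one-sided comparisons. Ties in either sequence cause no trouble, since they only produce zero increments or equalities.

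With $h_s=a_{(s)}b_{(s)}-\phi_s$, set $g_{(s)}=\tfrac12 b_{(s)}^2-h_s$. Then $(f,g)$ satisfies $f_{(r)}+g_{(s)}\le c(a_{(r)},b_{(s)})$ everywhere, with equality on the support of the monotone plan $P=\tfrac1N\sum_r\delta_{(a_{(r)},b_{(r)})}$. By weak duality this gives $\langle c,P'\rangle\ge\tfrac1N\sum_r (f_{(r)}+g_{(r)})=\langle c,P\rangle$ for every uniform coupling $P'$. This simultaneously recovers the standard optimality of the monotone matching and certifies $f$ as an optimal, hence valid, source Kantorovich potential.

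The only real obstacle is organizing the subgradient inequality cleanly. Centering or reordering $f$ back to the original query order, as done in Section~\ref{sec:background_sliced_features}, changes nothing, since potentials are defined up to the shift $(f,g)\mapsto(f+c\mathbf{1}_N,g-c\mathbf{1}_N)$.
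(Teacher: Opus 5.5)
Your proof is correct, and its skeleton is the paper's. Both expand the quadratic cost, treat $\phi$ as a convex (Brenier) potential, take the target potential from its conjugate, get tightness on matched pairs from $b_{(s)}\in\partial\phi(a_{(s)})$, and conclude by weak duality.

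The difference lies in how the key step is carried out.

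\begin{itemize}
\item The paper first assumes distinct source locations. It extends $\phi$ to a piecewise-linear function on all of $\mathbb{R}$, with slope $b_{(r)}$ on $[a_{(r)},a_{(r+1)}]$ and outer rays of slopes $b_{(1)}$ and $b_{(N)}$. It reads off convexity from the nondecreasing slopes and then invokes Fenchel's inequality and Fenchel equality for the conjugate $\phi^\star$ of this extension. Ties are handled afterwards in a separate remark about subgradient intervals.
\item You use the finite conjugate $\max_r(a_{(r)}b_{(s)}-\phi_r)$ instead. You prove the discrete subgradient inequality $\phi_r-\phi_s\ge b_{(s)}(a_{(r)}-a_{(s)})$ directly by telescoping. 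This keeps the argument finite and self-contained, proves the subgradient membership rather than asserting it, and covers ties automatically through zero increments.
\end{itemize}

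The two constructions produce the same target potential on the atoms. Fenchel equality at $a_{(s)}$ gives $\phi^\star(b_{(s)})=a_{(s)}b_{(s)}-\phi_s$, so both yield $g_{(s)}=\tfrac12 b_{(s)}^2-a_{(s)}b_{(s)}+\phi_s$ (called $h_{(s)}$ in the paper). What the paper's extension adds is a convex potential and a target potential $\tfrac12 y^2-\phi^\star(y)$ defined for every $y\in\mathbb{R}$. This ties the discrete certificate to the usual continuous Brenier picture.
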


\begin{proof}
Assume first that the source locations are all distinct. We want a dual pair that is tight exactly on the sorted matching. Extend $\phi$ to a piecewise-linear function with $\phi(a_{(r)})=\phi_r$ and slope $b_{(r)}$ on each interval $[a_{(r)},a_{(r+1)}]$, using outer rays with slopes $b_{(1)}$ and $b_{(N)}$. The sorted target values are nondecreasing, so these slopes are nondecreasing and $\phi$ is convex. We write $\phi^\star$ for its convex conjugate and define
\begin{equation}
h_{(s)}
=
\frac{1}{2}b_{(s)}^2-\phi^\star(b_{(s)}).
\end{equation}
By Fenchel's inequality, $\phi(a)+\phi^\star(b)\geq ab$ for all $a,b$. Therefore, for every source index $r$ and target index $s$, we have
\begin{equation}
f_{(r)} + h_{(s)}
\leq
\frac{1}{2}a_{(r)}^2+\frac{1}{2}b_{(s)}^2-a_{(r)}b_{(s)}
=
\frac{1}{2}(a_{(r)}-b_{(s)})^2.
\end{equation}
This is the dual-feasibility inequality for the quadratic-cost transport problem. On a matched pair $(a_{(r)},b_{(r)})$, our slope convention places $b_{(r)}$ in $\partial \phi(a_{(r)})$, so Fenchel equality holds and the dual bound is tight. The sorted matching is primal feasible. After we average these equalities over the uniform atoms, the primal cost and dual value agree, so the sorted matching is optimal and $f$ is a valid source potential.

If several source locations are tied, the zero gaps leave $\phi_r$ unchanged across the tied block. The sorted target values assigned to that block lie in the corresponding subgradient interval, so the same equality argument applies after fixing any deterministic monotone tie rule.
\end{proof}

\medskip\noindent\textbf{One-sided entropic \texorpdfstring{$c$}{c}-transform.}

\begin{proposition}
\label{prop:one_marginal}
Fix a positive key marginal $\nu$ and $f \in \mathbb{R}^N$, and define
\begin{equation}
g = T_K^\nu(f),
\qquad
P_f{}_{ij}
=
\exp\left(\frac{-C_{ij} + f_i + g_j}{\varepsilon}\right).
\end{equation}
The plan then satisfies $P_f^\top \mathbf{1}_N = \nu$. Moreover, for any constant $c \in \mathbb{R}$,
\begin{equation}
T_K^\nu(f + c\mathbf{1}_N) = T_K^\nu(f) - c\mathbf{1}_N,
\qquad
P_{f + c\mathbf{1}_N} = P_f.
\end{equation}
\end{proposition}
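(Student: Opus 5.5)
The plan is to prove both claims by direct substitution of the definition of $T_K^\nu$ from Equation~\ref{eq:key_c_transform} into the Gibbs form of the plan. Both statements are algebraic identities for a fixed column index $j$, so I will work column by column.

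\textbf{Column marginal.} For the first claim, fix $j$ and write $Z_j(f)=\sum_{i=1}^N \exp\!\left((-C_{ij}+f_i)/\varepsilon\right)$. This quantity is strictly positive, since it is a finite sum of exponentials, so $\log Z_j(f)$ is well defined. Because $\nu_j>0$, $\log\nu_j$ is also finite, and the definition gives $g_j=\varepsilon\log\nu_j-\varepsilon\log Z_j(f)$, hence $\exp(g_j/\varepsilon)=\nu_j/Z_j(f)$. Since $g_j$ does not depend on $i$, I factor $\exp(g_j/\varepsilon)$ out of the column sum:
\[
\sum_i (P_f)_{ij}=\exp(g_j/\varepsilon)\,Z_j(f)=\nu_j .
\]
Collecting over $j$ yields $P_f^\top\mathbf{1}_N=\nu$, which is the identity already recorded in Equation~\ref{eq:key_marginal_exact}.

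\textbf{Shift equivariance.} For the second claim, replace $f$ by $f+c\mathbf{1}_N$. Each summand of $Z_j$ acquires the common factor $e^{c/\varepsilon}$, so $Z_j(f+c\mathbf{1}_N)=e^{c/\varepsilon}Z_j(f)$. Taking $\varepsilon\log$ gives
\[
T_K^\nu(f+c\mathbf{1}_N)_j=T_K^\nu(f)_j-c .
\]
Substituting into the plan, the exponent $-C_{ij}+(f_i+c)+(g_j-c)$ equals the original exponent, so $P_{f+c\mathbf{1}_N}=P_f$ entrywise.

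\textbf{Obstacles.} There is no real obstacle. The only points needing care are positivity of $\nu$, which keeps the logarithm finite, and the observation that $g_j$ is constant in $i$, which is what lets it be pulled out of the sum. I would also remark that this argument uses nothing about $C$ beyond its finiteness, so it applies equally to the quadratic cost and to the score-based kernel of Proposition~\ref{prop:shift_invariance}.
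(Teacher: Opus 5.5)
Your proposal is correct and follows the paper's proof essentially step for step. The paper likewise exponentiates the $c$-transform to get $\exp(g_j/\varepsilon)=\nu_j/Z_j(f)$ and cancels this denominator in each column sum. It then pulls $e^{c/\varepsilon}$ out of the log-sum-exp to obtain the $-c$ shift, which cancels in the Gibbs exponent.
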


\begin{proof}
We exponentiate the definition of the key-side $c$-transform to get
\begin{equation}
\exp\left(\frac{g_j}{\varepsilon}\right)
=
\frac{\nu_j}{\sum_{i=1}^N \exp\left(\frac{-C_{ij}+f_i}{\varepsilon}\right)}.
\end{equation}
The $c$-transform was chosen so that this denominator cancels inside each key column. For key $j$, we have
\begin{equation}
\sum_{i=1}^N P_f{}_{ij}
=
\exp\left(\frac{g_j}{\varepsilon}\right)
\sum_{i=1}^N
\exp\left(\frac{-C_{ij}+f_i}{\varepsilon}\right)
=
\nu_j,
\end{equation}
which proves $P_f^\top \mathbf{1}_N = \nu$.

Now let $c\in\mathbb{R}$. The constant shift in the source dual appears as a common factor inside the logarithm in $T_K^\nu(f+c\mathbf{1}_N)_j$, so
\begin{equation}
T_K^\nu(f+c\mathbf{1}_N)_j
=
\varepsilon \log \nu_j
- \varepsilon \log
\left(
\exp(c/\varepsilon)
\sum_{i=1}^N
\exp\left(\frac{-C_{ij}+f_i}{\varepsilon}\right)
\right)
=
T_K^\nu(f)_j - c.
\end{equation}
When we place this shifted key dual back into the plan, the two constants cancel:
\begin{equation}
P_{f+c\mathbf{1}_N}{}_{ij}
=
\exp\left(
\frac{-C_{ij} + (f_i+c) + (g_j-c)}{\varepsilon}
\right)
=
P_f{}_{ij},
\end{equation}
which proves additive-shift invariance.
\end{proof}

\begin{proposition}
\label{prop:source_dual_identifiability}
Fix $C$, $\varepsilon>0$, and a positive key marginal $\nu$. For source duals $f,\widehat f\in\mathbb{R}^N$, let $P_f$ and $P_{\widehat f}$ be the key-normalized plans obtained from $T_K^\nu(f)$ and $T_K^\nu(\widehat f)$. Under this notation, we have
\begin{equation}
P_{\widehat f}=P_f
\quad\Longleftrightarrow\quad
\widehat f-f\in\operatorname{span}\{\mathbf{1}_N\}.
\end{equation}
In particular, if $f$ and $\widehat f$ are centered, then $P_{\widehat f}=P_f$ if and only if $\widehat f=f$.
\end{proposition}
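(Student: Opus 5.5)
The reverse implication is already given by Proposition~\ref{prop:one_marginal}. If $\widehat f-f=c\mathbf{1}_N$, then $T_K^\nu(\widehat f)=T_K^\nu(f)-c\mathbf{1}_N$, and the two constants cancel inside the exponent, so $P_{\widehat f}=P_f$.

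For the forward implication, write $g=T_K^\nu(f)$ and $\widehat g=T_K^\nu(\widehat f)$, and assume $P_{\widehat f}=P_f$ entrywise. Every entry of both plans is a finite exponential, hence strictly positive, so we can take logarithms and multiply by $\varepsilon$. The cost $-C_{ij}$ cancels, giving
\begin{equation*}
f_i+g_j=\widehat f_i+\widehat g_j\qquad\text{for all } i,j,
\end{equation*}
that is, $\widehat f_i-f_i=g_j-\widehat g_j$. The left side depends only on $i$ and the right side only on $j$, so both equal a common constant $c$. This gives $\widehat f-f=c\mathbf{1}_N$. Note that $\nu$ being positive is only needed so that $\varepsilon\log\nu_j$ is finite and the transforms are well defined. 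The forward direction does not use the form of $T_K^\nu$ beyond that. The only point needing care is the positivity of the entries, which justifies taking logarithms; I do not see a genuine obstacle here.

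For the centered statement, suppose $\mathbf{1}_N^\top f=\mathbf{1}_N^\top\widehat f=0$ and $\widehat f-f=c\mathbf{1}_N$. Taking the inner product with $\mathbf{1}_N$ gives $Nc=0$, so $c=0$ and $\widehat f=f$. Equivalently, $\Pi_0$ restricted to centered vectors is the identity and annihilates $\operatorname{span}\{\mathbf{1}_N\}$. The converse in the centered case is immediate.
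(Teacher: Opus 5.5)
Your proof is correct and is essentially the paper's argument. The paper writes each column as $\nu_j$ times a softmax over $i$ and concludes that $\exp((\widehat f_i-f_i)/\varepsilon)$ must be constant within a column; you instead take logarithms of the Gibbs form and separate variables. Both rest on the same cancellation of $C_{ij}$, and your version, as you note, does not even need the specific form of $T_K^\nu$, while the reverse direction via Proposition~\ref{prop:one_marginal} and the centered case match the paper.
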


\begin{proof}
One direction is exactly the additive-shift invariance in Proposition~\ref{prop:one_marginal}. We prove the converse. Suppose $P_{\widehat f}=P_f$. Once the key marginal is fixed, each key column is a normalized distribution over queries. For each key $j$, the key-side $c$-transform lets us write
\begin{equation}
P_f{}_{ij}
=
\nu_j
\frac{\exp((-C_{ij}+f_i)/\varepsilon)}
{\sum_r \exp((-C_{rj}+f_r)/\varepsilon)}
\end{equation}
and the same formula holds with $f$ replaced by $\widehat f$. Since $\nu_j>0$ and the two plans are equal, the normalized distributions within each key column are equal. Therefore, for every $i$ and $j$,
\begin{equation}
\frac{\exp((-C_{ij}+\widehat f_i)/\varepsilon)}
{\exp((-C_{ij}+f_i)/\varepsilon)}
\end{equation}
is independent of $i$ within that column. The cost terms cancel, so $\exp((\widehat f_i-f_i)/\varepsilon)$ is independent of $i$. Hence $\widehat f-f=c\mathbf{1}_N$ for some constant $c$. If both vectors are centered, the constant has mean zero and $c=0$.
\end{proof}

\begin{proposition}
\label{prop:key_projection}
Fix $C$, $\varepsilon>0$, a source dual $f\in\mathbb{R}^N$, and a positive key marginal $\nu$. Define
\begin{equation}
R^f_{ij}
=
\exp\left(\frac{-C_{ij}+f_i}{\varepsilon}\right)
\end{equation}
and
\begin{equation}
\mathcal{K}_\nu
=
\{P\in\mathbb{R}^{N\times N}_+:P^\top\mathbf{1}_N=\nu\}.
\end{equation}
We use the generalized KL divergence below because $R^f$ is unnormalized.
For nonnegative $P$ and positive $R$, use the generalized KL divergence
\begin{equation}
D_{\mathrm{KL}}^{\mathrm{gen}}(P\|R)
=
\sum_{i,j}
\left[
P_{ij}\log\frac{P_{ij}}{R_{ij}}-P_{ij}+R_{ij}
\right].
\end{equation}
The key-normalized Gibbs plan $P_f=\mathcal{C}_\nu(f;C)$ is the unique minimizer of
\begin{equation}
\min_{P\in\mathcal{K}_\nu}D_{\mathrm{KL}}^{\mathrm{gen}}(P\|R^f).
\end{equation}
\end{proposition}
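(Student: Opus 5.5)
The plan is to exploit the fact that both the objective and the constraint set split across key columns. This reduces the problem to $N$ independent problems, each a generalized-KL projection of a positive vector onto a scaled simplex.

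\textbf{Decomposition.} Write $D_{\mathrm{KL}}^{\mathrm{gen}}(P\|R^f)=\sum_j \Phi_j(P_{\cdot j})$, where
\[
\Phi_j(p)=\sum_i \bigl[p_i\log(p_i/R^f_{ij})-p_i+R^f_{ij}\bigr].
\]
The constraint $P^\top\mathbf{1}_N=\nu$ also splits: it says $\sum_i P_{ij}=\nu_j$ for each $j$, with no coupling between columns. So $P$ minimizes over $\mathcal{K}_\nu$ if and only if each column $P_{\cdot j}$ minimizes $\Phi_j$ over $\{p\in\mathbb{R}^N_+:\mathbf{1}^\top p=\nu_j\}$.

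\textbf{Per-column minimization.} On this set the term $-\sum_i p_i+\sum_i R^f_{ij}$ is constant. The remaining term $\sum_i p_i\log(p_i/R^f_{ij})$ is strictly convex, since $t\mapsto t\log t$ is strictly convex on $[0,\infty)$ with the convention $0\log 0=0$. The feasible set is a compact convex simplex, so a minimizer exists and is unique.

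To identify it, set $\pi_i=p_i/\nu_j$ and $r_i=R^f_{ij}/Z_j$ with $Z_j=\sum_r R^f_{rj}$. This gives
\[
\sum_i p_i\log\frac{p_i}{R^f_{ij}}=\nu_j\,\mathrm{KL}(\pi\|r)+\nu_j\log\frac{\nu_j}{Z_j}.
\]
Gibbs' inequality then says the minimum is attained exactly at $\pi=r$, that is, $p_i=\nu_j R^f_{ij}/Z_j$. This uses $\nu_j>0$, so the rescaling is legitimate. The closed form avoids a Lagrange-multiplier argument and in particular any worry about minimizers on the boundary of the simplex.

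\textbf{Identification with $P_f$.} By the definition of $T_K^\nu$ in Equation~\ref{eq:key_c_transform}, $\exp(g_j/\varepsilon)=\nu_j/Z_j$ with $g=T_K^\nu(f)$. Hence $(P_f)_{ij}=R^f_{ij}\nu_j/Z_j$, which is exactly the column formula used in the proof of Proposition~\ref{prop:source_dual_identifiability}. So $P_f$ is the unique minimizer, with uniqueness following columnwise from the Gibbs equality case.

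The only delicate point is making sure boundary points and zero entries are handled correctly. The $\mathrm{KL}(\pi\|r)$ reformulation takes care of this automatically, because $r>0$ and Gibbs' equality case holds on the closed simplex.
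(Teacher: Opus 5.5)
Your proof is correct. It shares the paper's columnwise decomposition and its final identification with $T_K^\nu$, but the per-column step is argued differently.

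The paper writes the Lagrangian stationarity condition $\log(p_i/R^f_{ij})+\lambda_j=0$. It asserts that the minimizer lies in the open orthant because $R^f_{ij}>0$ and $\nu_j>0$, solves $p_i=R^f_{ij}e^{-\lambda_j}$, fixes $\lambda_j$ from the column mass, and takes uniqueness from strict convexity. You instead rescale to probability vectors and use the identity $\sum_i p_i\log(p_i/R^f_{ij})=\nu_j\,\mathrm{KL}(\pi\|r)+\nu_j\log(\nu_j/Z_j)$. Existence, the closed form, and uniqueness then all follow from the equality case of Gibbs' inequality on the closed simplex.

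Your route is more rigorous at the boundary. The paper states rather than proves that the minimizer is interior; that step needs something like the infinite slope of $t\log t$ at $0$. Your argument never needs it. For the same reason, your separate appeal to compactness and strict convexity is redundant, though harmless.

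The paper's route gives more interpretation. The multiplier satisfies $e^{-\lambda_j}=\nu_j/Z_j=e^{g_j/\varepsilon}$, i.e., $\lambda_j=-g_j/\varepsilon$. This shows the key-side $c$-transform is exactly the dual variable of the column constraint, a fact your proof recovers only by direct comparison of formulas.
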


\begin{proof}
The key constraint is columnwise, so the projection can be solved one key at a time. For a fixed key $j$, we minimize
\begin{equation}
\sum_i
\left[
p_i\log\frac{p_i}{R^f_{ij}}-p_i+R^f_{ij}
\right]
\end{equation}
subject to $p_i\ge0$ and $\sum_i p_i=\nu_j$. The objective is strictly convex on the positive orthant. With a Lagrange multiplier $\lambda_j$ for the column-mass constraint, stationarity requires
\begin{equation}
\log\frac{p_i}{R^f_{ij}}+\lambda_j=0.
\end{equation}
Since $R^f_{ij}>0$ and $\nu_j>0$, the minimizer lies in the positive orthant, so this stationarity condition characterizes it. We have $p_i=R^f_{ij}\exp(-\lambda_j)$, and the value of $\lambda_j$ is determined by the column sum:
\begin{equation}
p_i
=
\nu_j\frac{R^f_{ij}}{\sum_r R^f_{rj}}
=
\exp\left(\frac{-C_{ij}+f_i+T_K^\nu(f)_j}{\varepsilon}\right).
\end{equation}
We have recovered the key-side $c$-transform. Strict convexity on each positive column ensures uniqueness, and the columnwise minimizers assemble into the unique key-normalized plan.
\end{proof}

We write $\mathcal{C}_\nu(f;C)$ for this key-normalized Gibbs plan.
This is the plan used by ASAP-0 and by the convex KL calibration objective below.

\medskip\noindent\textbf{Column-ending two-sided entropic \texorpdfstring{$c$}{c}-transform.}

\begin{proposition}
\label{prop:two_sided_transform}
Fix positive marginals $\mu,\nu\in\mathbb{R}^N_+$ with $\mathbf{1}_N^\top\mu=\mathbf{1}_N^\top\nu=1$. For a source dual $f\in\mathbb{R}^N$, define
\begin{equation}
\begin{aligned}
g^0 &= T_K^\nu(f),\\
f^+ &= T_Q^\mu(g^0),\\
g^+ &= T_K^\nu(f^+),
\end{aligned}
\end{equation}
where
\begin{equation}
T_Q^\mu(g)_i
=
\varepsilon\log\mu_i
-
\varepsilon\log
\sum_{j=1}^N
\exp\left(\frac{-C_{ij}+g_j}{\varepsilon}\right).
\end{equation}
Define the intermediate and final plans
\begin{equation}
R^f_{ij}
=
\exp\left(\frac{-C_{ij}+f^+_i+g^0_j}{\varepsilon}\right),
\qquad
P^{\leftrightarrow}_{f,ij}
=
\exp\left(\frac{-C_{ij}+f^+_i+g^+_j}{\varepsilon}\right).
\end{equation}
Then
\begin{equation}
R^f\mathbf{1}_N=\mu,
\qquad
\bigl(P_f^{\leftrightarrow}\bigr)^\top\mathbf{1}_N=\nu.
\end{equation}
Moreover, for any constant $c\in\mathbb{R}$,
\begin{equation}
P^{\leftrightarrow}_{f+c\mathbf{1}_N}
=
P^{\leftrightarrow}_{f}.
\end{equation}
\end{proposition}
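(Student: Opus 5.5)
The proof has three claims, and each follows from the one-sided exactness and shift-covariance computations already done for Proposition~\ref{prop:one_marginal}. The plan is to first establish the source-side analogue of that proposition for $T_Q^\mu$, and then chain the three maps together.

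\textbf{Row marginal of $R^f$.} Fix $i$ and exponentiate the definition of $T_Q^\mu$. This gives
\begin{equation}
\exp(f^+_i/\varepsilon)=\mu_i\Big/\sum_j\exp\bigl((-C_{ij}+g^0_j)/\varepsilon\bigr).
\end{equation}
Substituting this into $\sum_j R^f_{ij}=\exp(f^+_i/\varepsilon)\sum_j\exp((-C_{ij}+g^0_j)/\varepsilon)$, the denominator cancels and the row sum equals $\mu_i$. This is the argument of Proposition~\ref{prop:one_marginal} with the roles of rows and columns swapped. The only point to note is that $g^0$ may be arbitrary here, since $R^f$ is built from exactly the pair $(f^+,g^0)$ on which $T_Q^\mu$ acted.

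\textbf{Column marginal of $P^{\leftrightarrow}_f$.} Since $g^+=T_K^\nu(f^+)$ and $P^{\leftrightarrow}_f$ is the Gibbs plan of $(f^+,T_K^\nu(f^+))$, Proposition~\ref{prop:one_marginal} applied to the source dual $f^+$ gives $(P^{\leftrightarrow}_f)^\top\mathbf{1}_N=\nu$ directly. Positivity of $\nu$ is all that is needed, and positivity of $\mu$ is needed for the logarithm in $T_Q^\mu$. The normalization $\mathbf{1}^\top\mu=\mathbf{1}^\top\nu=1$ is not used for either marginal identity.

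\textbf{Shift invariance.} Track how a constant propagates through the three maps. By Proposition~\ref{prop:one_marginal}, $T_K^\nu(f+c\mathbf{1}_N)=g^0-c\mathbf{1}_N$. Next, prove the mirror identity $T_Q^\mu(g-c\mathbf{1}_N)=T_Q^\mu(g)+c\mathbf{1}_N$ by pulling $\exp(-c/\varepsilon)$ out of the log-sum-exp. Hence the source dual produced from $f+c\mathbf{1}_N$ is exactly $f^++c\mathbf{1}_N$. Applying Proposition~\ref{prop:one_marginal} once more gives $g^+-c\mathbf{1}_N$. The constants cancel in the exponent, so $P^{\leftrightarrow}_{f+c\mathbf{1}_N}=P^{\leftrightarrow}_f$.

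The only step needing care is this bookkeeping of signs: the shift flips sign at each transform. One could also observe that $f^+$ itself is shift-covariant while $R^f$ is already shift-invariant, which gives the same conclusion. No step is a genuine obstacle.
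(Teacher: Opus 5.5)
Your proposal is correct and follows the paper's proof essentially step for step: exponentiate $T_Q^\mu$ and cancel the denominator to get $R^f\mathbf{1}_N=\mu$, apply the one-sided column argument of Proposition~\ref{prop:one_marginal} to $f^+$, and track the shift $g^0-c\mathbf{1}_N$, $f^++c\mathbf{1}_N$, $g^+-c\mathbf{1}_N$ through the three maps until it cancels in the exponent. Your side remarks also hold: the normalization $\mathbf{1}_N^\top\mu=\mathbf{1}_N^\top\nu=1$ is not needed, and shift-covariance of $f^+$ combined with one-sided shift invariance gives the last claim directly.
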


\begin{proof}
We choose the row-side transform so that each query row of the intermediate plan has the prescribed mass. By definition,
\begin{equation}
\exp\left(\frac{f^+_i}{\varepsilon}\right)
=
\frac{\mu_i}{\sum_j \exp\left(\frac{-C_{ij}+g^0_j}{\varepsilon}\right)}.
\end{equation}
We substitute this expression into the row sum of $R^f$ and get
\begin{equation}
\sum_j R^f_{ij}
=
\exp\left(\frac{f^+_i}{\varepsilon}\right)
\sum_j
\exp\left(\frac{-C_{ij}+g^0_j}{\varepsilon}\right)
=
\mu_i.
\end{equation}
We then apply the same column argument as Proposition~\ref{prop:one_marginal}, now with the updated source dual $f^+$, and obtain
\begin{equation}
\sum_i P^{\leftrightarrow}_{f,ij}
=
\nu_j.
\end{equation}

For shift invariance, let $f'=f+c\mathbf{1}_N$. Proposition~\ref{prop:one_marginal} yields $g^{0\prime}=g^0-c\mathbf{1}_N$. The source-side transform moves by the opposite amount:
\begin{equation}
T_Q^\mu(g^0-c\mathbf{1}_N)=T_Q^\mu(g^0)+c\mathbf{1}_N,
\end{equation}
so $f^{+\prime}=f^+ + c\mathbf{1}_N$. We apply Proposition~\ref{prop:one_marginal} one more time and get $g^{+\prime}=g^+-c\mathbf{1}_N$. The constants cancel inside the exponential expression,
\begin{equation}
\exp\left(\frac{-C_{ij}+f^{+\prime}_i+g^{+\prime}_j}{\varepsilon}\right)
=
\exp\left(\frac{-C_{ij}+f^+_i+g^+_j}{\varepsilon}\right),
\end{equation}
which proves $P^{\leftrightarrow}_{f+c\mathbf{1}_N}=P^{\leftrightarrow}_{f}$.
\end{proof}

The proposition states a row-marginal property for the intermediate plan $R^f$ and a column-marginal property for the returned plan $P_f^{\leftrightarrow}$. It does not assert that $P_f^{\leftrightarrow}$ satisfies both marginals exactly. The row-ending transform used for teachers whose last Sinkhorn step normalizes rows returns $R^f$ instead.

\begin{proposition}
\label{prop:two_sided_row_error}
Under the notation of Proposition~\ref{prop:two_sided_transform}, set
\begin{equation}
\delta_f=\|g^+-g^0\|_\infty .
\end{equation}
Then the row error of the final column-marginal plan obeys
\begin{equation}
\left\|P_f^{\leftrightarrow}\mathbf{1}_N-\mu\right\|_1
\le
\left\|P_f^{\leftrightarrow}-R^f\right\|_1
\le
\exp(\delta_f/\varepsilon)-1.
\end{equation}
The matrix norm in the middle term is entrywise $\ell_1$.
\end{proposition}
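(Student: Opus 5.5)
The plan is to compare the two plans entrywise, since $P_f^{\leftrightarrow}$ and $R^f$ share the source dual $f^+$ and differ only in the key dual. Writing both in exponential form gives
\begin{equation}
P^{\leftrightarrow}_{f,ij}
=
R^f_{ij}\,\exp\!\left(\frac{g^+_j-g^0_j}{\varepsilon}\right).
\end{equation}
So the entire discrepancy is a columnwise multiplicative factor whose logarithm is bounded in absolute value by $\delta_f/\varepsilon$.

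\textbf{First inequality.} By Proposition~\ref{prop:two_sided_transform}, $R^f\mathbf{1}_N=\mu$. The first inequality is then the triangle inequality applied row by row:
\begin{equation}
\bigl|(P_f^{\leftrightarrow}\mathbf{1}_N)_i-\mu_i\bigr|
=
\Bigl|\sum_j\bigl(P^{\leftrightarrow}_{f,ij}-R^f_{ij}\bigr)\Bigr|
\le
\sum_j\bigl|P^{\leftrightarrow}_{f,ij}-R^f_{ij}\bigr|.
\end{equation}
Summing over $i$ gives $\|P_f^{\leftrightarrow}\mathbf{1}_N-\mu\|_1\le\|P_f^{\leftrightarrow}-R^f\|_1$.

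\textbf{Second inequality.} From the factorization,
\begin{equation}
\bigl|P^{\leftrightarrow}_{f,ij}-R^f_{ij}\bigr|
=
R^f_{ij}\,\bigl|e^{x_j}-1\bigr|,
\qquad
x_j=\frac{g^+_j-g^0_j}{\varepsilon},
\qquad
|x_j|\le\frac{\delta_f}{\varepsilon}.
\end{equation}
The elementary scalar step is $|e^{x}-1|\le e^{|x|}-1$. For $x\ge0$ this is an equality. For $x<0$ we have $1-e^{x}\le e^{-x}-1$, which is equivalent to $e^{x}+e^{-x}\ge2$.

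Because $R^f$ is nonnegative with total mass $\mathbf{1}_N^\top R^f\mathbf{1}_N=\mathbf{1}_N^\top\mu=1$, we obtain
\begin{equation}
\|P_f^{\leftrightarrow}-R^f\|_1
\le
\bigl(e^{\delta_f/\varepsilon}-1\bigr)\sum_{i,j}R^f_{ij}
=
e^{\delta_f/\varepsilon}-1 .
\end{equation}

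\textbf{Main obstacle.} The only delicate point is the normalization of $R^f$. The bound needs $R^f$ to have unit total mass. This is exactly the row-marginal property $R^f\mathbf{1}_N=\mu$ together with $\mathbf{1}_N^\top\mu=1$, both taken from Proposition~\ref{prop:two_sided_transform}. The remaining steps are routine.
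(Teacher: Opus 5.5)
Your proof is correct and follows the paper's argument essentially step for step. You use the row-summation contraction for the first inequality, the columnwise factor $\exp\bigl((g^+_j-g^0_j)/\varepsilon\bigr)$ relating $P_f^{\leftrightarrow}$ to $R^f$, and the uniform bound multiplied by $\sum_{i,j}R^f_{ij}=1$ for the second, additionally making explicit the scalar step $|e^{x}-1|\le e^{|x|}-1$ and why $R^f$ has unit mass, which the paper leaves implicit.
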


\begin{proof}
By Proposition~\ref{prop:two_sided_transform}, the intermediate plan satisfies $R^f\mathbf{1}_N=\mu$. Row summation cannot increase entrywise $\ell_1$ distance, so
\begin{equation}
\left\|P_f^{\leftrightarrow}\mathbf{1}_N-\mu\right\|_1
=
\left\|\bigl(P_f^{\leftrightarrow}-R^f\bigr)\mathbf{1}_N\right\|_1
\le
\left\|P_f^{\leftrightarrow}-R^f\right\|_1.
\end{equation}
Next compare the final plan with the intermediate plan. They are related by a columnwise factor:
\begin{equation}
P^{\leftrightarrow}_{f,ij}
=
R^f_{ij}
\exp\left(\frac{g^+_j-g^0_j}{\varepsilon}\right).
\end{equation}
Since $|g^+_j-g^0_j|\le\delta_f$, we have
\begin{equation}
\left|
\exp\left(\frac{g^+_j-g^0_j}{\varepsilon}\right)-1
\right|
\le
\exp(\delta_f/\varepsilon)-1.
\end{equation}
We multiply this uniform bound by $\sum_{i,j}R^f_{ij}=1$, which proves the claim.
\end{proof}

\medskip\noindent\textbf{Dual-error stability.}

\begin{proposition}
\label{prop:dual_stability}
Fix $C$, $\varepsilon > 0$, and a positive key marginal $\nu$. For $f,\widehat f \in \mathbb{R}^N$, define
\begin{equation}
g = T_K^\nu(f), \qquad \widehat g = T_K^\nu(\widehat f),
\end{equation}
and let $P_f$ and $P_{\widehat f}$ be the corresponding reconstructed plans. If
\begin{equation}
\|\widehat f - f\|_\infty \le \alpha,
\end{equation}
then
\begin{equation}
\|\widehat g - g\|_\infty \le \alpha
\end{equation}
and, for every $i,j$,
\begin{equation}
e^{-2\alpha/\varepsilon} P_{f,ij}
\le
P_{\widehat f,ij}
\le
e^{2\alpha/\varepsilon} P_{f,ij}.
\end{equation}
\end{proposition}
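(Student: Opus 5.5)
The argument works directly from the definition of the key-side $c$-transform in Equation~\ref{eq:key_c_transform}. It first shows that $T_K^\nu$ is monotone and commutes with constant shifts, which gives $\|\widehat g-g\|_\infty\le\alpha$. It then reads off the entrywise plan bound from the exponential form of the plan.

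\textbf{Step 1 (key-dual bound).} Write $\widehat f_i=f_i+\delta_i$ with $|\delta_i|\le\alpha$. For each key $j$, the quantity inside the logarithm in $T_K^\nu(\widehat f)_j$ is
\[
\sum_i \exp\!\left(\frac{-C_{ij}+f_i+\delta_i}{\varepsilon}\right).
\]
Since $-\alpha\le\delta_i\le\alpha$, this sum lies between $e^{-\alpha/\varepsilon}\sum_i \exp((-C_{ij}+f_i)/\varepsilon)$ and $e^{\alpha/\varepsilon}\sum_i \exp((-C_{ij}+f_i)/\varepsilon)$. Taking $-\varepsilon\log$ and adding $\varepsilon\log\nu_j$ gives
\[
g_j-\alpha\le\widehat g_j\le g_j+\alpha .
\]
Equivalently, $T_K^\nu$ is order-reversing, and by Proposition~\ref{prop:one_marginal} it maps $f\pm\alpha\mathbf{1}_N$ to $g\mp\alpha\mathbf{1}_N$. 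The bound holds for every $j$, so $\|\widehat g-g\|_\infty\le\alpha$.

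\textbf{Step 2 (plan bound).} Form the ratio of the two plans:
\[
\frac{P_{\widehat f,ij}}{P_{f,ij}}=\exp\!\left(\frac{(\widehat f_i-f_i)+(\widehat g_j-g_j)}{\varepsilon}\right).
\]
The cost terms cancel. By the hypothesis and Step 1, the exponent has absolute value at most $2\alpha/\varepsilon$. Both plans are positive, so multiplying the ratio by $P_{f,ij}$ gives the two-sided bound $e^{-2\alpha/\varepsilon}P_{f,ij}\le P_{\widehat f,ij}\le e^{2\alpha/\varepsilon}P_{f,ij}$.

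\textbf{Main obstacle.} Neither step is difficult; the only real content is the sandwich estimate in Step 1, that is, the monotonicity and shift-equivariance of log-sum-exp. One could try to tighten the factor $2\alpha$ by exploiting sign cancellation between $\widehat f_i-f_i$ and $\widehat g_j-g_j$. That is not needed for the stated bound, and the triangle inequality is enough.
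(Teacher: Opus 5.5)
Your proposal is correct and takes essentially the same approach as the paper. Your sandwich estimate in Step 1 directly proves the fact the paper cites, namely that log-sum-exp is $1$-Lipschitz in $\ell_\infty$, and your Step 2 ratio argument is the same as the paper's.
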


\begin{proof}
Let $\Delta f=\widehat f-f$. Fix a key $j$. The two key-side $c$-transforms change only through the vector inside one log-sum-exp. Log-sum-exp is $1$-Lipschitz in $\ell_\infty$, and the two input vectors are at most $\|\Delta f\|_\infty/\varepsilon$ apart. Therefore
\begin{equation}
|\widehat g_j - g_j|
\le
\varepsilon \cdot \frac{\|\Delta f\|_\infty}{\varepsilon}
=
\|\Delta f\|_\infty
\le
\alpha.
\end{equation}
This holds for every key, so the whole key dual moves by at most $\alpha$, and $\|\widehat g-g\|_\infty\le\alpha$.

Now compare the two plans entry by entry. Their ratio contains only the source-dual error and the induced key-dual error:
\begin{equation}
\frac{P_{\widehat f,ij}}{P_{f,ij}}
=
\exp\left(
\frac{(\widehat f_i-f_i)+(\widehat g_j-g_j)}{\varepsilon}
\right).
\end{equation}
Because $|\widehat f_i-f_i| \le \alpha$ and $|\widehat g_j-g_j| \le \alpha$, the exponent lies in the interval $[ -2\alpha/\varepsilon,\, 2\alpha/\varepsilon]$. Hence
\begin{equation}
e^{-2\alpha/\varepsilon}
\le
\frac{P_{\widehat f,ij}}{P_{f,ij}}
\le
e^{2\alpha/\varepsilon},
\end{equation}
which is equivalent to
\begin{equation}
e^{-2\alpha/\varepsilon} P_{f,ij}
\le
P_{\widehat f,ij}
\le
e^{2\alpha/\varepsilon} P_{f,ij}.
\end{equation}
\end{proof}

Because the reconstruction is invariant to additive shifts, the plan-ratio bound also holds with $\alpha$ replaced by $\inf_{c \in \mathbb{R}}\|\widehat f-f-c\mathbf{1}_N\|_\infty$. The key-dual bound itself is a statement about a fixed representative of the source dual. In the main construction, both teacher and predicted source duals are centered before fitting.

\begin{proposition}
\label{prop:two_sided_stability}
Let $P_f^{\leftrightarrow}$ and $P_{\widehat f}^{\leftrightarrow}$ be the two-sided plans from Proposition~\ref{prop:two_sided_transform}. If
\begin{equation}
\|\widehat f-f\|_\infty\le\alpha,
\end{equation}
then every dual produced by the two-sided transform moves by at most $\alpha$:
\begin{equation}
\|\widehat g^0-g^0\|_\infty\le\alpha,
\qquad
\|\widehat f^+-f^+\|_\infty\le\alpha,
\qquad
\|\widehat g^+-g^+\|_\infty\le\alpha.
\end{equation}
Consequently, for every $i,j$,
\begin{equation}
e^{-2\alpha/\varepsilon}P^{\leftrightarrow}_{f,ij}
\le
P^{\leftrightarrow}_{\widehat f,ij}
\le
e^{2\alpha/\varepsilon}P^{\leftrightarrow}_{f,ij}.
\end{equation}
\end{proposition}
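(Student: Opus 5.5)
The argument chains the one-step Lipschitz bound from Proposition~\ref{prop:dual_stability} through the three transforms, then repeats that proposition's entrywise ratio argument on the final plan. The basic observation is that both $c$-transforms are nonexpansive in $\ell_\infty$. For the key side this is exactly the first half of Proposition~\ref{prop:dual_stability}. For the source side we note that $T_Q^\mu(g)_i=\varepsilon\log\mu_i-\varepsilon\operatorname{LSE}_j\bigl((-C_{ij}+g_j)/\varepsilon\bigr)$. The map $x\mapsto\operatorname{LSE}(x)$ is $1$-Lipschitz in $\ell_\infty$, since it is monotone and $\operatorname{LSE}(x+c\mathbf{1})=\operatorname{LSE}(x)+c$. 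It follows that $\|T_Q^\mu(\widehat g)-T_Q^\mu(g)\|_\infty\le\|\widehat g-g\|_\infty$, with the cost term canceling exactly as in the key-side computation.

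With nonexpansiveness in hand, the three dual bounds follow by induction along the sequence in Proposition~\ref{prop:two_sided_transform}:
\begin{itemize}
\item From $\|\widehat f-f\|_\infty\le\alpha$ we get $\|\widehat g^0-g^0\|_\infty\le\alpha$ by Proposition~\ref{prop:dual_stability}.
\item Applying the source-side nonexpansiveness to $\widehat g^0,g^0$ gives $\|\widehat f^+-f^+\|_\infty\le\alpha$.
\item Applying Proposition~\ref{prop:dual_stability} again, now with source duals $\widehat f^+,f^+$, gives $\|\widehat g^+-g^+\|_\infty\le\alpha$.
\end{itemize}
The constant does not grow, because each step is a contraction in the weak sense rather than merely Lipschitz with a constant larger than one.

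For the plan bound, we write the entrywise ratio
\[
P^{\leftrightarrow}_{\widehat f,ij}/P^{\leftrightarrow}_{f,ij}=\exp\bigl(((\widehat f^+_i-f^+_i)+(\widehat g^+_j-g^+_j))/\varepsilon\bigr).
\]
The cost $C_{ij}$ cancels here, and only the final pair $(f^+,g^+)$ enters, not $f$ or $g^0$. The exponent therefore lies in $[-2\alpha/\varepsilon,2\alpha/\varepsilon]$, and multiplying through by $P^{\leftrightarrow}_{f,ij}>0$ gives the two-sided inequality.

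The only step that needs genuine care is the source-side nonexpansiveness, because it is not stated earlier in the paper. We would prove it directly by writing $\operatorname{LSE}(x+\delta)\le\operatorname{LSE}(x+\|\delta\|_\infty\mathbf{1})=\operatorname{LSE}(x)+\|\delta\|_\infty$, and obtaining the symmetric lower bound the same way. The rest is bookkeeping of the same type as the proof of Proposition~\ref{prop:dual_stability}.
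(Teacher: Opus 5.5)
Your proposal is correct and matches the paper's proof step for step. The paper likewise uses Proposition~\ref{prop:dual_stability} for $g^0$, the log-sum-exp $\ell_\infty$-Lipschitz argument for $T_Q^\mu$ to get $f^+$, Proposition~\ref{prop:dual_stability} again for $g^+$, and then bounds the entrywise ratio of the final plans, where only $(f^+,g^+)$ appear; your monotonicity-plus-translation justification of source-side nonexpansiveness just spells out the step the paper states in one line.
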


\begin{proof}
Proposition~\ref{prop:dual_stability} yields the key-side bound $\|\widehat g^0-g^0\|_\infty\le\alpha$. The same log-sum-exp Lipschitz argument applies to the source-side transform $T_Q^\mu$, so $\|\widehat f^+-f^+\|_\infty\le\|\widehat g^0-g^0\|_\infty\le\alpha$. We apply Proposition~\ref{prop:dual_stability} again to $f^+$ and $\widehat f^+$ and get $\|\widehat g^+-g^+\|_\infty\le\alpha$.

We now compare the two final plans:
\begin{equation}
\frac{P^{\leftrightarrow}_{\widehat f,ij}}{P^{\leftrightarrow}_{f,ij}}
=
\exp\left(
\frac{(\widehat f^+_i-f^+_i)+(\widehat g^+_j-g^+_j)}{\varepsilon}
\right).
\end{equation}
Both terms in the numerator are bounded in absolute value by $\alpha$, which proves the displayed multiplicative bound.
\end{proof}

The proposition compares outputs of the same two-sided map. When the finite teacher plan is defined by the one-sided final closure, default ASAP has an additional finite-correction bias term such as $\|P^{\leftrightarrow}_{f^{(T)}}-P^{(T)}\|$. ASAP-0 has zero such term under exact prediction in the column-ending final-closure case.

\begin{proposition}
\label{prop:kl_stability}
Fix $C$, $\varepsilon>0$, and a positive key marginal $\nu$ with $\sum_j\nu_j=1$. Let $P=P_f$ and $\widehat P=P_{\widehat f}$ be the key-normalized Gibbs plans induced by source duals $f$ and $\widehat f$ under the same cost and key marginal. For each key $j$, set $\pi_j(i)=P_{ij}/\nu_j$, $\widehat\pi_j(i)=\widehat P_{ij}/\nu_j$, and $\Delta_i=\widehat f_i-f_i$.
For $t\in[0,1]$, define the tilted column distribution
\begin{equation}
\pi_{j,t}(i)
=
\frac{\pi_j(i)\exp(t\Delta_i/\varepsilon)}
{\sum_r \pi_j(r)\exp(t\Delta_r/\varepsilon)}.
\end{equation}
The reverse KL identity is
\begin{equation}
D_{\mathrm{KL}}(\widehat P\|P)
=
\sum_j \nu_j
\int_0^1
t\,\operatorname{Var}_{\pi_{j,t}}(\Delta/\varepsilon)\,dt.
\end{equation}
The forward direction satisfies the companion identity
\begin{equation}
D_{\mathrm{KL}}(P\|\widehat P)
=
\sum_j \nu_j
\int_0^1
(1-t)\,\operatorname{Var}_{\pi_{j,t}}(\Delta/\varepsilon)\,dt.
\end{equation}
The two identities use two weights on the same interpolation; the common quadratic bound below comes from a uniform ceiling on the variance over $t$.
Consequently, if $\operatorname{Var}_{\pi_{j,t}}(\Delta)\le \sigma_j^2$ for all $t\in[0,1]$, then
\begin{equation}
D_{\mathrm{KL}}(\widehat P\|P),
\;
D_{\mathrm{KL}}(P\|\widehat P)
\le
\frac{1}{2\varepsilon^2}\sum_j\nu_j\sigma_j^2.
\end{equation}
If, additionally, $\pi_{j,t}(i)\le \beta/N$ for all $i,j,t$, then
\begin{equation}
D_{\mathrm{KL}}(\widehat P\|P),
\;
D_{\mathrm{KL}}(P\|\widehat P)
\le
\frac{\beta}{2\varepsilon^2 N}\|\widehat f-f\|_2^2.
\end{equation}
The constant $\beta$ measures column sharpness: $\beta=1$ corresponds to uniform column conditionals, while sharper column conditionals give larger $\beta$.
\end{proposition}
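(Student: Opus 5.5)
The plan is to reduce everything to a single key column, where both plans are exponential tilts of the same base distribution. By Proposition~\ref{prop:key_projection} (or directly from the key-side $c$-transform), $P_{ij}=\nu_j\pi_j(i)$ with $\pi_j(i)\propto\exp((-C_{ij}+f_i)/\varepsilon)$. Likewise $\widehat\pi_j(i)\propto\pi_j(i)\exp(\Delta_i/\varepsilon)$, so $\widehat\pi_j=\pi_{j,1}$ and $\pi_j=\pi_{j,0}$. Since both plans have the same column marginal $\nu$, the KL chain rule gives
\begin{equation}
D_{\mathrm{KL}}(\widehat P\|P)=\sum_j\nu_j D_{\mathrm{KL}}(\pi_{j,1}\|\pi_{j,0}),
\qquad
D_{\mathrm{KL}}(P\|\widehat P)=\sum_j\nu_j D_{\mathrm{KL}}(\pi_{j,0}\|\pi_{j,1}).
\end{equation}
It therefore suffices to prove two identities for a one-parameter exponential family on a finite set.

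Fix $j$ and write $x=\Delta/\varepsilon$ and $A(t)=\log\sum_r\pi_j(r)e^{tx_r}$. This gives $\pi_{j,t}(i)=\pi_j(i)e^{tx_i-A(t)}$. The standard facts are $A(0)=0$, $A'(t)=\mathbb{E}_{\pi_{j,t}}[x]$ and $A''(t)=\operatorname{Var}_{\pi_{j,t}}(x)$. A direct computation then gives
\begin{equation}
D_{\mathrm{KL}}(\pi_{j,1}\|\pi_{j,0})=A'(1)-A(1)+A(0),
\qquad
D_{\mathrm{KL}}(\pi_{j,0}\|\pi_{j,1})=A(1)-A(0)-A'(0).
\end{equation}
These are the two Taylor remainders of $A$, expanded at $t=1$ and at $t=0$. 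By the integral form of Taylor's theorem:
\begin{itemize}
\item $A(1)-A(0)-A'(0)=\int_0^1(1-t)A''(t)\,dt$, which is the forward identity;
\item $A(0)-A(1)+A'(1)=\int_0^1 tA''(t)\,dt$, which is the reverse identity.
\end{itemize}
Summing with weights $\nu_j$ gives both displayed identities. Here $\operatorname{Var}(\Delta/\varepsilon)=\varepsilon^{-2}\operatorname{Var}(\Delta)$.

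For the first bound, substitute $\operatorname{Var}_{\pi_{j,t}}(\Delta)\le\sigma_j^2$ and use $\int_0^1t\,dt=\int_0^1(1-t)\,dt=1/2$.

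For the second bound, bound the variance by the second moment. We have $\operatorname{Var}_{\pi_{j,t}}(\Delta)\le\sum_i\pi_{j,t}(i)\Delta_i^2\le(\beta/N)\|\Delta\|_2^2$, uniformly in $j$ and $t$. Taking $\sigma_j^2=(\beta/N)\|\widehat f-f\|_2^2$ and using $\sum_j\nu_j=1$ gives the claim.

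I expect no real obstacle. The only care points are the chain-rule reduction, which needs equal key marginals so that the marginal KL term vanishes, and the sign bookkeeping in identifying which Taylor remainder corresponds to which KL direction. I would verify the latter by checking $D_{\mathrm{KL}}(\pi_{j,1}\|\pi_{j,0})=\mathbb{E}_{\pi_{j,1}}[x]-A(1)$ explicitly.
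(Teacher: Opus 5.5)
Your proposal is correct and follows essentially the same route as the paper. The paper also writes each column conditional as an exponential tilt, takes the log-partition function $A_j(t)$ (whose second derivative is the tilted variance), identifies the column KLs as $A_j'(1)-A_j(1)$ and $A_j(1)-A_j'(0)$, converts them to the $t$- and $(1-t)$-weighted integrals by the same integration that you phrase as integral-form Taylor remainders, and then sums with weights $\nu_j$ and bounds the variance by the second moment exactly as you do.
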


\begin{proof}
For each key $j$, the key-side $c$-transform turns the new column conditional into an exponential tilt of the old one. In particular,
\begin{equation}
\widehat \pi_j(i)
=
\frac{\pi_j(i)\exp(\Delta_i/\varepsilon)}
{\sum_r \pi_j(r)\exp(\Delta_r/\varepsilon)}.
\end{equation}
Let
\begin{equation}
A_j(t)=\log\sum_i \pi_j(i)\exp(t\Delta_i/\varepsilon).
\end{equation}
This is the log-partition function along the interpolation. Its first derivative is the tilted mean of $\Delta/\varepsilon$, and its second derivative is the tilted variance:
\begin{equation}
A_j'(t)=\sum_i \pi_{j,t}(i)\Delta_i/\varepsilon,
\qquad
A_j''(t)=\operatorname{Var}_{\pi_{j,t}}(\Delta/\varepsilon).
\end{equation}
At $t=1$, the interpolation reaches $\widehat\pi_j$. We can write the reverse column KL as
\begin{equation}
D_{\mathrm{KL}}(\widehat \pi_j\|\pi_j)
=
\sum_i \pi_{j,1}(i)\left(\Delta_i/\varepsilon-A_j(1)\right)
=
A_j'(1)-A_j(1).
\end{equation}
Since $A_j(0)=0$, this becomes
\begin{equation}
A_j'(1)-A_j(1)
=
A_j'(1)-\int_0^1 A_j'(s)\,ds
=
\int_0^1 t A_j''(t)\,dt.
\end{equation}
The forward column KL uses the same interpolation, with the weight reversed:
\begin{equation}
D_{\mathrm{KL}}(\pi_j\|\widehat \pi_j)
=
\sum_i \pi_j(i)\left(A_j(1)-\Delta_i/\varepsilon\right)
=
A_j(1)-A_j'(0).
\end{equation}
Since $A_j(1)=\int_0^1 A_j'(s)\,ds$, we obtain
\begin{equation}
A_j(1)-A_j'(0)
=
\int_0^1 (1-t)A_j''(t)\,dt.
\end{equation}
Returning to plan scale, $P_{ij}=\nu_j\pi_j(i)$ and $\widehat P_{ij}=\nu_j\widehat\pi_j(i)$. Therefore $D_{\mathrm{KL}}(\widehat P\|P)=\sum_j\nu_jD_{\mathrm{KL}}(\widehat\pi_j\|\pi_j)$ and $D_{\mathrm{KL}}(P\|\widehat P)=\sum_j\nu_jD_{\mathrm{KL}}(\pi_j\|\widehat\pi_j)$. Summing over columns proves both identities. The first bounds follow from the variance ceiling and the identities $\int_0^1 t\,dt=\int_0^1(1-t)\,dt=1/2$. For the last bounds, the density condition implies
\begin{equation}
\operatorname{Var}_{\pi_{j,t}}(\Delta)
\le
\sum_i \pi_{j,t}(i)\Delta_i^2
\le
\frac{\beta}{N}\|\Delta\|_2^2,
\end{equation}
and $\sum_j\nu_j=1$. We plug this estimate into the previous bound to finish the proof.
\end{proof}

\noindent\textbf{One-sided KL calibration.}
The next propositions concern the key-normalized plan $\mathcal{C}_\nu(f;C)$. They justify the convex ASAP-KL calibration surrogate for the one-sided plan; they do not claim convexity of the full two-sided ASAP inference map.

\begin{proposition}
\label{prop:asap_kl_stationarity}
For fit examples $m=1,\ldots,M$, let
\begin{equation}
\label{eq:asap_kl_scalar}
F_m(f)
=
\varepsilon\sum_j\nu_j
\log\sum_i\exp\left(\frac{-C^{(m)}_{ij}+f_i}{\varepsilon}\right),
\qquad
\widehat P_{\omega}^{0,(m)}
=
\mathcal{C}_{\nu}(X_m\omega;C^{(m)}),
\end{equation}
and let $r^{(T,m)}=P^{(T,m)}\mathbf{1}_N$. Assume $P^{(T,m)\top}\mathbf{1}_N=\nu$ for each fit example. For $\lambda>0$, define
\begin{equation}
\label{eq:asap_kl}
J_{\mathrm{KL}}(\omega)
=
\sum_{m=1}^M
\left[
F_m(X_m\omega)-\langle r^{(T,m)},X_m\omega\rangle
\right]
+\frac{\lambda}{2}\|\omega\|_2^2.
\end{equation}
The objective $J_{\mathrm{KL}}$ is strongly convex and has a unique minimizer $\omega^\star$. This minimizer satisfies
\begin{equation}
\sum_{m=1}^M
X_m^\top
\left(
\widehat P_{\omega^\star}^{0,(m)}\mathbf{1}_N-r^{(T,m)}
\right)
=
-\lambda\omega^\star.
\end{equation}
Moreover, for $f_m=X_m\omega$ and $\pi^{(m)}_j(i)=\mathcal{C}_{\nu}(f_m;C^{(m)})_{ij}/\nu_j$,
\begin{equation}
\nabla_\omega^2 J_{\mathrm{KL}}(\omega)
=
\sum_{m=1}^M
X_m^\top
\left[
\frac{1}{\varepsilon}
\sum_j \nu_j
\left(
\operatorname{Diag}(\pi^{(m)}_j)
-\pi^{(m)}_j(\pi^{(m)}_j)^\top
\right)
\right]
X_m
+\lambda I_L
\succeq
\lambda I_L.
\end{equation}
\end{proposition}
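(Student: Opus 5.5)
The plan is to compute the gradient and Hessian of each summand $F_m(X_m\omega)$ by the chain rule, then read off strong convexity, existence and uniqueness, and stationarity from these formulas. The work splits into three steps.

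\textbf{Step 1: derivatives of $F_m$ with respect to $f$.}
Fix $m$ and a key $j$, and set
\[
\ell_j(f)=\varepsilon\log\sum_i\exp\bigl((-C^{(m)}_{ij}+f_i)/\varepsilon\bigr).
\]
Its gradient is the softmax vector $\pi^{(m)}_j$, whose entries are $\exp((-C^{(m)}_{ij}+f_i)/\varepsilon)$ divided by the column partition function. Proposition~\ref{prop:key_projection} gives $\mathcal{C}_\nu(f;C^{(m)})_{ij}=\nu_j\pi^{(m)}_j(i)$, so $\pi^{(m)}_j$ coincides with the column conditional defined in the statement. Summing with weights $\nu_j$ gives
\[
\nabla_f F_m(f)=\sum_j\nu_j\pi^{(m)}_j=\mathcal{C}_\nu(f;C^{(m)})\mathbf{1}_N,
\]
which is the row-sum vector of the key-normalized plan. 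Differentiating the softmax once more gives the Hessian of $\ell_j$ as $\varepsilon^{-1}\bigl(\operatorname{Diag}(\pi_j)-\pi_j\pi_j^\top\bigr)$. This matrix is positive semidefinite because it is the covariance matrix of a categorical distribution: for any $v$, $v^\top(\cdot)v=\varepsilon^{-1}\operatorname{Var}_{\pi_j}(v)\ge0$.

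\textbf{Step 2: chain rule in $\omega$.}
Take $f_m=X_m\omega$. The linear term $-\langle r^{(T,m)},X_m\omega\rangle$ has gradient $-X_m^\top r^{(T,m)}$ and zero Hessian, and the ridge term contributes $\lambda\omega$ and $\lambda I_L$. This yields
\[
\nabla_\omega J_{\mathrm{KL}}=\sum_m X_m^\top\bigl(\widehat P^{0,(m)}_\omega\mathbf{1}_N-r^{(T,m)}\bigr)+\lambda\omega,
\]
and the displayed Hessian. The bound $\succeq\lambda I_L$ follows because each congruence $X_m^\top(\cdot)X_m$ of a PSD matrix is PSD.

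\textbf{Step 3: existence, uniqueness, stationarity.}
$J_{\mathrm{KL}}$ is $C^2$ with Hessian $\succeq\lambda I_L$, so it is $\lambda$-strongly convex. A strongly convex function is coercive, hence attains its minimum, and the minimizer is unique. At this unique minimizer $\omega^\star$ the gradient vanishes, and setting the Step~2 gradient to zero rearranges to the stated identity.

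I would remark that the assumption $P^{(T,m)\top}\mathbf{1}_N=\nu$ is not needed for convexity. Its role is interpretive: under it, $J_{\mathrm{KL}}$ differs from $\sum_m D_{\mathrm{KL}}\bigl(P^{(T,m)}\,\|\,\widehat P^{0,(m)}_\omega\bigr)$ only by an $\omega$-independent constant. To see this, expand $\log\widehat P_{ij}=(-C_{ij}+f_i+T_K^\nu(f)_j)/\varepsilon$ and use the column constraint on $P^{(T,m)}$ to collapse the $T_K^\nu$ term to $-F_m/\varepsilon$ plus a constant.

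The main obstacle is bookkeeping rather than analysis. One must check that the gradient of $F_m$ exactly equals the row sums of $\mathcal{C}_\nu$, with no stray factor of $\varepsilon$; here the $\varepsilon$ in front of the log cancels the $1/\varepsilon$ from the chain rule. One must also keep the $1/\varepsilon$ in the Hessian consistent with the statement's normalization.
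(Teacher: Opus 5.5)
Your proposal is correct and follows essentially the paper's proof. It identifies $\nabla_f F_m$ as the row masses of $\mathcal{C}_\nu(f;C^{(m)})$, writes the Hessian as a $\nu$-weighted sum of categorical covariance matrices, applies the chain rule through $f_m=X_m\omega$, and gets strong convexity from the ridge term. The only differences are minor: you read strong convexity off the Hessian bound and spell out coercivity for existence, where the paper cites convexity of log-sum-exp under a linear map.

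One small slip in your interpretive side remark, which does not affect the proof: under the column assumption the identity is $J_{\mathrm{KL}}(\omega)=\varepsilon\sum_m D_{\mathrm{KL}}(P^{(T,m)}\|\widehat P^{0,(m)}_\omega)+\frac{\lambda}{2}\|\omega\|_2^2+\mathrm{const}$. So both the factor $\varepsilon$ and the ridge term should appear.
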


\begin{proof}
The convexity comes from the log-sum-exp terms. Each $F_m$ is composed with the linear map $\omega\mapsto X_m\omega$, so $F_m(X_m\omega)$ is convex in $\omega$. The term $-\langle r^{(T,m)},X_m\omega\rangle$ is affine, and $\lambda\|\omega\|_2^2/2$ is $\lambda$-strongly convex. Therefore $J_{\mathrm{KL}}$ is strongly convex and has a unique minimizer.

We next compute the gradient. For a vector $f$, the derivative of $F_m$ with respect to the source coordinate $f_i$ is
\begin{equation}
\frac{\partial F_m}{\partial f_i}(f)
=
\sum_j
\nu_j
\frac{
\exp((-C^{(m)}_{ij}+f_i)/\varepsilon)
}{
\sum_r\exp((-C^{(m)}_{rj}+f_r)/\varepsilon)
}.
\end{equation}
The right-hand side is the $i$th row mass of $\mathcal{C}_{\nu}(f;C^{(m)})$. At $f=X_m\omega$, the chain rule yields
\begin{equation}
\nabla_\omega J_{\mathrm{KL}}(\omega)
=
\sum_{m=1}^M
X_m^\top
\left(
\widehat P_{\omega}^{0,(m)}\mathbf{1}_N-r^{(T,m)}
\right)
+\lambda\omega.
\end{equation}
At the unique minimizer this gradient is zero, which proves the displayed identity. We take one more derivative of the row-mass expression. For $f=X_m\omega$,
\begin{equation}
\nabla_f^2 F_m(f)
=
\frac{1}{\varepsilon}
\sum_j\nu_j
\left(
\operatorname{Diag}(\pi^{(m)}_j)
-\pi^{(m)}_j(\pi^{(m)}_j)^\top
\right).
\end{equation}
Each matrix inside the sum is a covariance matrix under the column conditional $\pi^{(m)}_j$, so it is positive semidefinite. The chain rule yields the displayed Hessian for $J_{\mathrm{KL}}$, and the ridge term leaves the whole Hessian bounded below by $\lambda I_L$.
\end{proof}

\begin{proposition}
\label{prop:ls_kl_surrogate}
For each fit example $m$, let $y_m$ be the centered teacher source dual and let $P^{(T,m)}=\mathcal{C}_{\nu}(y_m;C^{(m)})$. For any coefficient vector $\omega$, set $\widehat f_m=X_m\omega$ and $\widehat P_{\omega}^{0,(m)}=\mathcal{C}_{\nu}(\widehat f_m;C^{(m)})$. Assume the density condition in Proposition~\ref{prop:kl_stability} holds with the same constant $\beta$ for each pair $(\widehat P_{\omega}^{0,(m)},P^{(T,m)})$. Under this assumption, we have
\begin{equation}
\frac{1}{M}
\sum_{m=1}^M
D_{\mathrm{KL}}\!\left(\widehat P_{\omega}^{0,(m)}\|P^{(T,m)}\right)
\le
\frac{\beta}{2\varepsilon^2 M}
\sum_{m=1}^M
\|X_m\omega-y_m\|_N^2.
\end{equation}
Thus, conditional on the density assumption above, the ridge least-squares fit used by ASAP-LS minimizes an empirical quadratic upper bound on the average reverse teacher-plan KL over the chosen sliced-dual feature span. The minimizer of this surrogate need not coincide with the minimizer of the reverse plan KL itself; ASAP-KL in Proposition~\ref{prop:asap_kl_stationarity} minimizes the forward KL for the one-sided key-normalized plan through the same source-dual parameterization.
\end{proposition}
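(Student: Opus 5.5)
The plan is to apply Proposition~\ref{prop:kl_stability} once per fit example, with $f=y_m$ and $\widehat f=\widehat f_m=X_m\omega$, and then average over $m$. Both plans are key-normalized Gibbs plans $\mathcal{C}_\nu(\cdot;C^{(m)})$ under the same cost $C^{(m)}$ and key marginal $\nu$. So the column conditionals of $\widehat P^{0,(m)}_\omega$ are exponential tilts of those of $P^{(T,m)}$ by $\Delta^{(m)}=X_m\omega-y_m$, exactly as in that proposition. The reverse-KL identity then gives
\begin{equation}
D_{\mathrm{KL}}\!\left(\widehat P^{0,(m)}_\omega\|P^{(T,m)}\right)=\sum_j\nu_j\int_0^1 t\,\operatorname{Var}_{\pi^{(m)}_{j,t}}(\Delta^{(m)}/\varepsilon)\,dt .
\end{equation}

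Next, I will invoke the assumed density condition $\pi^{(m)}_{j,t}(i)\le\beta/N$ for all $i,j,t$. Bounding the variance by the second moment gives $\operatorname{Var}_{\pi^{(m)}_{j,t}}(\Delta^{(m)})\le\frac{\beta}{N}\|\Delta^{(m)}\|_2^2$. Using $\int_0^1t\,dt=1/2$ and $\sum_j\nu_j=1$, this yields the per-example bound $\frac{\beta}{2\varepsilon^2N}\|X_m\omega-y_m\|_2^2$.

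I will read $\|z\|_N^2$ as the normalized norm $\frac1N\|z\|_2^2$, which matches the $\lambda=MN\lambda_{\mathrm{risk}}$ normalization mentioned in the main text. Under that reading the per-example bound is exactly $\frac{\beta}{2\varepsilon^2}\|X_m\omega-y_m\|_N^2$. Averaging over $m$ then gives the displayed inequality.

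For the interpretive claim, note that the right-hand side, plus the normalized ridge term, is a positive multiple of the ASAP-LS objective in Equation~\ref{eq:asap_ls}. Hence $\omega^{\mathrm{LS}}$ minimizes this upper bound over the feature span.

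Two small points need care, and the main obstacle is bookkeeping rather than analysis. First, the bound must hold for the plan, not for a particular representative of the dual. The prediction used online is $\Pi_0(X_m\omega)$, while the bound involves $X_m\omega$. By Proposition~\ref{prop:one_marginal}, constant shifts do not change the plan. Also, since $y_m$ is centered, $\|\Pi_0(X_m\omega)-y_m\|_2\le\|X_m\omega-y_m\|_2$. So the inequality holds whichever representative is used. Second, I need to check that the density assumption is stated along the whole tilt path $t\in[0,1]$ and not only at the endpoints, since that is what the proof of Proposition~\ref{prop:kl_stability} actually uses. The statement's "same constant $\beta$ for each pair" should be read this way.
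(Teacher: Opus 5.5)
Your proposal is correct and matches the paper's proof. Both apply the final bound of Proposition~\ref{prop:kl_stability} to each example with $f=y_m$ and $\widehat f=X_m\omega$, rewrite $N^{-1}\|\cdot\|_2^2$ as $\|\cdot\|_N^2$, and average over $m$. Your extra bookkeeping is harmless but unnecessary: the columns of $X_m$ are already centered sliced potentials, so $\Pi_0(X_m\omega)=X_m\omega$, and the density hypothesis in Proposition~\ref{prop:kl_stability} is already stated for all $t\in[0,1]$.
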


\begin{proof}
Fix a calibration example $m$ and apply the last bound in Proposition~\ref{prop:kl_stability} with $f=y_m$ and $\widehat f=X_m\omega$. Since $\|u\|_N^2=N^{-1}\|u\|_2^2$, we have
\begin{equation}
D_{\mathrm{KL}}\!\left(\widehat P_{\omega}^{0,(m)}\|P^{(T,m)}\right)
\le
\frac{\beta}{2\varepsilon^2}
\|X_m\omega-y_m\|_N^2.
\end{equation}
We apply the same bound on every calibration example and average.
\end{proof}

\medskip\noindent\textbf{A conditional random-feature bound.}

Proposition~\ref{prop:kl_stability} shows how source-dual error controls the key-normalized Gibbs plan. We also need a modest statement about the sliced features themselves. We state the next result as an oracle bound. It assumes the centered teacher dual is already close to a square-integrable average of sliced potentials, then shows that independent slices recover that average at the usual Monte Carlo rate; it does not claim that arbitrary Sinkhorn teacher duals can be approximated by finitely many slices.

\begin{proposition}
\label{prop:rf_oracle}
Let $Z=(Q,K)$ be drawn from a calibration distribution, and let $y(Z)\in\mathbb{R}^N$ be the centered teacher source dual. Write $\|u\|_N^2=N^{-1}\|u\|_2^2$. For each slice direction $\theta\sim\sigma$, let $\varphi_\theta(Z)\in\mathbb{R}^N$ be the centered sliced-potential feature. Assume that
\begin{equation}
y(Z)
=
\mathbb{E}_{\theta\sim\sigma}\!\left[a(\theta)\varphi_\theta(Z)\right]
+r(Z),
\end{equation}
where
\begin{equation}
\mathbb{E}_{\theta\sim\sigma}a(\theta)^2\le A^2,
\qquad
\|\varphi_\theta(Z)\|_N\le B,
\qquad
\mathbb{E}_{Z}\|r(Z)\|_N^2\le \eta^2 .
\end{equation}
Draw $\theta_1,\ldots,\theta_L$ independently from $\sigma$, and define the oracle coefficients $\omega^\star_\ell=a(\theta_\ell)/L$. If
\begin{equation}
R_\Theta(\omega)
=
\mathbb{E}_{Z}
\left\|
\sum_{\ell=1}^L \omega_\ell\varphi_{\theta_\ell}(Z)-y(Z)
\right\|_N^2,
\end{equation}
then
\begin{equation}
\mathbb{E}_{\Theta}R_\Theta(\omega^\star)
\le
\eta^2+\frac{A^2B^2}{L}.
\end{equation}
If, in addition, the density condition in Proposition~\ref{prop:kl_stability} holds with constant $\beta$ for the key-normalized Gibbs plans induced by $y(Z)$ and by $\sum_\ell\omega^\star_\ell\varphi_{\theta_\ell}(Z)$, then
\begin{equation}
\mathbb{E}_{\Theta,Z}
D_{\mathrm{KL}}\!\left(\widehat P_{\omega^\star}(Z)\,\|\,P^{(T)}(Z)\right)
\le
\frac{\beta}{2\varepsilon^2}
\left(
\eta^2+\frac{A^2B^2}{L}
\right).
\end{equation}
\end{proposition}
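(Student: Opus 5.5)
The plan is a bias--variance decomposition of the Monte Carlo estimate of the integral representation, followed by a direct appeal to Proposition~\ref{prop:kl_stability}. Write $\bar y(Z)=\mathbb{E}_{\theta\sim\sigma}[a(\theta)\varphi_\theta(Z)]$, so that $y(Z)=\bar y(Z)+r(Z)$. With the oracle coefficients, the predictor is $\widehat y_\Theta(Z)=\frac1L\sum_{\ell}a(\theta_\ell)\varphi_{\theta_\ell}(Z)$. This is an empirical average of $L$ i.i.d.\ copies of the random vector $a(\theta)\varphi_\theta(Z)$, whose mean is $\bar y(Z)$. Hence
\begin{equation*}
\widehat y_\Theta(Z)-y(Z)=\bigl(\widehat y_\Theta(Z)-\bar y(Z)\bigr)-r(Z).
\end{equation*}

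Step one is to fix $Z$ and take the expectation over $\Theta$ first, using Fubini to swap $\mathbb{E}_\Theta$ and $\mathbb{E}_Z$; the integrability needed for this follows from the assumed bounds. Expanding $\|\cdot\|_N^2$, the cross term is $-2\langle \mathbb{E}_\Theta[\widehat y_\Theta(Z)-\bar y(Z)],r(Z)\rangle_N$, and it vanishes because $\widehat y_\Theta$ is unbiased for $\bar y$ with $r(Z)$ fixed. The variance term is bounded by independence of the slices:
\begin{equation*}
\mathbb{E}_\Theta\|\widehat y_\Theta-\bar y\|_N^2=\frac1L\operatorname{Var}\le\frac1L\,\mathbb{E}_\theta\bigl[a(\theta)^2\|\varphi_\theta(Z)\|_N^2\bigr]\le\frac{A^2B^2}{L}.
\end{equation*}
Averaging over $Z$ and adding $\mathbb{E}_Z\|r\|_N^2\le\eta^2$ then gives the first claim.

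Step two handles the KL claim. Both $y(Z)$ and the oracle predictor are centered, because each $\varphi_\theta$ is centered. For each fixed $(\Theta,Z)$, I would therefore apply the last bound of Proposition~\ref{prop:kl_stability} with $f=y(Z)$ and $\widehat f=\widehat y_\Theta(Z)$, under the stated density assumption. This gives $D_{\mathrm{KL}}\le\frac{\beta}{2\varepsilon^2}\|\widehat y_\Theta-y\|_N^2$, after converting $N^{-1}\|\cdot\|_2^2$ to $\|\cdot\|_N^2$, exactly as in Proposition~\ref{prop:ls_kl_surrogate}. Taking $\mathbb{E}_{\Theta,Z}$ and inserting the first bound finishes the argument.

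The main obstacle is only bookkeeping rather than analysis, in two places. First, the cross term must vanish, which requires conditioning on $Z$ so that $r(Z)$ is independent of $\Theta$. Second, one must note that $P^{(T)}(Z)$ is the key-normalized plan $\mathcal{C}_\nu(y(Z);C)$, so that Proposition~\ref{prop:kl_stability} applies verbatim; this matches the convention used in Proposition~\ref{prop:ls_kl_surrogate}.
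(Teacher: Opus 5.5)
Your proposal is correct and follows essentially the same route as the paper. Both arguments fix $Z$ and split the Monte Carlo average $\frac{1}{L}\sum_\ell a(\theta_\ell)\varphi_{\theta_\ell}(Z)$ around its population mean: the cross term with $r(Z)$ vanishes by unbiasedness, and the variance is bounded by the second moment $A^2B^2/L$. Both then apply the last bound of Proposition~\ref{prop:kl_stability} pointwise with $f=y(Z)$ and average over $(\Theta,Z)$. Your remarks on centering and on identifying $P^{(T)}(Z)$ with $\mathcal{C}_\nu(y(Z);C)$ are harmless bookkeeping that the paper leaves implicit.
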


\begin{proof}
Let
\begin{equation}
h(Z)=\mathbb{E}_{\theta\sim\sigma}\!\left[a(\theta)\varphi_\theta(Z)\right],
\qquad
h_L(Z)=\frac{1}{L}\sum_{\ell=1}^L a(\theta_\ell)\varphi_{\theta_\ell}(Z).
\end{equation}
Here $h$ is the population sliced-potential average and $h_L$ is its Monte Carlo approximation from the sampled slices. For a fixed input $Z$, write
\begin{equation}
\xi_\ell(Z)=a(\theta_\ell)\varphi_{\theta_\ell}(Z)-h(Z).
\end{equation}
The vectors $\xi_1(Z),\ldots,\xi_L(Z)$ are independent and have mean zero. The slice noise therefore has mean zero, so the mixed terms disappear in expectation and
\begin{equation}
\mathbb{E}_{\Theta}\|h_L(Z)-h(Z)\|_N^2
=
\frac{1}{L}
\mathbb{E}_{\theta\sim\sigma}
\left\|
a(\theta)\varphi_\theta(Z)-h(Z)
\right\|_N^2.
\end{equation}
The last term is bounded by the second moment:
\begin{equation}
\mathbb{E}_{\Theta}\|h_L(Z)-h(Z)\|_N^2
\le
\frac{1}{L}
\mathbb{E}_{\theta\sim\sigma}
a(\theta)^2\|\varphi_\theta(Z)\|_N^2
\le
\frac{A^2B^2}{L}.
\end{equation}
Since $y(Z)=h(Z)+r(Z)$ and $\mathbb{E}_{\Theta}[h_L(Z)-h(Z)]=0$, the cross term with $r(Z)$ also vanishes. We have
\begin{equation}
\mathbb{E}_{\Theta}
\|h_L(Z)-y(Z)\|_N^2
=
\mathbb{E}_{\Theta}
\|h_L(Z)-h(Z)\|_N^2
+\|r(Z)\|_N^2.
\end{equation}
Averaging over $Z$ proves the displayed bound on $\mathbb{E}_\Theta R_\Theta(\omega^\star)$.

We now prove the KL bound. For each $Z$, apply Proposition~\ref{prop:kl_stability} with
\begin{equation}
\widehat f(Z)=h_L(Z),
\qquad
f(Z)=y(Z).
\end{equation}
Since $\|\widehat f(Z)-f(Z)\|_N^2=N^{-1}\|\widehat f(Z)-f(Z)\|_2^2$, we obtain
\begin{equation}
D_{\mathrm{KL}}\!\left(\widehat P_{\omega^\star}(Z)\,\|\,P^{(T)}(Z)\right)
\le
\frac{\beta}{2\varepsilon^2}
\|\widehat f(Z)-f(Z)\|_N^2.
\end{equation}
After averaging over $Z$ and the slice draw, the source-dual error bound above proves the KL statement.
\end{proof}

\begin{proposition}
\label{prop:ls_empirical_oracle}
Let $Z_1,\ldots,Z_M$ be calibration examples, let $X_m=X_\Theta(Z_m)$ be the sliced-feature matrix for example $m$, and let $y_m=y(Z_m)$ be the centered teacher source dual. Define
\begin{equation}
R_M(\omega)
=
\frac{1}{M}
\sum_{m=1}^M
\|X_m\omega-y_m\|_N^2 .
\end{equation}
For $\lambda_{\mathrm{risk}}\ge0$, let $\widehat\omega_{\mathrm{LS}}$ be any minimizer of
\begin{equation}
R_M(\omega)+\lambda_{\mathrm{risk}}\|\omega\|_2^2 .
\end{equation}
For every comparator $\omega^\circ\in\mathbb{R}^L$, we have
\begin{equation}
R_M(\widehat\omega_{\mathrm{LS}})
\le
R_M(\omega^\circ)+\lambda_{\mathrm{risk}}\|\omega^\circ\|_2^2 .
\end{equation}
Now draw the slices as in Proposition~\ref{prop:rf_oracle} and set $\omega^\circ_\ell=a(\theta_\ell)/L$. Under the assumptions of Proposition~\ref{prop:rf_oracle},
\begin{equation}
\mathbb{E}_{\Theta,Z_{1:M}}
R_M(\widehat\omega_{\mathrm{LS}})
\le
\eta^2+\frac{A^2B^2}{L}+\lambda_{\mathrm{risk}}\frac{A^2}{L}.
\end{equation}
If the density condition in Proposition~\ref{prop:kl_stability} holds with constant $\beta$ for the corresponding key-normalized Gibbs plans, then
\begin{equation}
\mathbb{E}_{\Theta,Z_{1:M}}
\frac{1}{M}
\sum_{m=1}^M
D_{\mathrm{KL}}\!\left(
\widehat P_{\widehat\omega_{\mathrm{LS}}}^{(m)}
\|P^{(T,m)}
\right)
\le
\frac{\beta}{2\varepsilon^2}
\left(
\eta^2+\frac{A^2B^2}{L}+\lambda_{\mathrm{risk}}\frac{A^2}{L}
\right).
\end{equation}
The proposition controls the empirical calibration objective optimized by ASAP-LS. It is not a test-distribution generalization bound.
\end{proposition}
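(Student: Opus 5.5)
The argument has three parts: a deterministic comparator inequality, its expectation under the random slice draw, and a pass to the KL bound.

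\textbf{Comparator inequality.} Since $\widehat\omega_{\mathrm{LS}}$ minimizes $R_M(\omega)+\lambda_{\mathrm{risk}}\|\omega\|_2^2$, we have
\[
R_M(\widehat\omega_{\mathrm{LS}})\le R_M(\widehat\omega_{\mathrm{LS}})+\lambda_{\mathrm{risk}}\|\widehat\omega_{\mathrm{LS}}\|_2^2\le R_M(\omega^\circ)+\lambda_{\mathrm{risk}}\|\omega^\circ\|_2^2
\]
for any $\omega^\circ$. The first step only drops a nonnegative term, since $\lambda_{\mathrm{risk}}\ge0$. No uniqueness of the minimizer is needed.

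\textbf{Expectation with the oracle comparator.} Take $\omega^\circ_\ell=a(\theta_\ell)/L$ and take expectations over the slices and the calibration sample.

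For the penalty, $\|\omega^\circ\|_2^2=L^{-2}\sum_\ell a(\theta_\ell)^2$. Each $\theta_\ell\sim\sigma$, so the expectation is at most $L^{-2}\cdot L\cdot A^2=A^2/L$.

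For the empirical risk, $\mathbb{E}\,R_M(\omega^\circ)=M^{-1}\sum_m\mathbb{E}_{\Theta,Z_m}\|X_m\omega^\circ-y_m\|_N^2$. I will treat the $Z_m$ as draws from the calibration distribution, with $\Theta$ drawn independently of them. Then each summand equals $\mathbb{E}_\Theta R_\Theta(\omega^\circ)$ by Fubini, and Proposition~\ref{prop:rf_oracle} bounds it by $\eta^2+A^2B^2/L$.

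The main point to check is this identification. The comparator $\omega^\circ$ depends only on $\Theta$, not on the sample, so conditioning on $Z_m$ and averaging over $\Theta$ is legitimate. The proof of Proposition~\ref{prop:rf_oracle} is pointwise in $Z$ before averaging: the cross term vanishes because $\mathbb{E}_\Theta[h_L(Z)-h(Z)]=0$. It therefore applies to each $Z_m$ separately, and no independence across the $Z_m$ is needed. Combining the two pieces gives the second displayed bound.

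\textbf{KL bound.} For each $m$, apply the last bound of Proposition~\ref{prop:kl_stability} with $f=y_m$ and $\widehat f=X_m\widehat\omega_{\mathrm{LS}}$, as in Proposition~\ref{prop:ls_kl_surrogate}. Using the density constant $\beta$, this gives
\[
D_{\mathrm{KL}}\bigl(\widehat P^{(m)}\|P^{(T,m)}\bigr)\le\frac{\beta}{2\varepsilon^2}\|X_m\widehat\omega_{\mathrm{LS}}-y_m\|_N^2.
\]
The plans here are built from uncentered or centered representatives, and by shift invariance (Proposition~\ref{prop:one_marginal}) this does not matter.

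Averaging over $m$ bounds the average KL by $\frac{\beta}{2\varepsilon^2}R_M(\widehat\omega_{\mathrm{LS}})$. Taking expectations and inserting the previous bound finishes the proof.
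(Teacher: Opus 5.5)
Your proposal is correct and follows the paper's proof essentially step for step. You use the same ridge-dropping comparator inequality, the same oracle comparator $\omega^\circ_\ell=a(\theta_\ell)/L$ together with the Monte Carlo calculation of Proposition~\ref{prop:rf_oracle} and the bound $\mathbb{E}_\Theta\|\omega^\circ\|_2^2\le A^2/L$, and then the same KL step through Proposition~\ref{prop:ls_kl_surrogate}. Your additional remarks are useful: $\Theta$ must be drawn independently of the $Z_m$, each $Z_m$ must have the calibration marginal, and no independence across the $Z_m$ is needed. The paper leaves these points implicit under ``the same calculation.''
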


\begin{proof}
By optimality of $\widehat\omega_{\mathrm{LS}}$, its regularized empirical risk is no larger than that of any comparator. Therefore, for every $\omega^\circ$ we have
\begin{equation}
R_M(\widehat\omega_{\mathrm{LS}})
+
\lambda_{\mathrm{risk}}\|\widehat\omega_{\mathrm{LS}}\|_2^2
\le
R_M(\omega^\circ)+\lambda_{\mathrm{risk}}\|\omega^\circ\|_2^2 .
\end{equation}
The left ridge term can only make the left side larger, so removing it keeps the inequality valid and proves the first claim.

Now choose the random-feature oracle comparator $\omega^\circ_\ell=a(\theta_\ell)/L$. The same calculation as in Proposition~\ref{prop:rf_oracle} shows
\begin{equation}
\mathbb{E}_{\Theta,Z_{1:M}}R_M(\omega^\circ)
\le
\eta^2+\frac{A^2B^2}{L}.
\end{equation}
For the ridge term, we have
\begin{equation}
\mathbb{E}_\Theta\|\omega^\circ\|_2^2
=
\frac{1}{L^2}
\sum_{\ell=1}^L
\mathbb{E}_{\theta_\ell}a(\theta_\ell)^2
\le
\frac{A^2}{L}.
\end{equation}
After averaging the first inequality, we obtain the empirical risk bound.

For the KL statement, we apply Proposition~\ref{prop:ls_kl_surrogate} to $\omega=\widehat\omega_{\mathrm{LS}}$ and take expectation over the calibration examples and the slice draw.
\end{proof}

\begin{proposition}
\label{prop:row_error}
In the unmasked square setting, let $\mu=N^{-1}\mathbf{1}_N$ and define $\rho(P)=\|P\mathbf{1}_N-\mu\|_1$. For two key-normalized probability plans $P$ and $\widehat P$ with the same key marginal, using entrywise $\ell_1$ for matrix gaps,
\begin{equation}
\rho(\widehat P)
\le
\rho(P)+\|\widehat P-P\|_1
\le
\rho(P)+\sqrt{2D_{\mathrm{KL}}(\widehat P\|P)}.
\end{equation}
For attention-scale matrices $A=NP$ and $\widehat A=N\widehat P$,
\begin{equation}
\frac{1}{N}\sum_i\left|\sum_j\widehat A_{ij}-1\right|
=
\rho(\widehat P).
\end{equation}
\end{proposition}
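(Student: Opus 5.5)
The first inequality follows from the triangle inequality, and the second from Pinsker's inequality. The last identity is a direct rescaling.

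\textbf{First inequality.} Write $\widehat P\mathbf{1}_N-\mu=(P\mathbf{1}_N-\mu)+(\widehat P-P)\mathbf{1}_N$. The triangle inequality in $\ell_1$ gives
\begin{equation}
\rho(\widehat P)\le\rho(P)+\|(\widehat P-P)\mathbf{1}_N\|_1 .
\end{equation}
Row summation cannot increase entrywise $\ell_1$ distance, because $|\sum_j x_{ij}|\le\sum_j|x_{ij}|$. Hence $\|(\widehat P-P)\mathbf{1}_N\|_1\le\|\widehat P-P\|_1$, which is the first inequality.

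\textbf{Second inequality.} Both plans share the key marginal $\nu$. Since $\sum_j\nu_j=1$, both are probability distributions on the $N^2$ index pairs. On such distributions the generalized KL divergence reduces to the ordinary KL divergence, because the $-P_{ij}+R_{ij}$ terms cancel in total mass.

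Pinsker's inequality in the form $\|\widehat P-P\|_{\mathrm{TV}}\le\sqrt{D_{\mathrm{KL}}(\widehat P\|P)/2}$, with total variation equal to half the $\ell_1$ distance, gives
\begin{equation}
\|\widehat P-P\|_1\le\sqrt{2D_{\mathrm{KL}}(\widehat P\|P)} .
\end{equation}
The only subtlety is the normalization convention. I will check that "probability plans" means total mass one, so that Pinsker applies with exactly the constant $2$. If $P$ had zero entries, the KL could be infinite and the bound would hold trivially. Gibbs plans are strictly positive, so this case does not arise.

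\textbf{Attention-scale identity.} With $\mu_i=1/N$ and $\widehat A=N\widehat P$,
\begin{equation}
\Bigl|\sum_j\widehat A_{ij}-1\Bigr|=N\Bigl|\sum_j\widehat P_{ij}-\mu_i\Bigr| .
\end{equation}
Averaging over $i$ with weight $1/N$ therefore returns $\sum_i|(\widehat P\mathbf{1}_N)_i-\mu_i|=\rho(\widehat P)$.

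None of these steps is a real obstacle. The main care point is the Pinsker constant under the $\ell_1$ versus total-variation convention.
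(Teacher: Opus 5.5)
Your proposal is correct and follows the paper's proof essentially step for step: the triangle inequality together with the $\ell_1$ contraction of row summation gives the first inequality, Pinsker's inequality (using $\|\cdot\|_1 = 2\,\mathrm{TV}$) gives the second, and rescaling by $N$ gives the attention-scale identity. Your remarks on total mass one and on the strict positivity of Gibbs plans are harmless clarifications that the paper leaves implicit.
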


\begin{proof}
Row summation is a contraction in $\ell_1$. Combining this fact with the triangle inequality, we have
\begin{equation}
\|\widehat P\mathbf{1}_N-\mu\|_1
\le
\|P\mathbf{1}_N-\mu\|_1
+
\|(\widehat P-P)\mathbf{1}_N\|_1,
\qquad
\|(\widehat P-P)\mathbf{1}_N\|_1
\le
\|\widehat P-P\|_1.
\end{equation}
Pinsker's inequality proves the KL bound. Finally, since $A=NP$, we can write
\begin{equation}
\frac{1}{N}\sum_i\left|\sum_j\widehat A_{ij}-1\right|
=
\frac{1}{N}\sum_i\left|N\sum_j\widehat P_{ij}-1\right|
=
\sum_i\left|\sum_j\widehat P_{ij}-\frac{1}{N}\right|
=
\rho(\widehat P).
\end{equation}
\end{proof}

\begin{proposition}
\label{prop:attention_output_stability}
Under the assumption of Proposition~\ref{prop:dual_stability}, the multiplicative plan bound also controls one attention head. Let
\begin{equation}
y_i = N\sum_j P_{f,ij}v_j,
\qquad
\widehat y_i = N\sum_j P_{\widehat f,ij}v_j,
\end{equation}
and assume $\|v_j\|_2 \le B$ for all $j$. Under this assumption, we have
\begin{equation}
\|\widehat y_i-y_i\|_2
\le
NB\sum_j |P_{\widehat f,ij}-P_{f,ij}|
\le
NB\bigl(e^{2\alpha/\varepsilon}-1\bigr)\sum_j P_{f,ij}.
\end{equation}
If the reference reconstructed plan has row mass $1/N$, this reduces to
\begin{equation}
\|\widehat y_i-y_i\|_2
\le
B\bigl(e^{2\alpha/\varepsilon}-1\bigr).
\end{equation}
For finite-budget or masked conventions, the first bound applies with the actual row mass of the reference reconstructed plan.
\end{proposition}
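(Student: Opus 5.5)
The bound is a direct consequence of the multiplicative plan estimate in Proposition~\ref{prop:dual_stability}, combined with the triangle inequality applied to the value mixing. I would fix a query index $i$ and write the output difference as a single weighted sum of value vectors,
\begin{equation}
\widehat y_i-y_i
=
N\sum_j\bigl(P_{\widehat f,ij}-P_{f,ij}\bigr)v_j .
\end{equation}
Taking the Euclidean norm, then applying the triangle inequality and the assumption $\|v_j\|_2\le B$, gives
\begin{equation}
\|\widehat y_i-y_i\|_2\le NB\sum_j|P_{\widehat f,ij}-P_{f,ij}|.
\end{equation}
This is the first inequality.

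For the second inequality, I would invoke Proposition~\ref{prop:dual_stability}. It states that every entry of $P_{\widehat f}$ lies between $e^{-2\alpha/\varepsilon}P_{f,ij}$ and $e^{2\alpha/\varepsilon}P_{f,ij}$. Subtracting $P_{f,ij}$ and using nonnegativity of the plan gives
\begin{equation}
|P_{\widehat f,ij}-P_{f,ij}|\le\max\bigl(e^{2\alpha/\varepsilon}-1,\;1-e^{-2\alpha/\varepsilon}\bigr)P_{f,ij}.
\end{equation}
The one point to verify is that this maximum equals $e^{2\alpha/\varepsilon}-1$. This follows from $1-e^{-x}\le e^{x}-1$ for $x\ge0$, which is equivalent to $e^{x}+e^{-x}\ge2$. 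Summing the entrywise bound over $j$ yields the second inequality.

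For the reduced form, I would substitute the row mass $\sum_jP_{f,ij}=1/N$. The factor $N$ then cancels, leaving $B(e^{2\alpha/\varepsilon}-1)$.

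For the remark about finite-budget or masked conventions, no extra argument is needed. The general bound never used the row mass, so it holds as stated with whatever row mass the reference reconstructed plan actually has.

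I expect no real obstacle here. The only delicate step is the asymmetric exponential comparison just described, and otherwise the proof is bookkeeping around the constants $N$ and $B$.
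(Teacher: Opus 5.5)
Your proposal is correct and follows the paper's proof essentially step for step: the triangle inequality with $\|v_j\|_2\le B$ gives the first bound, the entrywise multiplicative estimate from Proposition~\ref{prop:dual_stability} summed over $j$ gives the second, and substituting the row mass gives the reduced form. Your explicit check that $1-e^{-x}\le e^{x}-1$ for $x\ge 0$ fills in a step that the paper states without justification.
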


\begin{proof}
Fix a query token $i$. The output error is the value-weighted gap between the two attention rows. By the triangle inequality and $\|v_j\|_2\le B$, we have
\begin{equation}
\|\widehat y_i-y_i\|_2
\le
NB\sum_j |P_{\widehat f,ij}-P_{f,ij}|.
\end{equation}
The multiplicative plan bound in Proposition~\ref{prop:dual_stability} also implies
\begin{equation}
|P_{\widehat f,ij}-P_{f,ij}|
\le
\bigl(e^{2\alpha/\varepsilon}-1\bigr)P_{f,ij}.
\end{equation}
Using this pointwise bound inside the row sum proves the second displayed inequality. If the reference row mass is $1/N$, then $N\sum_j P_{f,ij}=1$, which yields the simplified bound.
\end{proof}

The same output bound applies to the default ASAP plan after replacing Proposition~\ref{prop:dual_stability} by Proposition~\ref{prop:two_sided_stability}.

\medskip\noindent\textbf{Permutation equivariance.}

\begin{proposition}
\label{prop:permutation_equivariance}
Consider the unmasked square setting with uniform marginals, and let $\Pi$ be a permutation matrix. Assume slice sorting uses a deterministic tie rule that is consistent under token permutation. With this tie rule, the sliced feature matrix satisfies
\begin{equation}
X_\Theta(\Pi Q,\Pi K)=\Pi X_\Theta(Q,K).
\end{equation}
Consequently, ASAP satisfies
\begin{equation}
\widehat f_\omega(\Pi Q,\Pi K)=\Pi\widehat f_\omega(Q,K),
\qquad
\widehat P_\omega(\Pi Q,\Pi K)=\Pi\widehat P_\omega(Q,K)\Pi^\top,
\end{equation}
and, for values permuted in the same way,
\begin{equation}
\operatorname{ASAP}(\Pi Q,\Pi K,\Pi V)
=
\Pi\,\operatorname{ASAP}(Q,K,V).
\end{equation}
\end{proposition}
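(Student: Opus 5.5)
The plan is to push the permutation through each stage of the ASAP pipeline in order: sliced features, then the centered prediction, then the two $c$-transforms, then the plan, and finally value mixing. Let $\sigma$ be the index map with $(\Pi Q)_i=q_{\sigma(i)}$, so that $(\Pi K)_j=k_{\sigma(j)}$. The cost matrix then transforms as $C(\Pi Q,\Pi K)=\Pi C(Q,K)\Pi^\top$. The uniform marginals satisfy $\Pi\mu=\mu$ and $\Pi\nu=\nu$.

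\textbf{Sliced features.} Fix a slice $\theta_\ell$. The projections permute as $a(\Pi Q)=\Pi a(Q)$ and $b(\Pi K)=\Pi b(K)$. The sorted sequences $a_{(r)}$ and $b_{(r)}$ are multisets in sorted order, so they are unchanged by $\Pi$. Hence the values $\phi_r$ and $f^{(\ell)}_{(r)}$ in Equation~\ref{eq:sliced_source_potential} are identical for the permuted input.

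The tie rule is the only delicate point. When projected queries are distinct, the query that holds rank $r$ in $\Pi Q$ is the same token that held rank $r$ in $Q$, now sitting at a different position. Returning values to the original order therefore gives exactly $\Pi$ applied to the old vector. When ties occur, the assumed consistency of the tie rule means the rank assignment is carried along by $\Pi$. Moreover, by the remark at the end of the proof of Proposition~\ref{prop:oned_potential}, tied queries receive equal $\phi_r$ and equal $a_{(r)}$, hence equal potential values. So the output is invariant to how ties are broken inside a block anyway.

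Centering commutes with $\Pi$, because $\mathbf 1^\top\Pi z=\mathbf 1^\top z$. Stacking columns then gives $X_\Theta(\Pi Q,\Pi K)=\Pi X_\Theta(Q,K)$. Since $\Pi_0$ commutes with permutations, $\widehat f_\omega(\Pi Q,\Pi K)=\Pi_0(\Pi X_\Theta\omega)=\Pi\widehat f_\omega(Q,K)$. I expect this feature step, specifically the rank/tie bookkeeping, to be the main obstacle. The rest is mechanical.

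\textbf{The $c$-transforms.} For a general permutation $\Pi$ on both sides, I verify directly from Equation~\ref{eq:key_c_transform} that
\[
T_K^{\nu}(\Pi f;\Pi C\Pi^\top)=\Pi\,T_K^\nu(f;C).
\]
The inner sum over $i$ is reindexed by $\sigma$, and $\nu$ is permutation-invariant. The same reindexing shows $T_Q^\mu(\Pi g;\Pi C\Pi^\top)=\Pi\,T_Q^\mu(g;C)$.

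Chaining these through Equation~\ref{eq:two_sided_c_transform} gives $\hat f^+\mapsto\Pi\hat f^+$ and $\hat g^+\mapsto\Pi\hat g^+$. Substituting into Equation~\ref{eq:plan_from_duals}, the $(i,j)$ entry of the permuted plan is $\exp((-C_{\sigma(i)\sigma(j)}+\hat f^+_{\sigma(i)}+\hat g^+_{\sigma(j)})/\varepsilon)$. This is exactly $(\Pi\widehat P_\omega\Pi^\top)_{ij}$. The ASAP-0 variant follows by stopping after the first transform.

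\textbf{Output.} With $\widehat A=N\widehat P$ and permuted values $\Pi V$, the head output is
\[
\Pi\widehat A\Pi^\top\Pi V=\Pi\widehat A V,
\]
using $\Pi^\top\Pi=I$. This gives $\operatorname{ASAP}(\Pi Q,\Pi K,\Pi V)=\Pi\,\operatorname{ASAP}(Q,K,V)$. Applying this per head, together with the shared output projection acting row-wise, extends the conclusion to the full layer.
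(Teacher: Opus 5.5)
Your proposal is correct and follows the same route as the paper's proof. You note that permuting the tokens leaves the sorted slice order statistics unchanged, that centering commutes with $\Pi$, that $C(\Pi Q,\Pi K)=\Pi C\Pi^\top$ together with uniform marginals makes both $c$-transforms permutation-equivariant, and you finish with $\Pi\widehat A\Pi^\top\Pi V=\Pi\widehat A V$. Your explicit reindexing of the $c$-transforms adds detail the paper leaves implicit, and so does your correct observation that tied queries receive identical potentials, which makes the tie-rule hypothesis unnecessary for the features. Neither changes the argument.
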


\begin{proof}
Fix a slice direction. Permuting $Q$ and $K$ permutes the projected source and target values, but the sorted order statistics are unchanged up to the same relabeling of token identities. Under the stated tie rule, the sliced source potential assigned back to token positions satisfies $\tilde f^{(\ell)}(\Pi Q,\Pi K)=\Pi \tilde f^{(\ell)}(Q,K)$ for every slice $\ell$. Thus each sliced feature column permutes in the same way as the tokens, and $X_\Theta(\Pi Q,\Pi K)=\Pi X_\Theta(Q,K)$.

Centering commutes with permutation because $\Pi\mathbf{1}_N=\mathbf{1}_N$, so $\widehat f_\omega(\Pi Q,\Pi K)=\Pi\widehat f_\omega(Q,K)$. The cost matrix transforms as $C(\Pi Q,\Pi K)=\Pi C(Q,K)\Pi^\top$. Since the marginals are uniform in the square setting, the key-side and source-side $c$-transforms both commute with simultaneous token permutation. The one-sided ASAP-0 plan and the default two-sided ASAP plan therefore permute in the same way:
\begin{equation}
\widehat P_\omega(\Pi Q,\Pi K)
=
\Pi\widehat P_\omega(Q,K)\Pi^\top.
\end{equation}
With $A=NP$, the value output is therefore
\begin{equation}
N\widehat P_\omega(\Pi Q,\Pi K)\Pi V
=
N\Pi\widehat P_\omega(Q,K)V
=
\Pi\,\operatorname{ASAP}(Q,K,V).
\end{equation}
\end{proof}

\section{Additional Runtime Wall-Clock Analysis and Patch-Size Impact}
\label{app:runtime}

We keep the main paper focused on trained-layer replacement and place the longer runtime sweep here as supporting inference context.

\subsection{FlashSinkhorn Discussion}

ASAP and FlashSinkhorn reduce two parts of the Sinkhorn attention cost. FlashSinkhorn accelerates the same iterative Sinkhorn computation with IO-aware fused kernels, while ASAP changes the online operator after a teacher has been trained, replacing the iterative scaling loop of that fixed layer by sliced-feature extraction, one source-dual prediction, and the one-sided or two-sided entropic $c$-transform.

The runtime numbers should be read as operator-replacement results under the released PyTorch-style Sinkformer implementations used in our experiments, not as kernel-level optimality claims. A fused Sinkhorn implementation strengthens the iterative baseline, and a fused ASAP implementation could also reduce constants in slice extraction and the $c$-transform. Table~\ref{tab:flashsinkhorn_control} shows the expected memory advantage of the fused iterative implementation because the current ASAP implementation materializes the dense attention matrix.

For teacher fidelity, the relevant comparison is the frozen-layer tradeoff. Low-budget Sinkhorn spends fewer online updates, while ASAP spends one offline fit and replaces the iterative online scaling loop for the same fitted Sinkhorn parameterization.

\subsection{Fused Sinkhorn Control}

We use Table~\ref{tab:flashsinkhorn_control} as a fused-kernel control on matched synthetic $Q,K,V$ tensors. The public FlashSinkhorn kernels implement squared-Euclidean entropic OT, so this control uses unit-normalized queries and keys with the corresponding squared-Euclidean convention. The PyTorch rows reproduce a dense iterative reference implementation, the FlashSinkhorn rows use the released fused iterative kernels, and the ASAP rows report fitted one-sided and two-sided operators with no online Sinkhorn scaling. Because we use this synthetic setting only to compare implementations, the table reports latency and peak memory. Teacher-approximation errors are evaluated on trained-layer activations in the frozen replacement experiments.

\begin{table}[!htbp]
  \centering
  \scriptsize
  \setlength{\tabcolsep}{4pt}
  \resizebox{\linewidth}{!}{%
  \begin{tabular}{llcccc}
    \toprule
    Method & Implementation & $N=512$ ms $\downarrow$ & $N=2048$ ms $\downarrow$ & $N=4096$ ms $\downarrow$ & $N=4096$ GB $\downarrow$ \\
    \midrule
    Sinkhorn normalizer ($S=3$) & PyTorch iterative & 0.46 & 0.75 & 2.88 & 0.313 \\
    Sinkhorn teacher ($S=20$) & PyTorch iterative & 2.00 & 3.54 & 12.86 & 0.313 \\
    FlashSinkhorn ($S=3$) & Fused iterative & 0.37 & $\mathbf{0.44}$ & $\mathbf{0.68}$ & $\mathbf{0.002}$ \\
    FlashSinkhorn ($S=20$) & Fused iterative & 1.16 & 1.69 & 3.13 & $\mathbf{0.002}$ \\
    ASAP-LS-0 ($L=8$) & PyTorch ASAP-0 & 0.33 & 0.71 & 1.80 & 0.313 \\
    ASAP-LS-0 ($L=16$) & PyTorch ASAP-0 & $\mathbf{0.32}$ & 0.71 & 1.83 & 0.313 \\
    ASAP-LS-0 ($L=32$) & PyTorch ASAP-0 & 0.33 & 0.72 & 1.86 & 0.313 \\
    ASAP-LS ($L=32$) & PyTorch ASAP & 0.41 & 0.88 & 2.44 & 0.313 \\
    \bottomrule
  \end{tabular}
  }
  \caption{Fused-Sinkhorn control under matched tensor shapes. Latency and peak memory are measured after CUDA warmup and explicit synchronization.}
  \label{tab:flashsinkhorn_control}
\end{table}

At $N=4096$, FlashSinkhorn cuts iterative memory from $0.313$ GB to $0.002$ GB and reduces the $S=20$ pass from $12.86$ ms to $3.13$ ms. ASAP-0 runs in $1.80$--$1.86$ ms for $L\in\{8,16,32\}$, and the default two-sided ASAP row runs in $2.44$ ms. The fused $S=3$ implementation remains faster and more memory efficient. The offline ASAP fit stays below $0.15$ seconds across the measured lengths and slice counts.

Fused Sinkhorn kernels are complementary to ASAP because they reduce the cost of producing finite-budget teacher potentials during calibration, while the fitted ASAP layer remains a cheaper online operator than the fused high-budget teacher for repeated inference. The memory column reflects the current implementations. FlashSinkhorn streams the iterative computation, while the PyTorch ASAP implementation still materializes dense intermediate matrices.

Although this control clarifies the implementations, it is not the teacher-fidelity evidence. In this normalized squared-Euclidean setting, low- and high-budget Sinkhorn are already close, so trained activations remain the relevant test for whether ASAP follows the trained finite operator.

\subsection{Full Runtime Wall-Clock}

The sweep compares forward operators, not the cost required to obtain each trained model. Under this configuration, ESPFormer (Soft) runs out of memory at $N=8000$, and ESPFormer (Hard) should be read as the post-training hard-sort inference operator reached after soft-sort training and temperature annealing.

\begin{figure}[!htbp]
  \centering
  \includegraphics[width=\textwidth]{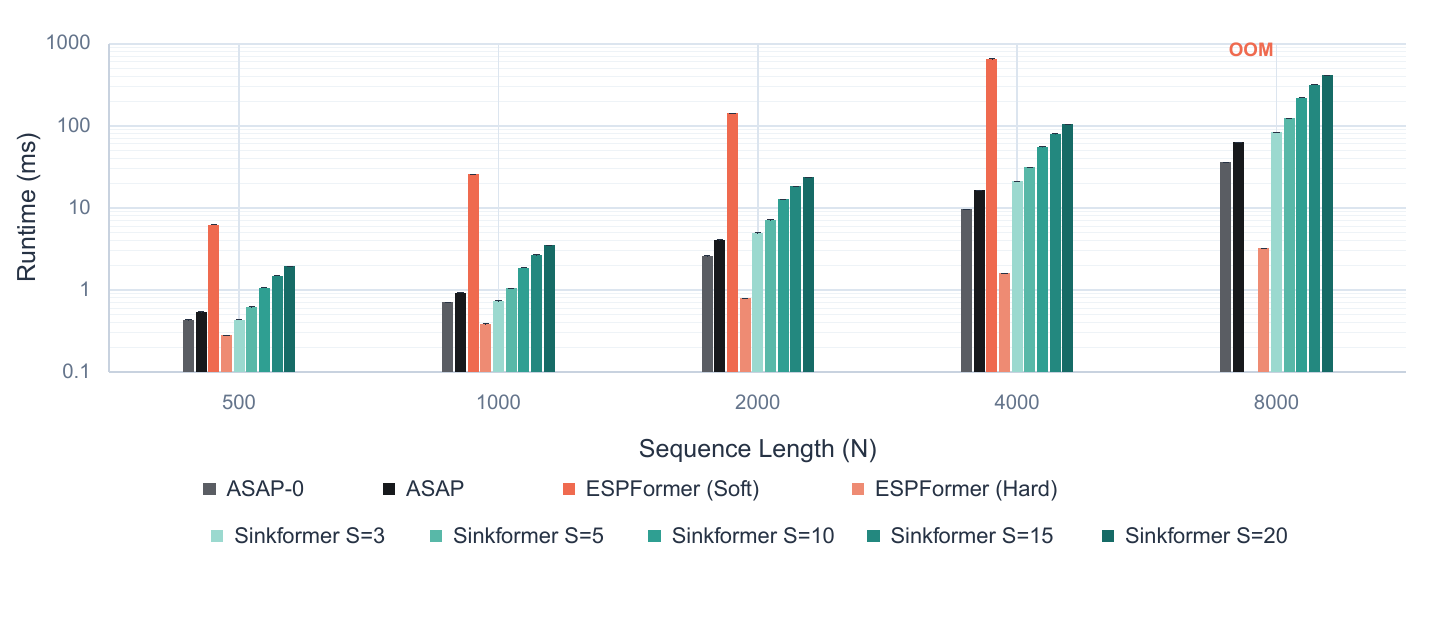}
  \caption{Wall-clock runtime sweep for ASAP, ESPFormer, and Sinkformer operators on synthetic dense self-attention inputs. ASAP-0 and ASAP latencies exclude offline teacher extraction and fitting; speedups are relative to Sinkhorn $S=3$ and $S=20$ under the same benchmark. ESPFormer uses $32$ slices, and the missing ESPFormer (Soft) bar indicates an out-of-memory run.}
  \label{fig:runtime_appendix}
\end{figure}

\begin{table}[!htbp]
  \centering
  \footnotesize
  \setlength{\tabcolsep}{4pt}
  \begin{tabular}{rcccccc}
    \toprule
    $N$ & ASAP-LS-0 & ASAP-LS & Sinkhorn normalizer ($S=3$) & Speedup & Sinkhorn teacher ($S=20$) & Speedup \\
    \midrule
    500 & 0.42 & 0.53 & 0.43 & $1.02\times$ & 1.94 & $4.57\times$ \\
    1000 & 0.70 & 0.92 & 0.73 & $1.04\times$ & 3.49 & $4.96\times$ \\
    2000 & 2.61 & 4.06 & 4.97 & $1.90\times$ & 23.71 & $9.08\times$ \\
    4000 & 9.57 & 16.54 & 21.28 & $2.22\times$ & 104.72 & $10.95\times$ \\
    8000 & 35.70 & 63.51 & 84.03 & $2.35\times$ & 415.80 & $11.65\times$ \\
    \bottomrule
  \end{tabular}
  \caption{Forward latency in milliseconds for ASAP-0, ASAP, the Sinkhorn normalizer ($S=3$), and the Sinkhorn teacher ($S=20$) used for ASAP compilation in Figure~\ref{fig:runtime_appendix}. Ratios are computed as baseline latency divided by ASAP-0 latency from the mean over $10$ timed CUDA runs.}
  \label{tab:runtime_speedup}
\end{table}

\subsection{ESPFormer Training Resources}
\label{app:esp_resource}

ESPFormer trains through soft sorting before switching to hard-sort inference. The hard-sort inference operator is efficient, but the soft-sort training operator builds soft permutation tensors for queries and keys with shape $B\times H\times L\times N\times N$. We measure this operator because it is the training-time operator used before switching to hard sorting. The measurement follows the ESPFormer complexity discussion, where SoftSort and per-slice plan construction contribute $O(LN^2)$ terms, and the ESPFormer limitations note that training memory scales with the number of slices since each slice permutation matrix must be retained.

Figure~\ref{fig:espformer_training_resources} plots the full IMDb encoder profile and a slice-count scaling profile at fixed $B=4$, $N=197$, and $H=8$. Table~\ref{tab:imdb_resource_budget_sweep} reports the full IMDb resource sweep used for the left panel of Table~\ref{tab:operator_main}. The measurement uses real IMDb token batches at sequence length $512$ and effective batch size $32$. ESPFormer (Soft) uses micro-batch $8$ with four gradient-accumulation steps, so all train rows have the same effective batch size.

\begin{figure}[!htbp]
  \centering
  \includegraphics[width=0.90\textwidth]{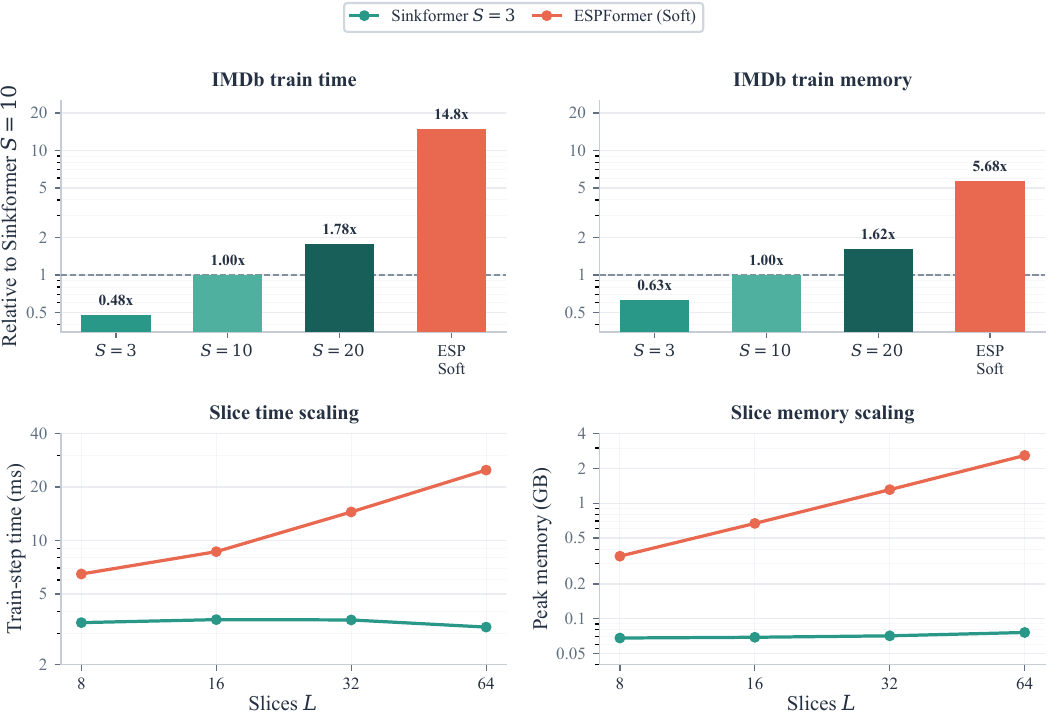}
  \caption{ESPFormer training-resource profiles. Top panels use full IMDb encoder batches at sequence length $512$, normalized to Sinkformer $S=10$. Bottom panels show slice-count scaling at fixed $B=4$, $N=197$, and $H=8$.}
  \label{fig:espformer_training_resources}
\end{figure}

\begin{table}[!htbp]
  \centering
  \scriptsize
  \setlength{\tabcolsep}{3pt}
  \begin{tabular}{lrrrrrr}
    \toprule
    Method & Train ms/ex $\downarrow$ & Train GB $\downarrow$ & Train time vs.\ $S=10$ & Train mem.\ vs.\ $S=10$ & Forward ms/ex $\downarrow$ & Forward GB $\downarrow$ \\
    \midrule
    Sinkformer ($S=3$) & 4.49 & 7.72 & 0.48 & 0.63 & 2.39 & 1.12 \\
    Sinkformer ($S=10$) & 9.37 & 12.24 & 1.00 & 1.00 & 5.75 & 1.12 \\
    Sinkformer ($S=20$) & 16.71 & 19.77 & 1.78 & 1.62 & 10.58 & 1.12 \\
    ESPFormer (Soft) & 138.82 & 69.51 & 14.82 & 5.68 & 47.92 & 8.58 \\
    \bottomrule
  \end{tabular}
  \caption{Full-encoder IMDb resource profile on RTX PRO 6000 Blackwell Server Edition GPUs. Train rows use effective batch size $32$; forward rows report evaluation batches from the same sequence-length setting. Ratios are normalized to Sinkformer $S=10$.}
  \label{tab:imdb_resource_budget_sweep}
\end{table}

Table~\ref{tab:catsdogs_resource_h100} uses random Cats and Dogs ViT-shaped inputs to measure the attention-operator resource cost at the same batch size and image resolution as the downstream protocol. In this GPU measurement, ESPFormer (Soft) forward evaluation is $19.8\times$ slower and uses $43.1\times$ more peak GPU memory than Sinkformer ($S=10$). ESPFormer (Hard) is slower still in this measurement because it constructs the soft-sort tensors before hardening. The ESPFormer (Soft) train step runs out of memory at batch size $64$ on the same runtime, while Sinkformer train steps at $S\in\{3,10,20\}$ complete.

\begin{table}[!htbp]
  \centering
  \scriptsize
  \setlength{\tabcolsep}{4pt}
  \begin{tabular}{lrrrr}
    \toprule
    Method & Forward ms $\downarrow$ & Forward GB $\downarrow$ & Train-step ms $\downarrow$ & Train-step GB $\downarrow$ \\
    \midrule
    Sinkformer ($S=3$) & 16.27 & 0.48 & 34.09 & 3.54 \\
    Sinkformer ($S=10$) & 38.84 & 0.48 & 79.88 & 6.90 \\
    Sinkformer ($S=20$) & 71.15 & 0.48 & 145.23 & 11.71 \\
    ESPFormer (Soft) & 768.78 & 20.72 & OOM & OOM \\
    ESPFormer (Hard) & 1603.83 & 30.82 & -- & -- \\
    \bottomrule
  \end{tabular}
  \caption{Full Cats and Dogs ViT-shape resource profile on random inputs of shape $64\times3\times224\times224$. Entries report mean wall-clock time after warmup and peak total allocated GPU memory. ESPFormer (Soft) uses the released soft-sort implementation, and ESPFormer (Hard) is a forward-only measurement because training uses the soft-sort row.}
  \label{tab:catsdogs_resource_h100}
\end{table}

The ESPFormer training-resource cells in Table~\ref{tab:catsdogs_main} come from the released soft-sort training log after dropping the first cold step. Because the run uses multiple devices, the table reports summed peak memory.

Table~\ref{tab:esp_training_sequence} measures tensor-level train-step scaling with sequence length at batch size one and $64$ slices. The soft-sort training operator retains dense per-slice sorting tensors, while Sinkhorn scaling keeps the online state closer to the attention matrix size.

\begin{table}[!htbp]
  \centering
  \scriptsize
  \setlength{\tabcolsep}{3pt}
  \resizebox{\linewidth}{!}{%
  \begin{tabular}{rcccccccc}
    \toprule
    & \multicolumn{2}{c}{Sinkformer ($S=3$)} & \multicolumn{2}{c}{Sinkformer ($S=10$)} & \multicolumn{2}{c}{ESPFormer (Soft)} & \multicolumn{2}{c}{ESPFormer (Hard)} \\
    \cmidrule(lr){2-3}\cmidrule(lr){4-5}\cmidrule(lr){6-7}\cmidrule(lr){8-9}
    $N$ & ms & GB & ms & GB & ms & GB & ms & GB \\
    \midrule
    197 & 4.32 & 0.03 & 5.94 & 0.05 & 8.66 & 0.66 & 13.55 & 0.82 \\
    512 & 3.78 & 0.11 & 7.58 & 0.22 & 47.83 & 4.34 & 71.86 & 5.41 \\
    1000 & 5.09 & 0.35 & 12.59 & 0.80 & 229.42 & 16.49 & 292.00 & 20.59 \\
    2000 & 12.15 & 1.32 & 34.78 & 3.11 & 2216.72 & 65.85 & 2700.73 & 82.23 \\
    \bottomrule
  \end{tabular}
  }
  \caption{Train-step scaling with sequence length on A100 GPUs, using batch size $1$, $8$ heads, and $64$ slices. Each entry reports wall-clock milliseconds and peak allocated GPU memory in GB.}
  \label{tab:esp_training_sequence}
\end{table}

\subsection{ESPFormer Soft-to-Hard Switch}
\label{app:esp_soft_hard_switch}

ESPFormer uses soft sorting during training and reports hard sorting as an efficient inference operator after temperature annealing \citep{shahbazi2025espformer}. A hard-sort gain is plausible when annealing has already moved the trained weights close to the hard-sorting regime. In that case, the soft operator behaves like a noisy relaxation, and hard permutations can sharpen the intended sliced transport plan. On an easier binary task such as Cats and Dogs, the classifier can also tolerate a larger logit change if the decision boundary remains stable.

The same switch can substantially change the evaluated operator. Hard sorting is a discontinuous limit of the soft relaxation and produces much lower-entropy attention. To quantify the effect, we evaluate the same CIFAR-100 ESPFormer checkpoint under three operators: the soft training relaxation, the released SoftSort hard flag, and the exact argsort operator that sorts each query and key slice, matches equal ranks, and averages the slice plans. The released hard flag hardens the soft relaxation by top-one selection, but it is not the exact permutation operator. The exact argsort operator is the relevant hard-sort reference for marginal balance.

\begin{table}[!htbp]
  \centering
  \scriptsize
  \setlength{\tabcolsep}{4pt}
  \resizebox{\linewidth}{!}{%
  \begin{tabular}{lcccccc}
    \toprule
    Method & Test acc. (\%) & Pred. change (\%) & Row err. & Col err. & Entropy & Val ex./s \\
    \midrule
    ESPFormer (Soft) & $41.89$ & $0.00$ & $0.059$ & $0.060$ & $2.858$ & $948$ \\
    ESPFormer (Released Hard) & $35.31$ & $41.02$ & $0.148$ & $0.115$ & $0.176$ & $451$ \\
    ESPFormer (Exact Hard) & $36.75$ & $34.65$ & $5.9{\times}10^{-4}$ & $5.9{\times}10^{-4}$ & $0.186$ & $6572$ \\
    \bottomrule
  \end{tabular}
  }
  \caption{CIFAR-100 soft-to-hard comparison for the same epoch-50 ESPFormer checkpoint. Test accuracy and prediction change are measured on the test set, with prediction change relative to ESPFormer (Soft). Row and column errors are mean absolute deviations from one, averaged over recorded attention layers and batches. Throughput is measured on the validation split with batch size $256$.}
  \label{tab:esp_soft_hard_switch}
\end{table}

Table~\ref{tab:esp_soft_hard_switch} compares the two hard-sort variants. ESPFormer (Exact Hard) restores the row and column marginals and is fastest in this run. Its task behavior still does not match ESPFormer (Soft), with test accuracy dropping from $41.89$ to $36.75$ and predictions changing on $34.65\%$ of test examples. ESPFormer (Released Hard) is less balanced and lower in accuracy than ESPFormer (Exact Hard). We do not read this as a contradiction of the published Cats and Dogs hard-sort row; instead, it shows that hard-sort ESP inference is another operator whose task effect should be measured, not a cheaper evaluation of the same trained soft-sort operator. ASAP avoids that soft-to-hard operator switch by training the Sinkhorn operator directly and compiling the finite Sinkhorn duals after teacher training.

\subsection{Patch-Size Impact under Sinkhorn Layer Replacement}
\label{app:mnist_replacement}

The shallow MNIST experiment follows the one-layer ViT protocol used by Sinkformers and ESPFormer, where each attention mechanism is trained end to end across patch sizes. We use the same setting for replacement by training a Sinkhorn teacher ($S=5$), freezing its learned projections and classifier, and replacing only the attention normalization at evaluation time. We report the stable, nontrivial replacement settings, patch sizes $2,4,7,14$. ASAP-0 is the one-sided key-side ablation, and ASAP is the row-ending two-sided transform for this $S=5$ teacher.

\begin{table}[!htbp]
  \centering
  \scriptsize
  \setlength{\tabcolsep}{3pt}
  \resizebox{\linewidth}{!}{%
  \begin{tabular}{lcccc}
    \toprule
    Method & Patch 2 & Patch 4 & Patch 7 & Patch 14 \\
    \midrule
    Sinkhorn teacher ($S=5$) & $96.87 \pm 0.18$ / $100.00$ & $96.89 \pm 0.52$ / $100.00$ & $96.27 \pm 0.30$ / $100.00$ & $96.32 \pm 0.12$ / $100.00$ \\
    \midrule
    Softmax replacement & $24.20 \pm 12.37$ / $24.34 \pm 12.53$ & $10.77 \pm 1.60$ / $10.76 \pm 1.73$ & $13.89 \pm 1.36$ / $13.88 \pm 1.44$ & $19.06 \pm 5.75$ / $19.07 \pm 5.72$ \\
    Sinkhorn normalizer ($S=3$) & $89.51 \pm 2.66$ / $90.22 \pm 2.56$ & $90.75 \pm 4.45$ / $91.77 \pm 4.78$ & $76.86 \pm 11.26$ / $77.54 \pm 11.50$ & $92.94 \pm 3.59$ / $94.31 \pm 3.86$ \\
    ASAP-0 & $94.74 \pm 0.06$ / $95.70 \pm 0.12$ & $94.61 \pm 1.40$ / $96.18 \pm 1.31$ & $88.76 \pm 9.06$ / $90.12 \pm 9.22$ & $95.14 \pm 1.34$ / $97.45 \pm 1.57$ \\
    ASAP & $96.03 \pm 0.18$ / $97.28 \pm 0.10$ & $96.19 \pm 0.82$ / $97.96 \pm 0.80$ & $95.18 \pm 1.30$ / $97.27 \pm 1.50$ & $96.29 \pm 0.13$ / $99.25 \pm 0.29$ \\
    \bottomrule
  \end{tabular}
  }
  \caption{Patch-size impact under Sinkhorn layer replacement. Each entry is accuracy / teacher agreement in percent, reported as mean $\pm$ standard deviation over three runs. Teacher agreement is prediction agreement with the frozen Sinkhorn teacher ($S=5$).}
  \label{tab:mnist_replacement}
\end{table}

With one attention layer and no feed-forward block, this is a sensitive replacement test with little redundancy for normalization mismatch. On patch size $2$, ASAP-0 substantially improves over the Sinkhorn normalizer ($S=3$) replacement, reaching $94.74\%$ accuracy and $95.70\%$ teacher agreement. ASAP recovers most teacher behavior, reaching $96.03\%$ accuracy and $97.28\%$ agreement; after five post-replacement epochs on patch size $2$, ASAP reaches $96.87 \pm 0.15\%$ accuracy, matching the Sinkhorn teacher continuation within run-to-run variation.

Across patch sizes $4,7,14$, ASAP stays close to the teacher, with mean accuracy drops of $0.71$, $1.10$, and $0.02$ percentage points, respectively, while maintaining substantially higher teacher agreement than the Sinkhorn normalizer ($S=3$) replacement. This sensitive setting shows the value of ASAP when a low-redundancy architecture needs tighter normalization fidelity.

\section{Implementation Details}
\label{app:details}

In Algorithm~\ref{alg:asap}, \textsc{TeacherSource} returns the finite source dual used before the teacher's final closure under its update convention. For column-ending teachers, the final key-side transform of this source dual recovers the teacher plan under exact prediction. The default ASAP branch then applies further finite closures from the fitted source dual.

\begin{algorithm}[!htbp]
\caption{ASAP offline fit and online layer}
\label{alg:asap}
\centering
\begin{minipage}[t]{0.48\textwidth}
\footnotesize
\textbf{Offline sliced-dual projection}
\vspace{2pt}
\begin{algorithmic}[1]
\Require Fit set $\mathcal{D}$, slices $\Theta$, teacher budget $T$, entropy $\varepsilon$, ridge $\lambda$, objective
\State $G \gets \lambda I_L$, $b \gets 0$
\For{$m=1,\ldots,M$}
    \State $X_m \gets X_\Theta(Q^{(m)},K^{(m)})$
    \State $C^{(m)} \gets \textsc{Cost}(Q^{(m)},K^{(m)})$
    \State $f^{(T,m)} \gets \textsc{TeacherSource}_T(Q^{(m)},K^{(m)})$
    \State $\tilde f^{(T,m)} \gets \Pi_0(f^{(T,m)})$
    \If{objective is LS}
        \State $G \gets G + X_m^\top X_m$
        \State $b \gets b + X_m^\top \tilde f^{(T,m)}$
    \Else
        \State $P^{(T,m)} \gets \mathcal{C}_\nu(\tilde f^{(T,m)};C^{(m)})$
        \State store $(X_m,C^{(m)},P^{(T,m)})$ for Eq.~\ref{eq:asap_kl}
    \EndIf
\EndFor
\If{objective is LS}
    \State $\omega \gets G^{-1}b$
\Else
    \State $\omega \gets$ minimizer of Eq.~\ref{eq:asap_kl}
\EndIf
\State \Return $\omega$
\end{algorithmic}
\end{minipage}
\hfill
\begin{minipage}[t]{0.48\textwidth}
\footnotesize
\textbf{Online compiled attention}
\vspace{2pt}
\begin{algorithmic}[1]
\Require Evaluation pair $(Q,K)$, values $V$, slices $\Theta$, coefficient vector $\omega$, teacher last step $s\in\{\mathrm{row},\mathrm{column}\}$
\State $X \gets X_\Theta(Q,K)$
\State $C \gets \textsc{Cost}(Q,K)$
\State $\hat f \gets \Pi_0(X\omega)$
\If{ASAP-0}
    \State $\hat g^0 \gets T_K^\nu(\hat f)$
    \State $\hat P_{ij} \gets \exp\!\left(\frac{-C_{ij}+\hat f_i+\hat g^0_j}{\varepsilon}\right)$
\ElsIf{ASAP}
    \State $\hat g^0 \gets T_K^\nu(\hat f)$
    \State $\hat f^+ \gets T_Q^\mu(\hat g^0)$
    \If{$s=\mathrm{column}$}
        \State $\hat g^+ \gets T_K^\nu(\hat f^+)$
        \State $\hat P_{ij} \gets \exp\!\left(\frac{-C_{ij}+\hat f^+_i+\hat g^+_j}{\varepsilon}\right)$
    \Else
        \State $\hat P_{ij} \gets \exp\!\left(\frac{-C_{ij}+\hat f^+_i+\hat g^0_j}{\varepsilon}\right)$
    \EndIf
\EndIf
\State $\hat A \gets N\hat P$
\State \Return $\hat A V$
\end{algorithmic}
\end{minipage}
\end{algorithm}

This section fixes the implementation choices that couple the frozen Sinkhorn teacher to the compiled ASAP layer.

\subsection{Teacher Attention and Offline Fit}

ASAP is fit to the finite-budget teacher used by the trained model, not an exact-convergence idealization. For each task, we use the teacher's entropy, iteration budget, mask, and update convention when extracting duals or plans. ASAP-LS centers the source log-dual and solves the sliced-potential regressor by closed-form ridge fitting. ASAP-KL uses the same sliced features and source-dual parameterization, but fits the coefficients through the teacher plan after the key-side $c$-transform.

The experiments in this paper use a short post-training fitting pass after the teacher is frozen. The same LS normal equations may also be accumulated during a late training phase, but if the teacher parameters change during accumulation the fit corresponds to the sequence of observed teacher states, not exactly to the final frozen teacher.

We use the score-to-cost conversion because the teacher is implemented in score-kernel coordinates, while sliced potentials are quadratic-cost features. When the teacher backbone is defined from scaled dot-product scores, teacher extraction and inference-time reconstruction use that same finite score-kernel convention. The score-kernel source dual is first shifted into quadratic-cost coordinates by adding $\rho_i(Q)=\frac{1}{2\sqrt{d_h}}\|q_i\|_2^2$ before fitting. At inference time the predicted quadratic-cost source dual is shifted back by subtracting $\rho_i(Q)$ before the score-kernel $c$-transforms. The conversion is a finite-coordinate conversion for the implemented score-kernel teacher. It is not a claim that finite score-kernel Sinkhorn and finite quadratic-cost Sinkhorn from zero initialization have identical iterates. In particular, the sentiment experiments follow the released Sinkformer text score-kernel implementation while applying this dual-coordinate conversion.

Fit examples are drawn from the same unlabeled in-distribution training pipeline as the teacher and are processed with the same tokenizer or image preprocessing. Unless noted otherwise, the fit is performed per attention layer, with token positions and head instances pooled within that layer, and the same slice set $\Theta$ is reused across all fit and evaluation examples for a given compiled layer. In plug-and-play evaluation, the compiled ASAP layer is used after fitting; when a post-replacement refresh updates upstream query and key projections, gradients through hard sorting are treated as the implementation's almost-everywhere sort gradients unless a given experiment explicitly freezes those features.

\paragraph{Masked text attention.}
\label{app:masked_text}
For padded text batches, let $J(x)\subseteq\{1,\ldots,N\}$ denote the active key positions for an input $x$. The released sentiment implementation masks padded key columns but keeps all $N$ query positions in the sequence. We match this convention for teacher extraction, ASAP fitting, and ASAP reconstruction. In coupling units, the masked key target is
\begin{equation}
\nu^{J}_j =
\begin{cases}
1/N, & j\in J(x),\\
0, & j\notin J(x).
\end{cases}
\end{equation}
For $j\in J(x)$, the key-side $c$-transform is
\begin{equation}
\hat g_{\omega,j}
=
\varepsilon\log(1/N)
-
\varepsilon\log\sum_{i=1}^N
\exp\!\left(\frac{-C_{ij}+\hat f_{\omega,i}}{\varepsilon}\right),
\end{equation}
and for $j\notin J(x)$ we set $\hat g_{\omega,j}=-\infty$ and $\hat P_{\omega,ij}=0$. With the attention-scale matrix $\hat A_\omega=N\hat P_\omega$, each active key column has target sum one and padded columns have target sum zero.

When $|J(x)|<N$, the masked object has total mass $|J(x)|/N$, so the text experiments compile the finite key-masked Sinkformer convention, not the square balanced transport ideal used in the main text. The balanced positive-marginal propositions in Appendix~\ref{app:proofs} apply to the unmasked square setting. For masked sliced features, sorting is applied on active token positions only, and padded positions do not enter the one-dimensional key support or the regression loss. Row errors in the sentiment experiments are reported in their own table and reflect the active-key mask and final normalization convention as well as source-dual approximation error. Column errors are averaged only over active key columns.

\begin{proposition}
\label{prop:masked_key_transform}
Let $J\subseteq\{1,\ldots,N\}$ be the active key set and define the masked key target
\begin{equation}
\nu^J_j =
\begin{cases}
1/N, & j\in J,\\
0, & j\notin J.
\end{cases}
\end{equation}
For any source dual $f\in\mathbb{R}^N$, define, for $j\in J$,
\begin{equation}
g^J_j
=
\varepsilon\log(1/N)
-
\varepsilon\log
\sum_{i=1}^N
\exp\left(\frac{-C_{ij}+f_i}{\varepsilon}\right),
\end{equation}
and set $g^J_j=-\infty$ for $j\notin J$. With the convention $\exp(-\infty)=0$, let
\begin{equation}
P^J_{ij}
=
\exp\left(\frac{-C_{ij}+f_i+g^J_j}{\varepsilon}\right).
\end{equation}
With this convention, we have
\begin{equation}
\sum_i P^J_{ij}=1/N \quad \text{for } j\in J,
\qquad
P^J_{ij}=0 \quad \text{for } j\notin J.
\end{equation}
The total mass of $P^J$ is $|J|/N$.
\end{proposition}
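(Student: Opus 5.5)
The plan is to split the key indices into the active set $J$ and its complement, and treat each case directly from the definitions. The active columns reuse the column-cancellation computation from Proposition~\ref{prop:one_marginal}. The padded columns follow from the convention $\exp(-\infty)=0$. The total-mass statement then comes from summing column masses.

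\textbf{Active keys $j\in J$.} I would exponentiate the definition of $g^J_j$ to obtain
\begin{equation}
\exp\left(\frac{g^J_j}{\varepsilon}\right)
=
\frac{1/N}{\sum_{r=1}^N \exp\left(\frac{-C_{rj}+f_r}{\varepsilon}\right)}.
\end{equation}
The denominator is a finite sum of strictly positive terms, so it is positive and finite. Hence $g^J_j$ is a well-defined real number.

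Factoring $\exp(g^J_j/\varepsilon)$ out of the column sum $\sum_i P^J_{ij}$ leaves exactly this denominator, which cancels and gives $1/N$. This is the same argument as in Proposition~\ref{prop:one_marginal}, applied with $\nu_j=1/N$ on the active coordinates. I would note that Proposition~\ref{prop:one_marginal} is stated for a positive marginal, so rather than invoke it wholesale I would repeat the one-line cancellation columnwise. Only active columns are involved, and on those $\nu^J_j>0$.

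\textbf{Padded keys $j\notin J$.} Here $g^J_j=-\infty$, while $-C_{ij}+f_i$ is finite for every $i$. The exponent is therefore $-\infty$, and the convention gives $P^J_{ij}=0$ for every query $i$.

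\textbf{Total mass.} I would sum over columns to get $\sum_{i,j}P^J_{ij}=\sum_{j\in J}1/N+\sum_{j\notin J}0=|J|/N$.

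The only point needing care is the extended-real convention. One must check that $-C_{ij}+f_i+(-\infty)$ is interpreted as $-\infty$, with no indeterminate form. This holds because every $C_{ij}$ and every $f_i$ is finite. No other step should present an obstacle.
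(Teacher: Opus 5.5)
Your proposal is correct and follows the paper's proof essentially step for step. You exponentiate $g^J_j$ so the column sum cancels to $1/N$ on active keys, use $\exp(-\infty)=0$ on padded keys, and sum the column masses to get $|J|/N$; your extra care about the finite positive denominator and the extended-real convention is a sound refinement the paper leaves implicit.
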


\begin{proof}
For an active key $j\in J$, exponentiating the definition of $g^J_j$ yields
\begin{equation}
\exp\left(\frac{g^J_j}{\varepsilon}\right)
=
\frac{1/N}{\sum_i \exp\left(\frac{-C_{ij}+f_i}{\varepsilon}\right)}.
\end{equation}
For an active key, the same cancellation as in the unmasked case leaves exactly the active-key mass:
\begin{equation}
\sum_i P^J_{ij}
=
1/N.
\end{equation}
For an inactive key $j\notin J$, the convention $\exp(-\infty)=0$ forces $P^J_{ij}=0$ for all $i$. Summing the active column masses, the total mass is $|J|/N$.
\end{proof}

\subsection{Online and Offline Cost}
\label{app:cost_breakdown}

For one dense forward pass, ASAP requires $O(N^2d_h)$ work to form the dense score or cost matrix, $O(LNd_h)$ for slice projections, $O(LN\log N)$ for slice-wise sorting and potential extraction, $O(NL)$ for dual prediction, $O(N^2)$ work per entropic $c$-transform call, and $O(N^2 d_v)$ for multiplying the dense attention matrix by the value matrix. ASAP-0 uses one $c$-transform call. Column-ending ASAP uses the three calls in Equation~\ref{eq:two_sided_c_transform}; row-ending ASAP uses two calls. In implementation, the key-side transform followed by multiplication by $N$ is evaluated in its algebraically equivalent column-wise softmax form. In the unmasked square ASAP-0 case,
\begin{equation}
\hat A_{\omega,ij}
=
\frac{
\exp\!\left(( -C_{ij}+\hat f_{\omega,i})/\varepsilon\right)
}{
\sum_{i'=1}^N
\exp\!\left(( -C_{i'j}+\hat f_{\omega,i'})/\varepsilon\right)
}.
\end{equation}
The dense online cost is $O(N^2d_h + LNd_h + LN\log N + NL + R N^2 + N^2d_v)$ with $R=1$ for ASAP-0, $R=3$ for column-ending ASAP, and $R=2$ for row-ending ASAP, compared with $O(N^2d_h + SN^2 + N^2d_v)$ for $S$ Sinkhorn scaling steps after score formation. For $M$ fit examples, the offline work includes teacher extraction, typically $O(MTN^2)$ for $T$ teacher sweeps after costs are formed, sliced-feature computation, and coefficient fitting. ASAP-LS adds $O(MNL^2)$ statistic accumulation and an $O(L^3)$ solve per fitted layer. ASAP-KL adds a low-dimensional convex optimization over the same $L$ coefficients, where each objective or gradient pass evaluates the key-normalized Gibbs plan on the calibration batch. These offline terms are not part of the online forward latency reported in the runtime tables.

The offline fit pays off after repeated evaluations of the same frozen layer. In the frozen IMDb sweep, ASAP-LS with $L=32$ and $M=4096$ uses $102.0$ seconds of fit time. The default two-sided row saves $63.02-11.79=51.23$ ms per layer-batch relative to the $S=20$ teacher, for a break-even point of approximately $2.0\times10^3$ layer-batch evaluations. Relative to Sinkformer ($S=3$), the saving is $14.14-11.79=2.35$ ms, for approximately $4.3\times10^4$ layer-batch evaluations. The faster ASAP-LS-0 ablation uses the same fit time and saves $63.02-7.74=55.28$ ms against the $S=20$ teacher and $14.14-7.74=6.40$ ms against Sinkformer ($S=3$), for break-even points of approximately $1.8\times10^3$ and $1.6\times10^4$ layer-batch evaluations, respectively. These counts use forward latency only and quantify the repeated-inference regime targeted by post-training fitting.

\subsection{Timing and Evaluation Protocol}

The runtime sweep in Figure~\ref{fig:runtime_appendix} was executed on NVIDIA RTX PRO 6000 Blackwell Server Edition GPU hardware with PyTorch $2.10.0{+}$cu128. We use batch size $1$, sequence lengths $N \in \{500,1000,2000,4000,8000\}$, width $d=1024$, $8$ attention heads, and head dimension $64$. Sinkhorn normalizer baselines are benchmarked at iteration budgets $S \in \{3,5,10,15,20\}$. ESPFormer uses soft-sort temperature $t = 10.0$ and inverse-temperature parameter $\tau = 10^{-3}$. ASAP uses $L=32$ slices, ridge $\lambda = 10^{-3}$, $8$ fit examples, fit batch size $2$, and a Sinkhorn teacher ($S=20$). All timings use $3$ warmup runs, $10$ timed runs, and explicit CUDA synchronization around each measurement. Figure~\ref{fig:runtime_appendix} and Table~\ref{tab:runtime_speedup} report forward latency only; the one-time ASAP fit is reported when fit cost is part of the comparison.

\section{Experiment Details}
\label{app:expdetails}

Table~\ref{tab:exp_protocols} fixes the main experimental settings before the task-specific details.

\begin{table}[!htbp]
  \centering
  \footnotesize
  \setlength{\tabcolsep}{4pt}
  \renewcommand{\arraystretch}{1.08}
  \begin{tabular}{@{}p{0.18\linewidth}p{0.39\linewidth}p{0.37\linewidth}@{}}
    \toprule
    Benchmark & Training and evaluation setting & ASAP-specific setup \\
    \midrule
    IMDb replacement & New matched benchmark with an IMDb encoder at sequence length $512$. Resource profile uses effective batch size $32$ and three trained $S=20$ teachers. & Compile each frozen $S=20$ teacher and report resource use, layer-batch latency, teacher error, marginal error, and frozen accuracy. \\
    Cats and Dogs & ESPFormer reproduction of the Sinkformer setup with a ViT using embed dim $128$, depth $6$, $8$ heads, patch size $16$, Adam, batch size $64$, and $300$ epochs. & Compile after teacher training and report the full-data setting. High-$S$ rows are controlled extensions. \\
    Text classification & Controlled topic and sentiment benchmark set with a depth-$6$ encoder, batch size $32$, $15$ epochs, learning rate $10^{-4}$, and sequence length $128$. & Compile from the $S=10$ teacher using $4096$ unlabeled training inputs. Rows share the same checkpoint rule. \\
    \bottomrule
  \end{tabular}
  \caption{Compact summary of the exact main-paper settings. Where Sinkformer and ESPFormer do not use the same shared-benchmark settings, the main comparison follows the ESPFormer reproduction protocol and the local paragraph below notes the setup mismatch.}
  \label{tab:exp_protocols}
\end{table}

Each paragraph below lists the architecture, optimizer, schedule, and ASAP fit used by the corresponding main-paper result. When Sinkformer and ESPFormer do not use the same shared settings, we follow the ESPFormer reproduction for the main comparison and note the mismatch locally.

\subsection{Operator-Level Benchmarks}

The main frozen-layer replacement benchmark uses trained IMDb encoders with Sinkhorn attention. For each run, we train a Sinkhorn teacher at budget $S=20$, freeze the model, and replace only the attention normalizer at evaluation time. The Sinkhorn normalizer ($S=3$) row reuses the same trained query, key, value, and output projections but runs only three Sinkhorn updates. ASAP rows reuse the same projections and substitute the fitted operator with the one-sided or two-sided $c$-transform. Table~\ref{tab:operator_main} reports layer-batch runtime, teacher-relative output RMSE, attention relative $\ell_2$ error, and final frozen-classifier accuracy over three trained teachers. Table~\ref{tab:operator_marginal_appendix} reports the corresponding row and column marginal errors, and Table~\ref{tab:operator_ablation_appendix} reports the ASAP-LS slice-count sweep on the same held-out batch-layer cases.

The synthetic runtime sweep in Figure~\ref{fig:runtime_appendix} uses matched dense self-attention inputs at sequence lengths $N \in \{500,1000,2000,4000,8000\}$, batch size $1$, model width $d=1024$, $8$ attention heads, and head dimension $64$. Runtime is averaged over $10$ timed runs after $3$ warmup runs with explicit CUDA synchronization. ESPFormer (Soft) and ESPFormer (Hard) use $32$ slices under the shared synthetic width setting. These synthetic rows are used for forward-latency context, not as teacher-agreement evidence for the trained IMDb layer, because ESPFormer constructs another expected sliced transport plan.

\subsection{Cats and Dogs}

For Cats and Dogs, we use the ESPFormer reproduction of the Sinkformer ViT setup. Images are resized to $224 \times 224$ and processed by a ViT with embedding dimension $128$, MLP dimension $128$, depth $6$, $8$ attention heads, and patch size $16$. The ESPFormer reproduction includes $10\%$, $25\%$, and full-data settings; the main table reports the full-data setting. Training uses Adam, batch size $64$, $300$ epochs, and an initial learning rate of $3 \times 10^{-5}$, reduced by a factor of $10$ after $250$ epochs. Normalization and data augmentation follow the Sinkformer protocol. The published comparison rows use three runs, matching the ESPFormer reproduction. This does not exactly match the original Sinkformer appendix, which reports a $5 \times 10^{-5}$ initial learning rate and a five-run summary. For ASAP, we train the Sinkhorn teacher with the same backbone and optimization schedule, then fit the amortized layer once before evaluation.

\subsection{Text Classification}

The text-classification experiments use the same encoder-plus-pooling backbone family as IMDb, with depth $6$, batch size $32$, Adam, $15$ epochs, and hidden width $256$ with $8$ attention heads. We use maximum sequence length $128$ on the standard DBpedia-14, AG News, and Yelp Review Full splits; the tasks have $14$, $4$, and $5$ classes, respectively. The tokenizer assets are shared with the self-contained sentiment notebooks, but all models are trained from scratch on their target dataset with no transfer from IMDb. Sinkhorn teachers use the score-kernel Sinkhorn convention with active-key masking for padded text. The main table reports mean and standard deviation over three runs under a shared best-evaluation checkpoint rule. ASAP is fit from the Sinkhorn teacher ($S=10$) on the first $4096$ training examples, using only inputs and discarding labels.

\subsection{Shallow MNIST Layer Replacement Protocol}
\label{app:mnist_patch_details}

The shallow MNIST layer-replacement experiment follows the one-layer ViT setting used in Sinkformer and ESPFormer to focus on the attention mechanism instead of deeper architectural choices. Images are processed by a single-layer self-attention model with one head and no feed-forward block. Training uses Adam, batch size $100$, and $45$ epochs. As in the reference papers, the Transformer-style baselines use an initial learning rate of $10^{-3}$, whereas the Sinkhorn-style variants use $2 \times 10^{-3}$. In both cases the learning rate is reduced by a factor of $10$ after epochs $35$ and $41$.

ASAP uses $256$ sliced-potential features, a residual log-sum-exp dual fit, and $8192$ unlabeled calibration examples. The $S=5$ teacher ends with a row normalization, so ASAP uses the row-ending two-sided transform. ASAP-0 is the fitted one-sided key-side ablation.

The reported refresh updates the ASAP coefficients, token bias, output projection, layer normalization, and classifier head for five supervised epochs at learning rate $2\times 10^{-4}$. The Sinkformer ($S=5$) continuation row uses the same number of extra epochs as a control. Table~\ref{tab:mnist_replacement} reports patch-size replacement results over three runs.

\section{Additional Ablations}
\label{app:ablations}

\subsection{Further Results for the Frozen-Layer Replacement}

The main operator table measures teacher agreement. The marginal table below checks which side of the Sinkhorn constraint each replacement preserves. For an attention matrix $A$, let $r_i(A)=\sum_j A_{ij}$ and $c_j(A)=\sum_i A_{ij}$. On padded IMDb batches, $\mathcal{J}$ denotes the active non-padded key columns. We report $\Delta_{\mathrm{row}\to 1}=N^{-1}\sum_i |r_i(A)-1|$ and $\Delta_{\mathrm{col}\to 1}=|\mathcal{J}|^{-1}\sum_{j\in\mathcal{J}} |c_j(A)-1|$. These errors depend on the last row or column normalization under key padding. Sinkhorn normalizers at $S=3$ and $S=5$ end with a row normalization, while ASAP and the Sinkhorn teacher ($S=20$) end with a column normalization.

\begin{table}[!htbp]
  \centering
  \scriptsize
  \setlength{\tabcolsep}{3pt}
  \renewcommand{\arraystretch}{0.94}
  \resizebox{\textwidth}{!}{%
  \begin{tabular}{llcc}
    \toprule
    Method & Final update & $\Delta_{\mathrm{row}\to 1}$ $\downarrow$ & $\Delta_{\mathrm{col}\to 1}$ $\downarrow$ \\
    \midrule
    ASAP-LS-0 ($L=32$) & one-sided key & $0.514 \pm 0.053$ & $(1.24 \pm 0.15)\!\times\!10^{-7}$ \\
    ASAP-KL-0 ($L=32$) & one-sided key & $0.512 \pm 0.049$ & $(1.23 \pm 0.15)\!\times\!10^{-7}$ \\
    ASAP-LS ($L=32$) & two-sided column-ending & $0.444 \pm 0.037$ & $(1.27 \pm 0.15)\!\times\!10^{-7}$ \\
    ASAP-KL ($L=32$) & two-sided column-ending & $0.444 \pm 0.037$ & $(1.27 \pm 0.15)\!\times\!10^{-7}$ \\
    Sinkhorn normalizer ($S=3$) & row-ending & $(3.33 \pm 0.23)\!\times\!10^{-7}$ & $0.833 \pm 0.140$ \\
    Sinkhorn normalizer ($S=5$) & row-ending & $(3.13 \pm 0.13)\!\times\!10^{-7}$ & $0.805 \pm 0.127$ \\
    Sinkhorn teacher ($S=20$) & column-ending & $0.441 \pm 0.038$ & $(2.70 \pm 0.23)\!\times\!10^{-7}$ \\
    \bottomrule
  \end{tabular}
  }
\caption{Marginal errors for the same $144$ frozen IMDb batch-layer cases as Table~\ref{tab:operator_main}.}
  \label{tab:operator_marginal_appendix}
\end{table}

\subsection{ASAP-LS Slice Count Ablation}

Table~\ref{tab:operator_ablation_appendix} reports a dedicated ASAP-LS slice-count sweep on the frozen IMDb Sinkhorn teacher ($S=20$) cases. The one-sided ASAP-LS-0 rows show the raw sliced-dual projection tradeoff, while the default two-sided ASAP-LS rows use the same fitted source duals and apply the column-ending two-sided $c$-transform. Increasing $L$ improves teacher agreement for both variants and adds latency through the sliced-feature computation.

\begin{table}[!htbp]
  \centering
  \scriptsize
  \setlength{\tabcolsep}{5pt}
  \begin{tabular}{lccccc}
    \toprule
    Setting & Time (ms) $\downarrow$ & Output RMSE $\downarrow$ & Attention rel.\ $\ell_2$ $\downarrow$ & Fit time (s) $\downarrow$ & Cases \\
    \midrule
    ASAP-LS-0 ($L=8,\ M=4096$) & $\mathbf{5.00}$ & $0.0278$ & $0.483$ & $103.6$ & $144$ \\
    ASAP-LS-0 ($L=16,\ M=4096$) & $5.22$ & $0.0241$ & $0.425$ & $103.8$ & $144$ \\
    ASAP-LS-0 ($L=32,\ M=4096$) & $5.67$ & $0.0201$ & $0.364$ & $104.2$ & $144$ \\
    ASAP-LS-0 ($L=64,\ M=4096$) & $6.55$ & $0.0177$ & $0.322$ & $104.9$ & $144$ \\
    ASAP-LS-0 ($L=128,\ M=4096$) & $8.66$ & $0.0158$ & $0.289$ & $106.6$ & $144$ \\
    \midrule
    ASAP-LS ($L=8,\ M=4096$) & $8.52$ & $0.0051$ & $0.066$ & $103.6$ & $144$ \\
    ASAP-LS ($L=16,\ M=4096$) & $8.75$ & $0.0040$ & $0.051$ & $103.8$ & $144$ \\
    ASAP-LS ($L=32,\ M=4096$) & $9.20$ & $0.0027$ & $0.035$ & $104.2$ & $144$ \\
    ASAP-LS ($L=64,\ M=4096$) & $10.08$ & $0.0018$ & $0.024$ & $104.9$ & $144$ \\
    ASAP-LS ($L=128,\ M=4096$) & $12.19$ & $\mathbf{0.0014}$ & $\mathbf{0.018}$ & $106.6$ & $144$ \\
    \midrule
    Sinkhorn normalizer ($S=3$) & $11.99$ & $0.0677$ & $1.527$ & -- & $144$ \\
    Sinkhorn normalizer ($S=5$) & $17.01$ & $0.0597$ & $1.430$ & -- & $144$ \\
    Sinkhorn teacher ($S=20$) & $54.65$ & -- & -- & -- & $144$ \\
    \bottomrule
  \end{tabular}
  \caption{ASAP-LS slice-count ablation on frozen IMDb Sinkhorn teacher ($S=20$) layers. Entries are means over held-out batch-layer cases from three trained teachers, with $M=4096$ fit examples. Teacher agreement is measured against the matched $S=20$ teacher, and timing comes from the dedicated slice-count sweep pass.}
  \label{tab:operator_ablation_appendix}
\end{table}

\subsection{Amortization Design Ablation}

Table~\ref{tab:amortization_design_appendix} compares design choices in the ASAP amortized layer on the frozen IMDb layers from the $S=20$ Sinkhorn teacher study. This study varies calibration size, the one-sided and two-sided $c$-transform, feature family, layer sharing, and direct dual prediction.

\begin{table}[!htbp]
  \centering
  \scriptsize
  \setlength{\tabcolsep}{3pt}
  \resizebox{\textwidth}{!}{%
  \begin{tabular}{lccccccc}
    \toprule
    Setting & Fit $M$ & Time (ms) $\downarrow$ & Output RMSE $\downarrow$ & Attention rel.\ $\ell_2$ $\downarrow$ & $\Delta_{\mathrm{row}\to 1}$ $\downarrow$ & $\Delta_{\mathrm{col}\to 1}$ $\downarrow$ & Speedup vs.\ $S=3$ \\
    \midrule
    \textbf{ASAP-0} & $512$ & $6.82 \pm 0.00$ & $0.020 \pm 0.017$ & $0.365 \pm 0.255$ & $0.514 \pm 0.053$ & $(4.12 \pm 0.09)\!\times\!10^{-7}$ & $1.76$ \\
    \textbf{ASAP-0} & $1024$ & $6.82 \pm 0.00$ & $0.020 \pm 0.017$ & $0.365 \pm 0.255$ & $0.514 \pm 0.053$ & $(4.12 \pm 0.09)\!\times\!10^{-7}$ & $1.76$ \\
    \textbf{ASAP-0} & $4096$ & $6.82 \pm 0.00$ & $0.020 \pm 0.017$ & $0.364 \pm 0.255$ & $0.514 \pm 0.053$ & $(4.12 \pm 0.09)\!\times\!10^{-7}$ & $1.76$ \\
    \textbf{ASAP} & $512$ & $10.43 \pm 0.00$ & $0.003 \pm 0.004$ & $0.035 \pm 0.070$ & $0.444 \pm 0.037$ & $(4.11 \pm 0.09)\!\times\!10^{-7}$ & $1.15$ \\
    \textbf{ASAP} & $1024$ & $10.43 \pm 0.00$ & $0.003 \pm 0.004$ & $\mathbf{0.035 \pm 0.070}$ & $0.444 \pm 0.037$ & $(4.11 \pm 0.09)\!\times\!10^{-7}$ & $1.15$ \\
    \textbf{ASAP} & $4096$ & $10.43 \pm 0.00$ & $\mathbf{0.003 \pm 0.004}$ & $0.035 \pm 0.070$ & $0.444 \pm 0.037$ & $(4.11 \pm 0.09)\!\times\!10^{-7}$ & $1.15$ \\
    \textbf{ASAP} shared coefficients & $4096$ & $10.43 \pm 0.00$ & $0.004 \pm 0.006$ & $0.051 \pm 0.094$ & $0.445 \pm 0.038$ & $(4.11 \pm 0.09)\!\times\!10^{-7}$ & $1.15$ \\
    \midrule
    Raw projections + ASAP transform & $4096$ & $9.70 \pm 0.00$ & $0.006 \pm 0.011$ & $0.077 \pm 0.167$ & $0.447 \pm 0.038$ & $(4.11 \pm 0.09)\!\times\!10^{-7}$ & $1.24$ \\
    Rank displacement + ASAP transform & $4096$ & $10.27 \pm 0.00$ & $0.006 \pm 0.012$ & $0.084 \pm 0.177$ & $0.449 \pm 0.039$ & $(4.11 \pm 0.09)\!\times\!10^{-7}$ & $1.17$ \\
    Predict both duals, no $c$-transform & $4096$ & $7.00 \pm 0.00$ & $0.297 \pm 1.191$ & $2.767 \pm 7.326$ & $0.775 \pm 0.809$ & $1.056 \pm 1.603$ & $1.71$ \\
    \midrule
    Sinkhorn normalizer ($S=3$) & -- & $11.99 \pm 0.00$ & $0.068 \pm 0.030$ & $1.527 \pm 0.319$ & $(3.33 \pm 0.23)\!\times\!10^{-7}$ & $0.833 \pm 0.140$ & $1.00$ \\
    Sinkhorn normalizer ($S=5$) & -- & $17.13 \pm 0.00$ & $0.060 \pm 0.015$ & $1.430 \pm 0.241$ & $(3.13 \pm 0.13)\!\times\!10^{-7}$ & $0.805 \pm 0.127$ & $0.70$ \\
    Sinkhorn teacher ($S=20$) & -- & $55.59 \pm 0.01$ & -- & -- & $0.441 \pm 0.038$ & $(2.70 \pm 0.23)\!\times\!10^{-7}$ & $0.22$ \\
    \bottomrule
  \end{tabular}
  }
  \caption{Frozen IMDb ablations with Sinkhorn teachers ($S=20$), reported as mean $\pm$ standard deviation over $144$ held-out batch-layer cases from three teacher seeds. ASAP-0 uses the one-sided $c$-transform, while ASAP uses the two-sided entropic $c$-transform. Feature controls replace sliced potentials before the same ASAP transform, and the direct-dual row removes the analytic $c$-transform. Timing and speedup are measured inside this design-ablation pass, and marginal errors follow Table~\ref{tab:operator_marginal_appendix}.}
  \label{tab:amortization_design_appendix}
\end{table}

\FloatBarrier
The controls support the ASAP parameterization under the same $S=20$ teacher setting used in the main frozen-layer study. ASAP-0 is the faster one-sided operator, while the default two-sided ASAP lowers output RMSE from $0.068$ for the $S=3$ normalizer to $0.003$ and remains faster than $S=3$. Shared coefficients remain usable with a small error increase, and sliced potentials outperform raw projections and rank-displacement features under the same transform. Predicting both duals is unstable without the analytic $c$-transform, with attention relative $\ell_2$ increasing to $2.767$ and active-column error increasing to $1.056$.

\end{document}